\renewcommand{\vec}{\boldsymbol}
\renewcommand{\emph}{\textit}
\newcommand{\R}{{\mathscr{R}}}
\renewcommand{\H}{{H}}
\definecolor{darkblue}{rgb}{0.0,0.0,0.5}
\newcommand{\blue}[1]{\textcolor{black}{#1}}
\newcommand{\w}{\boldsymbol{w}}
\renewcommand{\t}{\boldsymbol{t}}
\renewcommand{\u}{\boldsymbol{u}}
\newcommand{\p}{r}
\renewcommand{\k}{k}
\renewcommand{\c}{c}
\renewcommand{\H}{H}
\newcommand{\vslack}{\boldsymbol{\xi}}
\newcommand{\slack}{\xi}
\newcommand{\x}{\boldsymbol{x}}
\newcommand{\y}{\boldsymbol{y}}
\newcommand{\E}{\mathbb{E}}
\newcommand{\loss}{V}
\newcommand{\vtheta}{{\boldsymbol{\theta}}}
\newcommand{\valpha}{{\boldsymbol{\alpha}}}
\newcommand{\margin}{{\gamma}}
\newcommand{\vxi}{{\boldsymbol{\xi}}}
\newcommand{\C}{C}
\newcommand{\one}{\mathbf{1}}
\newcommand{\zero}{\mathbf{0}}
\newcommand{\norm}[1]{\left\Vert#1\right\Vert}
\newtheorem*{corollary*}{Corollary}
\DeclareMathOperator*{\argmax}{argmax\xspace}
\DeclareMathOperator*{\argmin}{argmin\xspace}
\newcommand{\refprimal}{Optimization Problem (P)\xspace}
\newcommand{\refdual}{Optimization Problem (D)\xspace}
\newcommand{\refprimalcounter}{(P)\xspace}
\begin{document} 

\title{\blue{Non-Sparse Regularization for Multiple Kernel Learning}}

\author{\name Marius Kloft\thanks{Also at Machine Learning Group, Technische Universit\"at Berlin, Franklinstr. 28/29, FR 6-9, 10587 Berlin, Germany.} 
       \email mkloft@cs.berkeley.edu \\
       \addr University of California\\
       Computer Science Division\\
       Berkeley, CA 94720-1758, USA       
       \AND
       \name Ulf Brefeld \email brefeld@yahoo-inc.com \\
       \addr Yahoo! Research\\
       Avinguda Diagonal 177\\
       08018 Barcelona, Spain
       \AND
	   \name S\"oren Sonnenburg$^*$
	   \email soeren.sonnenburg@tuebingen.mpg.de\\
	   \addr Friedrich Miescher Laboratory\\
	   Max Planck Society \\
	   Spemannstr.\ 39, 
	   72076 T\"ubingen, Germany
       \AND
       \name Alexander Zien       \email zien@lifebiosystems.com \\
       \addr LIFE Biosystems GmbH \\
       Poststra{\ss}e 34 \\
       69115 Heidelberg, Germany 
}

\date{Received: date / Accepted: date}
% The correct dates will be entered by the editor

\maketitle

\begin{abstract}% 
Learning linear combinations of multiple kernels is an appealing
strategy when the right choice of features is unknown. Previous
approaches to multiple kernel learning (MKL) promote sparse kernel
combinations to support interpretability and scalability. 
Unfortunately, this $\ell_1$-norm
MKL is rarely observed to outperform trivial baselines in practical
applications. To allow for robust kernel mixtures, we generalize MKL
to arbitrary norms. We devise new insights on the connection
between several existing MKL formulations and develop two efficient
\emph{interleaved} optimization strategies for arbitrary norms, like
$\ell_p$-norms with $p>1$. Empirically, we demonstrate that the interleaved optimization
strategies are much faster compared to the commonly used wrapper approaches.
A theoretical analysis and an experiment on controlled artificial data experiment sheds light on
the appropriateness of sparse, non-sparse and $\ell_\infty$-norm MKL in various scenarios. 
Empirical applications of $\ell_p$-norm MKL to three	 real-world problems
from computational biology show that non-sparse MKL achieves
accuracies that go beyond the state-of-the-art.
%We conclude that our improvements finally made MKL fit for deployment to practical applications: 
%MKL now has a good chance of improving the accuracy (over a plain sum kernel) at an affordable computational cost.
\end{abstract} 

\begin{keywords}
  multiple kernel learning, learning kernels, non-sparse, support vector machine,  convex conjugate, block coordinate descent, large scale optimization, bioinformatics, generalization bounds
\end{keywords}

\section{Introduction}

Kernels allow to decouple machine learning from data representations.
Finding an appropriate data representation via a kernel function immediately opens the door 
to a vast world of powerful machine learning models \citep[e.g.][]{SchSmo02}
with many efficient and reliable off-the-shelf implementations.
This has propelled the dissemination of machine learning
techniques to a wide range of diverse application domains. 

Finding an appropriate data abstraction---or even engineering {\em the best} kernel---for the problem at hand is not always trivial, though. Starting with
cross-validation \citep{Stone1974}, which is probably the
most prominent approach to general model selection,
a great many approaches to selecting the right kernel(s) 
have been deployed in the literature.

Kernel target alignment \citep{Cristianini2002,twostage} aims
at learning the entries of a kernel matrix by using
the outer product of the label vector as the ground-truth.
\cite{ChaVapBouMuk02} and \cite{BousquetHerrmann2002}
minimize estimates of the generalization error of support vector
machines (SVMs) using a gradient descent algorithm over the set of 
parameters. 
\cite{OngEtAl2005} study hyperkernels on the space of kernels
and alternative approaches include selecting kernels by
DC programming \citep{ArgyriouEtAl2008} and semi-infinite 
programming \citep{OezoeguerAkyuezWeber2008,GehlerNowozin2008}.
Although finding non-linear kernel mixtures \citep{GoeAlp08,VarBab09} generally results in 
non-convex optimization problems, \cite{CorMohRos09b} show
that convex relaxations may be obtained for special cases.

However, learning arbitrary kernel combinations is 
a problem too general to allow for a general optimal solution---by focusing on a restricted scenario, it is possible
to achieve guaranteed optimality.
In their seminal work, \cite{LanCriGhaBarJor04} consider training an SVM
along with optimizing the linear combination of
%of $s(K)$, where $s$ denotes the SVM objective function \citep{BosGuyVap92},
%among 
several positive semi-definite matrices,
$K=\sum\nolimits_{m=1}^M \theta_m K_m,$
subject to the trace constraint $\text{tr}(K) \leq c$ and requiring
a valid combined kernel $K \succeq 0$.
This spawned the new field of \emph{multiple kernel learning} (MKL),
the automatic combination of several kernel functions.
\cite{LanCriGhaBarJor04} show that their specific version of the MKL task
can be reduced to a convex optimization problem,
namely a semi-definite programming (SDP) optimization problem.
Though convex, however, the SDP approach is computationally
too expensive for practical applications.
Thus much of the subsequent research focuses on 
%devising efficient optimization procedures for learning with multiple kernels.
devising more efficient optimization procedures.
%%, which - despite being convex -
%%is harder to optimize than regular SVMs. 
%Following \cite{LanCriGhaBarJor04}, many of these approaches focus on  
%%\cite{LanCriGhaBarJor04} who show that 
%linear combinations of kernel matrices, that is 

One conceptual milestone for developing MKL into a tool of practical utility
is simply to constrain the mixing coefficients $\vtheta$
to be non-negative: by obviating the complex constraint $K \succeq 0$,
this small restriction allows one to transform the optimization problem
into a quadratically constrained program, hence drastically reducing the computational burden. 
%For this very reason, this article will focus on non-negative linear kernel mixing.
While the original MKL objective is stated and optimized in 
dual space, alternative formulations have been studied. 
For instance, \cite{BacLanJor04} found a corresponding primal problem,
%and a differentiable regularization term to apply 
%sequential minimal optimization (SMO) \citep{Pla99}
and \cite{Rub05} decomposed the MKL problem 
into a min-max problem that can be optimized 
by mirror-prox algorithms \citep{Nem04}.
The min-max formulation has been independently proposed by \cite{SonRaeSch05}. 
They use it to recast MKL training as a semi-infinite linear program.
Solving the latter with column generation \citep[e.g.,][]{NasSof96} amounts to
repeatedly training an SVM on a mixture kernel
while iteratively refining the mixture coefficients $\vtheta$.
This immediately lends itself to a convenient implementation by a wrapper approach.
These wrapper algorithms directly benefit from efficient SVM optimization routines 
\citep[cf., e.g.,][]{FanCheLin05,Joa99} and
are now commonly deployed in recent MKL solvers \citep[e.g.,][]{RakBacCanGra08,XuEtAl09}, 
thereby allowing for large-scale training \citep{SonRaeSch05,SonRaeSchSch06}.
However, the complete training of several SVMs
can still be prohibitive for large data sets.
For this reason, \cite{SonRaeSch05} also propose to interleave
the SILP with the SVM training which reduces the training time drastically.
%Such so-called \emph{interleaved optimization methods}
 Alternative optimization schemes include
 level-set methods \citep{XuEtAl09} and  second order 
approaches \citep{ChaRak08}. 
\cite{SzaGraRak10}, \cite{Nathetal09}, and \cite{Bac09}
study composite and hierarchical kernel learning approaches.
Finally, \cite{ZieOng07} and 
\cite{JiSunJinYe09} provide extensions for 
multi-class and multi-label settings, respectively.

%integrated the MKL optimization
%into the decomposition algorithm of \cite{Joa99}, 
%which equipped MKL with large-scale  optimization capabilities.

Today, there exist two major families of multiple kernel learning models.
The first is characterized by Ivanov regularization \citep{IvaVasTan02} over the mixing 
coefficients \citep{RakBacCanGra07,ZieOng07}.
%Moreover, the later authors showed that this is equivalent to previous 
%MKL formulation. 
\blue{For the Tikhonov-regularized optimization problem \citep{TikVasArs77},}
there is an additional parameter controlling
the regularization of the mixing coefficients \citep{VarRay07}.
 
All the above mentioned multiple kernel learning 
formulations promote \emph{sparse} solutions 
in terms of the mixing coefficients. The desire for sparse mixtures
originates in practical as well as theoretical reasons. First, sparse combinations
are easier to interpret. Second, irrelevant (and possibly expensive) kernels
functions do not need to be evaluated at testing time. 
Finally, sparseness appears
to be handy also from a technical point of view, as the additional simplex constraint
$\|\vtheta\|_1\leq 1$ simplifies derivations and turns the problem into a
linearly constrained program.
Nevertheless, sparseness is not always beneficial in practice and
%Recently, it has been repeatingly reported that sparse MKL does not 
%always work well in practice (e.g.,
%\cite{WSNips,WSSPinMLandStat}),
sparse MKL is frequently observed to be outperformed by a regular SVM using
an unweighted-sum kernel $K=\sum_m K_m$ \blue{\citep{WSNips}.}

Consequently, despite all the substantial progress in the field of MKL,
there still remains an unsatisfied need for an approach that is
really useful for practical applications: a model that has a good chance
of improving the accuracy (over a plain sum kernel) together with
an implementation that matches today's standards (i.e., that can be
trained on 10,000s of data points in a reasonable time).
In addition, since the field has grown several competing MKL formulations,
it seems timely to consolidate the set of models.
In this article we argue that all of this is now achievable.
%at least when considering MKL restricted to non-negative mixture coefficients.

\subsection{\blue{Outline of the Presented Achievements}}

%The presentation is structured as follows.  We derive
%non-sparse MKL in Section \ref{SEC-derivation} and discuss
%relations to existing approaches in Section \ref{SEC-applications}.
%Section \ref{SEC-optimization} introduces the novel optimization 
%strategy and its implementation. 
%We report on our empirical results in
%Section \ref{SEC-experiments}. 
%Section \ref{SEC-conclusion} concludes.

On the theoretical side, we cast multiple kernel learning as a general
regularized risk minimization problem for arbitrary convex loss 
functions, Hilbertian regularizers, and arbitrary norm-penalties on $\vtheta$. 
We first show that the above mentioned Tikhonov and 
Ivanov regularized MKL variants are equivalent in the sense
that they yield the same set of hypotheses.
Then we derive a dual representation and show that a variety of methods are special
cases of our objective.
Our optimization problem subsumes state-of-the-art approaches 
to multiple kernel learning, covering sparse and non-sparse MKL 
by arbitrary $p$-norm regularization ($1\leq p \leq \infty$) on 
the mixing coefficients as well as the incorporation of prior knowledge 
by allowing for non-isotropic regularizers.
As we demonstrate, the $p$-norm regularization includes both
important special cases (sparse $1$-norm and plain sum $\infty$-norm)
and offers the potential to elevate predictive accuracy over both of them.

With regard to the implementation, we introduce an appealing and efficient
optimization strategy which grounds on an exact
update in closed-form in the $\vtheta$-step; hence rendering expensive  
semi-infinite and first- or second-order gradient methods unnecessary.
By utilizing proven working set optimization for SVMs,
$p$-norm MKL can now be trained highly efficiently for all $p$;
in particular, we outpace other current $1$-norm MKL implementations.
Moreover our implementation employs kernel caching techniques,
which enables training on ten thousands of data points or thousands of
kernels respectively.
In contrast, most competing MKL software require all kernel matrices
to be stored completely in memory, which restricts these methods to small data sets
with  limited numbers of kernels.
Our implementation is freely available within the SHOGUN machine learning toolbox available at \url{http://www.shogun-toolbox.org/}.
%\cite{shogun}.

%For instance, instead of expensive gradient-based 
%optimization (e.g., \citet{RakBacCanGra08}) and linear/quadratic programming
%(e,g., \cite{SonRaeSchSch06}),  we obtain a closed-form solution
%for the $\vtheta$-step for arbitrary norms $1\leq p \leq \infty$.

%We present empirical results [...].
%\comment{Alex, hier bitte einfuegen!}
Our claims are backed up by  experiments on artificial data and on a couple of real world data 
sets representing diverse, relevant and
challenging problems from the application domain bioinformatics.  Experiments on 
artificial data enable us to investigate the relationship between
properties of the true solution and the optimal choice of kernel
mixture regularization.  The real world problems include the
prediction of the subcellular localization of proteins, 
the (transcription) starts of genes, and the function of enzymes.
The results demonstrate (i) that combining kernels is now tractable on
large data sets, (ii) that it can provide cutting edge classification
accuracy, and (iii) that depending on the task at hand, 
different kernel mixture regularizations are
required for achieving optimal performance.
%
%\blue{The latter claim is also supported by the theoretical analysis in Appendix~\ref{sect:bounds}, where we
%present a $\ell_1$-to-$\ell_p$ conversion technique, generalization bounds for MKL, and a case study comparing a particular sparse with a non-sparse scenario.}

\blue{
In Appendix~\ref{sect:bounds} we present a first theoretical analysis of non-sparse MKL.
We introduce a novel $\ell_1$-to-$\ell_p$ conversion technique
and use it to derive generalization bounds.
Based on these, we perform a case study to compare a particular sparse with a non-sparse scenario.
}

%\paragraph{Related Work}
\blue{A basic version of this work appeared in
NIPS 2009 \citep{KloBreSonZieLasMue09}. The present article additionally offers 
a more general and complete derivation of the
main optimization problem, exemplary applications thereof,
a simple algorithm based on a closed-form solution, 
technical details of the implementation,
a theoretical analysis, and additional experimental results. Parts of Appendix~\ref{sect:bounds} are based on \cite{KloRueBar10}  the present analysis however extends the previous publication by a novel conversion technique, an illustrative case study, and an improved presentation.}

%Non-sparse $\ell_2$-norm regularized multiple 
%kernel learning was derived and empirically studied by  
%\cite{KloBreLasSon08} for classification and  \cite{CorMohRos09a} for regression and subsequently extended to
%$\ell_p$-norm penalties in \cite{KloBreSonZieLasMue09}.
\blue{Since its initial publication in \cite{KloBreLasSon08}, \cite{CorMohRos09a}, and \cite{KloBreSonZieLasMue09}, non-sparse MKL has been subsequently applied,  extended, and further analyzed by several researchers:
\cite{VarBab09} derive a projected gradient-based optimization method for $\ell_2$-norm MKL.
\cite{Yuetal}  present a more general dual view of $\ell_2$-norm MKL and show advantages of $\ell_2$-norm over an unweighted-sum kernel SVM on six bioinformatics data sets.
\cite{CorMohRos10} provide generalization bounds for $\ell_1$- and $\ell_{p\leq 2}$-norm MKL.
The analytical optimization method presented in this paper
was independently and in parallel discovered by \cite{Xuetal10} and has
also been studied in \cite{RothFischer} and \cite{YinCamDamGir09} for $\ell_1$-norm MKL,
and in \cite{SzaGraRak10} and \cite{Nathetal09} for composite kernel learning on small and medium scales.}
%Our work goes beyond those papers in that we provide a proof of convergence, allow for more general loss functions, and 
%cast the method in a large-scale framework. Our work goes beyond those papers in that we provide a proof of convergence, 
%allow for general loss functions, and cast the method in a large-scale framework.}

The remainder is structured as follows.  We derive
non-sparse MKL in Section \ref{SEC-derivation} and discuss
relations to existing approaches in Section \ref{SEC-applications}.
Section \ref{SEC-optimization} introduces the novel optimization 
strategy and its implementation. 
We report on our empirical results in
Section \ref{SEC-experiments}. 
Section \ref{SEC-conclusion} concludes.

%------------------------------------------------------------------------------
% TECHNICAL PART
%------------------------------------------------------------------------------

\section{\blue{Multiple Kernel Learning -- A Regularization View}\label{SEC-derivation}}

In this section we cast multiple kernel learning into a unified
framework: we present a regularized loss minimization formulation with
additional norm constraints on the kernel mixing coefficients.  We
show that it comprises many popular MKL variants currently discussed in
the literature, including seemingly different ones.
% considering arbitrary norm regularizers and convex loss functions. 

We derive generalized dual optimization problems without making
specific assumptions on the norm regularizers or the loss function,
beside that the latter is convex. Our formulation covers binary
classification and regression tasks and can easily be extended to
multi-class classification and structural learning settings using
appropriate convex loss functions and joint kernel extensions.  Prior
knowledge on kernel mixtures and kernel asymmetries can be
incorporated by non-isotropic norm regularizers.
%of the form $\Vert\vtheta\Vert_P=\sqrt{\vtheta^\top P\vtheta}$.

%The choice of non-isotropic norm regularizers and convex loss
%functions allows for incorporation of prior knowledge about kernel
%mixtures and kernel asymmetries.

%\comment{Marius: Ich glaube, ich koennte die ganze Schose auch fuer Joint Kernel Extensions %hinschreiben. Aber dann wird es etwas un?bersichtlicher werden}

%------------------------------------------------------------------------------
\subsection{Preliminaries}

We begin with reviewing the classical supervised learning setup. Given
a labeled sample $\mathcal{D} = \{(\x_i,y_i)\}_{i=1\ldots,n}$, where
the $\x_i$ lie in some input space $\mathcal{X}$ and $y_i\in\mathcal
Y\subset\mathbb R$, the goal is to find a hypothesis $h \in
\H$, % $\mathcal {H} = [\mathcal X \rightarrow \mathbb R]$,
that generalizes well on new and unseen data. Regularized risk
minimization returns a minimizer $h^*$,
\begin{equation}\label{rrm}
h^* \in \argmin\nolimits_h \,  \text{R}_{\text{emp}}(h) +\lambda\Omega(h)  ,\nonumber
\end{equation}
where $\text{R}_{\text{emp}}(h)=\frac{1}{n}\sum_{i=1}^n
\loss\left(h(\x_i),y_i\right)$ is the empirical risk of hypothesis $h$
w.r.t.~a convex loss function $\loss:\mathbb R\times\mathcal
Y\rightarrow\mathbb R$, $\Omega:\H \rightarrow\mathbb R$ is
a regularizer, and $\lambda> 0$ is a trade-off parameter.
We consider linear models of the form
\begin{align}
h_{\tilde{\w},b}(\x) = \langle\tilde\w,\psi(\x)\rangle+b \label{model},
\end{align}
together with a (possibly non-linear) mapping $\psi:\mathcal{X}\rightarrow\mathcal{H}$
to a Hilbert space $\mathcal{H}$ \citep[e.g.,][]{SchSmoMue98,MueMikRaeTsuSch01} and constrain the regularization to be of the form
$\Omega(h)=\frac{1}{2}\Vert\tilde\w\Vert_2^2$
which allows to kernelize the resulting models and algorithms.
%We consider linear models of the form
%\begin{align}
%f_{\tilde{\w},b}(\x) = \langle\tilde\w,\psi(\x)\rangle+b \label{model},
%\end{align}
%together with a (possibly non-linear) mapping $\psi:\mathcal{X}\rightarrow\mathcal{H}$
%to a Hilbert space $\mathcal{H}$ \citep[e.g.,][]{SchSmoMue98,MueMikRaeTsuSch01}. 
We will later make use of kernel 
functions $\k(\x,\x')=\langle \psi(\x),\psi(\x') \rangle_{\mathcal{H}}$ to compute inner 
products in $\mathcal{H}$. 

%------------------------------------------------------------------------------
\subsection{Regularized Risk Minimization with Multiple Kernels}

When learning with multiple kernels, we are given $M$ different
feature mappings $\psi_m:\mathcal{X}\rightarrow\mathcal{H}_m, ~ m=1,\ldots M$, each giving rise
to a reproducing kernel $\k_m$ of $\mathcal{H}_m$. Convex approaches to multiple kernel learning consider linear kernel mixtures $\k_\vtheta=\sum \theta_m \k_m$, $\theta_m\geq0$. 
Compared to Eq.~\eqref{model}, the primal model for learning with multiple kernels is extended to
\begin{align}\label{eq:mkl_model}
h_ {\tilde\w,b,\vtheta}(\x)\,=\,\sum_{m=1}^M \sqrt{\theta_m}\langle\tilde\w_m,\psi_m(\x)\rangle_{\mathcal{H}_m}+b\,=\,\langle\tilde\w,\psi_{\vtheta}(\x)\rangle_{\mathcal{H}}+b
\end{align}
where the parameter vector $\tilde\w$ and the composite feature map $\psi_{\vtheta}$ have
a block structure $\tilde\w=(\tilde\w_1^\top,\ldots,\tilde\w^\top_M)^\top$ and
$\psi_{\vtheta}=\sqrt{\theta_1}\psi_1\times\ldots\times\sqrt{\theta_M}\psi_M$, respectively. 

In learning with multiple kernels we aim at
minimizing the loss on the training data w.r.t.\ the optimal kernel mixture $\sum_{m=1}^M \theta_m\k_m$ in addition to regularizing $\vtheta$ to avoid overfitting.
Hence, in terms of regularized risk minimization, the optimization problem becomes
\begin{align} \inf_{\tilde\w, b, \vtheta: \vtheta \geq{\bf 0}} \quad  \frac{1}{n}\sum_{i=1}^n \loss\left(\sum_{m=1}^M \sqrt{\theta_m}\langle\tilde\w_m,\psi_m(\x_i)\rangle_{\mathcal{H}_m}+b,~y_i\right)+ \frac{\lambda}{2}\sum_{m=1}^M\Vert\tilde{w}_m\Vert_{\mathcal{H}_m}^2 + \tilde\mu\tilde\Omega[\vtheta],\label{333}
\end{align}
for $\tilde\mu> 0$. Note that the objective value of Eq. \eqref{333} is an upper
bound on the training error.
Previous approaches to multiple kernel learning employ regularizers of
the form $\tilde\Omega(\vtheta)=\Vert\vtheta\Vert_1$ to promote sparse
kernel mixtures. By contrast, we propose to use convex
regularizers of the form $\tilde\Omega(\vtheta)=\Vert\vtheta\Vert^2$, where 
$\Vert\cdot\Vert^2$ is an arbitrary norm in $\mathbb R^M$, possibly
allowing for non-sparse solutions and the incorporation of prior knowledge. The
non-convexity arising from the $\sqrt{\theta_m}\tilde\w_m$ product in the loss
term of Eq. \eqref{333}  is not inherent and can be resolved by substituting $\w_m\leftarrow
\sqrt{\theta_m}\tilde\w_m$. 
Furthermore, the regularization parameter and the sample
size can be decoupled by introducing $\tilde{C}=\frac{1}{n\lambda}$
(and adjusting $\mu\leftarrow \frac{\tilde\mu}{\lambda}$) which has
favorable scaling properties in practice. We obtain the following
convex optimization problem \citep{BoyVan04} 
%\footnote{Note that the objective function is convex as it is quadratic-over-linear. Cf., e.g., Section 3.1.7 in \cite{BoyVan04}} 
that has also been considered by \citep{VarRay07} for hinge loss and an $\ell_1$-norm regularizer
\begin{equation}\label{eq:gen_varma}
  \inf_{\w, b,\vtheta:\vtheta\geq{\bf 0}} \quad  \tilde{C}\sum_{i=1}^n \loss\left(\sum_{m=1}^M \langle \w_m,\psi_m(\x_i)\rangle _{\mathcal{H}_m}+b,~y_i\right)+ \frac{1}{2}\sum_{m=1}^M\frac{\Vert\w_m\Vert_{\mathcal{H}_m}^2}{\theta_m} + \mu\Vert\vtheta\Vert^2,
\end{equation}
where we use the convention that $\frac{t}{0}=0$ if $t=0$ and $\infty$ otherwise. 

An alternative approach has been studied by
\cite{RakBacCanGra07} and \cite{ZieOng07}, again using hinge loss and
$\ell_1$-norm. They upper bound the value of the regularizer
$\Vert\vtheta\Vert_1\leq 1$ and incorporate the latter as an additional
constraint into the optimization problem. For $C> 0$, they arrive at the following problem which is the primary object of investigation in this paper.
\\
\\
\textbf{Primal MKL Optimization Problem}
\newcounter{storedequation} 
\let\storedtheequation=\theequation 
\setcounter{storedequation}{\value{equation}} 
\setcounter{equation}{0} 
\renewcommand{\theequation}{P}
\begin{align}%\label{eq:gen_rako_zien}
     \inf_{\w, b,\vtheta:\vtheta\geq {\bf 0}}  \quad  & C\sum_{i=1}^n \loss\Bigl(\sum_{m=1}^M \langle \w_m,\psi_m(\x_i)\rangle _{\mathcal{H}_m}+b,~y_i\Bigr)+ \frac{1}{2}\sum_{m=1}^M\frac{\Vert\w_m\Vert_{\mathcal{H}_m}^2}{\theta_m} \\
      \text{s.t.} \qquad  & \Vert\vtheta\Vert^2 \leq 1 .   \nonumber
\end{align}
\setcounter{equation}{\value{storedequation}}  
\renewcommand{\theequation}{\storedtheequation}

\blue{It is important to note here that, while the Ivanov regularization in \eqref{eq:gen_varma} has \emph{two} regularization parameters ($C$ and $\mu$), the above Tikhonov regularization \refprimalcounter  has only \emph{one}  ($C$ only).
Our first contribution shows that, despite the additional regularization parameter, both MKL variants are equivalent, in the sense that traversing the regularization paths yields the same binary classification functions.}

\begin{theorem}\label{th:reg-obj}
  Let $\Vert\cdot\Vert$ be a norm on $\mathbb R^M$, be $\loss$ a convex loss function. Suppose for the optimal $\w^*$ in \refprimal it holds $\w^*\neq \zero$. Then, for each pair $(\tilde{C},\mu)$ there exists $C> 0$ %, and vice versa, 
  such that for each optimal solution ($\w,b,\theta$) of Eq.~\eqref{eq:gen_varma} using $(\tilde C,\mu)$, we have that $(\w,b,\kappa\,\vtheta)$ is also an optimal solution of \refprimal using ${C}$, and vice versa, where $\kappa> 0$ is a multiplicative constant.
%, i.e., every optimal solution of one is also an optimal solution of the other, and vice versa.
\end{theorem}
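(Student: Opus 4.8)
The plan is to exploit the fact that both optimization problems differ only in how they treat the $\vtheta$-regularization, and that along the regularization path the two penalties $\mu\|\vtheta\|^2$ (Tikhonov) and the constraint $\|\vtheta\|^2\le 1$ (Ivanov) can be made to coincide after rescaling $\vtheta$. First I would fix a pair $(\tilde C,\mu)$ and let $(\w,b,\vtheta)$ be an optimal solution of \eqref{eq:gen_varma}. Since $\w^*\neq\zero$ by assumption, at optimality we must have $\vtheta\neq\zero$ (otherwise the middle term is $+\infty$), so $\|\vtheta\|>0$ and the rescaling $\kappa:=1/\|\vtheta\|$ is well-defined; set $\vtheta':=\kappa\vtheta$, which satisfies $\|\vtheta'\|=1$, hence $\|\vtheta'\|^2\le 1$, so it is feasible for \refprimal.

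The key step is a \emph{homogeneity/rescaling argument} on the $\w$ and $\vtheta$ blocks jointly. Observe that the middle term $\frac12\sum_m \|\w_m\|^2/\theta_m$ is invariant under the simultaneous scaling $(\w,\vtheta)\mapsto(\sqrt{s}\,\w,\,s\,\vtheta)$ only in a limited sense; the cleaner route is to first note that for \emph{fixed} $\w,b$, the inner minimization over $\vtheta\ge\zero$ in both problems is a constrained/penalized minimization of $\sum_m \|\w_m\|^2/\theta_m$ over a norm ball, which is solved at the boundary (by monotonicity of each summand in $\theta_m$) — so in \refprimal the constraint $\|\vtheta\|^2\le 1$ is active, $\|\vtheta\|=1$, and in \eqref{eq:gen_varma} the optimal $\vtheta$ has some norm $\rho(\w)=\|\vtheta^*(\w)\|$ depending on $\mu$ and $\w$ only through scale. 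Then I would argue that the optimal $\vtheta$ for the Ivanov problem at a given $\w$ is just a positive multiple of the optimal $\vtheta$ for the Tikhonov problem at the same $\w$ — they solve the same geometric problem (minimize $\sum_m c_m/\theta_m$ over a sphere of some radius), differing only in radius — and crucially the radius in the Tikhonov case depends on $\w$ only through an overall scalar. Substituting the optimal $\vtheta$ back, the objective of \eqref{eq:gen_varma} becomes $\tilde C\sum_i V(\cdots)+ A\,(\sum_m\|\w_m\|)^{?}$-type expression (a power of a norm of the block norms of $\w$) plus a constant, and similarly \refprimal reduces to $C\sum_i V(\cdots)+\frac12(\sum_m\|\w_m\|)^{?}$; matching the two reduced problems in the variable $(\w,b)$ then amounts to choosing $C$ so that the ratio of the loss weight to the regularizer weight agrees, i.e. $C=\tilde C/(2A)$ for the appropriate constant $A=A(\mu,\|\cdot\|)$ coming from the plugged-in $\vtheta$. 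This gives the existence of the claimed $C>0$, and since the reduced problems then have identical minimizers in $(\w,b)$, the correspondence $(\w,b,\vtheta)\leftrightarrow(\w,b,\kappa\vtheta)$ is a bijection between the two solution sets; the "vice versa" direction follows by reversing the same computation.

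The main obstacle I anticipate is making the reduction-over-$\vtheta$ rigorous and tracking the scaling constant carefully: one must show that minimizing $\frac12\sum_m\|\w_m\|^2/\theta_m+\mu\|\vtheta\|^2$ over $\vtheta\ge\zero$ yields an optimal $\vtheta^*(\w)$ whose direction is independent of $\mu$ and whose length scales linearly with $\|(\|\w_1\|,\dots,\|\w_M\|)\|_{\text{(some norm)}}$, so that after substitution the $\mu$-dependence collapses into a single multiplicative constant $A(\mu)$ in front of a $\w$-term that has \emph{exactly the same functional form} as the corresponding reduced term of \refprimal. The degenerate case $\w^*=\zero$ is genuinely excluded (there the rescaling $\kappa$ is meaningless and the equivalence can break), which is why the hypothesis $\w^*\neq\zero$ appears; I would remark on this explicitly. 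A secondary technical point is ensuring the infima are attained (or handling them as infima) so that "optimal solution" statements are well-posed — this follows from convexity, coercivity in $(\w,b)$ once $\vtheta$ is on a compact set, and continuity of $V$, which I would invoke from the standing convexity assumption on $\loss$.
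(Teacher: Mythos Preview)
Your reduction-over-$\vtheta$ strategy contains a concrete error at the step you yourself flag as the ``main obstacle'': after eliminating $\vtheta$, the two reduced objectives do \emph{not} have the same functional form in $\w$, so a simple matching of constants cannot work. Writing $\vtheta=t\hat\vtheta$ with $\|\hat\vtheta\|=1$ and $S(\w):=\min_{\|\hat\vtheta\|=1,\hat\vtheta\geq\zero}\sum_m\|\w_m\|^2/\hat\theta_m$, the inner Ivanov problem gives the value $\tfrac12 S(\w)$ (at $t=1$), whereas the inner Tikhonov problem $\min_{t>0}\tfrac{1}{2t}S(\w)+\mu t^2$ is solved at $t^3=S(\w)/(4\mu)$ and gives a value proportional to $\mu^{1/3}S(\w)^{2/3}$. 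So the reduced problems are
\[
C\sum_i V(\cdot)+\tfrac12 S(\w)\qquad\text{versus}\qquad \tilde C\sum_i V(\cdot)+A(\mu)\,S(\w)^{2/3},
\]
with regularizers $S$ and $S^{2/3}$ respectively. In particular your claim that the optimal Tikhonov $\vtheta^*(\w)$ has length scaling \emph{linearly} in a norm of $(\|\w_1\|,\dots,\|\w_M\|)$ is false: the optimal scale is $t\propto S(\w)^{1/3}$, i.e.\ it scales like $\|\w\|^{2/3}$.

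Your route can be salvaged, but only by invoking the very Tikhonov--Ivanov equivalence you were trying to bypass: since $s\mapsto s^{2/3}$ is strictly increasing, the two reduced problems share the same Pareto set in $(L,S)$, and an Ivanov/Tikhonov correspondence on the reduced level yields the matching $C$. The paper takes the more direct path: it applies a general Tikhonov--Ivanov lemma (their Proposition in the appendix) \emph{directly to the $\vtheta$ block} of the original problem, using $\w^*\neq\zero$ only to verify that the constraint $\|\vtheta\|^2\leq\tau$ is active so the lemma applies; this turns \eqref{eq:gen_varma} into \refprimal with constraint $\|\vtheta\|^2\leq\tau$, and a one-line rescaling $(\tilde C,\tau)\mapsto(\tilde C\tau^{1/2},1)$, $\vtheta\mapsto\tau^{-1/2}\vtheta$ finishes. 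Your use of $\w^*\neq\zero$ to ensure $\|\vtheta\|>0$ and feasibility is fine, but the core argument needs to be replaced by this direct Tikhonov--Ivanov step rather than a constant-matching after elimination.
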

%\textbf{Proof.} The proof is shown in Appendix~\ref{APP-proofs}. \\

For the proof we need Prop.~\ref{prop:pareto}, which justifies switching from Ivanov to Tikhonov regularization, and back, 
if the regularizer is tight. We refer to Appendix~\ref{APP-proofs} for the proposition and its proof. \\

\begin{proof}\textbf{of Theorem~\ref{th:reg-obj}}
Let be $(\tilde{C},\mu)> 0$. In order to apply Prop.~\ref{prop:pareto} to \eqref{eq:gen_varma}, we show that 
condition \eqref{eq:constr_active} in Prop.~\ref{prop:pareto} is satisfied, i.e., that the regularizer is tight.

Suppose on the contrary, that \refprimal yields the same infimum regardless of whether we require
\begin{equation}\label{eq_aux}
  \Vert\vtheta\Vert^2 \leq 1 ,
\end{equation}
or not.
Then this implies that in the optimal point we have $\sum_{m=1}^M\frac{\Vert\w^*_m\Vert_2^2}{\theta^*_m} = 0$, hence,  
\begin{equation}\label{eq:contradiction}
\frac{\Vert\w^*_m\Vert_2^2}{\theta^*_m}=0, ~ ~ \forall ~ m=1,\ldots,M.
\end{equation}
Since all norms on $\mathbb R^M$ are equivalent \cite[e.g.,][]{Rud91}, there exists a $L<\infty$ such that $\Vert\vtheta^*\Vert_\infty \leq L \Vert\vtheta^*\Vert$. 
In particular, we have $\Vert\vtheta^*\Vert_\infty<\infty$, from which we conclude by \eqref{eq:contradiction}, that $\w_m=0$ holds for all $m$, which contradicts our assumption.
%, therefore condition \eqref{eq:constr_active} is satisfied.

Hence, Prop.~\ref{prop:pareto} can be applied,\footnote{Note that after a coordinate transformation, we can assume that $\mathcal H$ is finite dimensional \cite[see][]{SchMikBurKniMueRaeSmo99}.} which yields that \eqref{eq:gen_varma} is equivalent to 
\begin{align*}
  \inf_{\w, b,\vtheta}   \quad & \tilde{C}\sum_{i=1}^n V\Bigl(\sum_{m=1}^M \langle\w_m,\psi_m(\x)\rangle+b,~y_i\Bigr)+ \frac{1}{2}\sum_{m=1}^M\frac{\Vert\w_m\Vert_2^2}{\theta_m} \\
  \text{s.t.} \quad &  \Vert\vtheta\Vert^2 \leq \tau \nonumber , 
\end{align*}
for some $\tau> 0$.  Consider the optimal solution
$(\w^\star,b^\star,\vtheta^\star)$ corresponding to a given
parametrization $(\tilde{C},\tau)$.  For any $\lambda> 0$, the
bijective transformation $(\tilde{C},\tau) \mapsto
(\lambda^{-1/2}\tilde{C},\lambda\tau)$ will yield
$(\w^\star,b^\star,\lambda^{1/2}\vtheta^\star)$ as optimal solution.
%Hence, the optimal solutions are equivalent and represent 
%the same classification function. 
 Applying the transformation 
with $\lambda := 1/\tau$ and
setting $C=\tilde{C}\tau^{\frac{1}{2}}$ as well as $\kappa=\tau^{-1/2}$ yields
\refprimal, which was to be shown.
\end{proof}

\cite{ZieOng07} also show that the MKL optimization problems by 
\cite{BacLanJor04}, \cite{SonRaeSchSch06}, and
their own formulation are equivalent. As a main implication of Theorem
\ref{th:reg-obj} and by using the result of Zien and Ong it follows that the optimization
problem of Varma and Ray \citep{VarRay07} lies in the same equivalence class as
\citep{BacLanJor04,SonRaeSchSch06,RakBacCanGra07,ZieOng07}.
In addition, our result shows the coupling between trade-off parameter $C$ and the
regularization parameter $\mu$ in Eq.~\eqref{eq:gen_varma}: tweaking
one also changes the other and vice versa.
%For instance,  large values of $\mu$ imply small soft margin parameters $C$.
Theorem \ref{th:reg-obj} implies that optimizing $C$ in
\refprimal implicitly searches the
regularization path for the parameter $\mu$ of
Eq.~\eqref{eq:gen_varma}.  In the remainder, we will therefore focus
on the formulation in \refprimal, as a single
parameter is preferable in terms of model selection.  
%Furthermore,
%we will focus on binary classification problems with $\mathcal
%Y=\{-1,+1\}$, equipped with the hinge loss $\loss(f(\x),y) =
%\max\{0,1-yf(\x)\}$. However note, that all our results can easily be
%transferred to regression and multi-class settings using appropriate
%convex loss functions and joint kernel extensions.

%------------------------------------------------------------------------------
\subsection{MKL in Dual Space}\label{sec:dual}

In this section we study the generalized MKL approach of the previous section in the dual space.
%. By using Fenchel-Legendre conjugate functions, we derive the dual without making a specific choice of a loss function or a norm regularizer.
Let us begin with rewriting \refprimal by expanding the decision values into slack variables as follows
\begin{align}\label{RakoZienConvex}
     \inf_{\w, b,\t,\vtheta:\vtheta\geq{\bf 0}}  & \quad  \C\sum_{i=1}^n \loss\left(t_i,~y_i\right)+ \frac{1}{2}\sum_{m=1}^M\frac{\Vert\w_m\Vert_{\mathcal{H}_m}^2}{\theta_m} \\
     \text{s.t.} \quad ~ & \quad \forall i: ~  \sum_{m=1}^M \langle \w_m,\psi_m(\x_i)\rangle _{\mathcal{H}_m}+b =t_i ~ ; \quad \Vert\vtheta\Vert^2 \leq 1, \nonumber
\end{align}
where $\Vert\cdot\Vert$ is an arbitrary norm in $\mathbb R^m$ and $\Vert\cdot\Vert_{\mathcal{H}_M}$ denotes the Hilbertian norm of $\mathcal{H}_m$.
Applying Lagrange's theorem re-incorporates the constraints into the
objective by introducing Lagrangian multipliers
$\valpha\in\mathbb R^n$ and $\beta\in\mathbb R_+$. \footnote{Note that, in contrast to the standard SVM dual deriviations, here $\valpha$ is a variable that ranges over all of $\mathbb R^n$, as it is incorporates an equality constraint.}
The Lagrangian saddle point problem is then given by
\begin{align}\label{lagr}
\sup_{\valpha,\beta:\beta\geq 0} ~ \inf_{\w,b,\t,\vtheta\geq 0} \quad & \C\sum_{i=1}^n \loss\left(t_i,~y_i\right)+ \frac{1}{2}\sum_{m=1}^M\frac{\Vert\w_m\Vert_{\mathcal{H}_m}^2}{\theta_m} \\
& -\sum_{i=1}^n\alpha_i\left(\sum_{m=1}^M \langle \w_m,\psi_m(\x_i)\rangle _{\mathcal{H}_m}+b -t_i\right) + \beta\left(\frac{1}{2}\Vert\vtheta\Vert^2 -\frac{1}{2}\right) \nonumber .
\end{align}
Denoting the Lagrangian by $\mathcal L$ and setting its first partial derivatives with respect to 
$\w$ and $b$ to $0$ reveals the optimality conditions 
\begin{align}
  \refstepcounter{equation}
   &\one^\top\valpha=0 \label{eq:opt_v.a} \tag{\theequation a}; \\
   &\vec w_m = \theta_m\sum_{i=1}^n \alpha_i \psi_m(\vec x_i), ~~ \forall\hspace{1mm} m=1,\ldots, M.   \label{eq:opt_v.b} \tag{\theequation b} 
\end{align}
Resubstituting the above equations yields
\begin{eqnarray*}
  \sup_{\valpha:~\one^\top\valpha=0, ~ \beta:\beta\geq 0} ~ \inf_{\t,\vtheta\geq 0}\quad \C\sum_{i=1}^n \left(\loss\left(t_i,~y_i\right)+\alpha_i t_i\right) 
  -\frac{1}{2}\sum_{m=1}^M \theta_m\valpha^\top K_m \valpha + \beta\left(\frac{1}{2}\Vert\vtheta\Vert^2 -\frac{1}{2}\right) \nonumber ,
\end{eqnarray*}
which can also be written in terms of  unconstrained $\vtheta$, because
the supremum with respect to $\vtheta$ is  attained for non-negative $\vtheta\geq 0$. We arrive at
\begin{eqnarray*}
    \sup_{\valpha:~\one^\top\valpha=0,~\beta\geq 0}  -\C\sum_{i=1}^n \sup_{t_i}\left(-\frac{\alpha_i}{\C}t_i-\loss\left(t_i,~y_i\right)\right)
  -\beta\sup_{\vtheta}\left(\frac{1}{2\beta}\sum_{m=1}^M \theta_m\valpha^\top K_m \valpha -\frac{1}{2}\Vert\vtheta\Vert^2\right) -\frac{1}{2}\beta \nonumber .
\end{eqnarray*}
As a consequence, we now may express the Lagrangian as\footnote{We employ the notation $s=(s_1,\ldots,s_M)^\top=(s_m)_{m=1}^M$ for $s\in \mathbb R^M$.}
\begin{eqnarray}\label{lagr_beta}
  \sup_{\valpha:~\one^\top\valpha=0,~\beta\geq 0} ~ -\C\sum_{i=1}^n \loss^*\left(-\frac{\alpha_i}{\C},~y_i\right)
  -\frac{1}{2\beta}\left\Vert\frac{1}{2} \left(\valpha^\top K_m \valpha\right)_{m=1}^M \right\Vert^2_* -\frac{1}{2}\beta ,
\end{eqnarray}
where $h^*(\x)=\sup_{\u} \x^\top\u -h(\u)$ denotes the Fenchel-Legendre
conjugate of a function $h$ and $\Vert\cdot\Vert_*$ denotes the \emph{dual norm},
i.e., the norm defined via the identity $\frac{1}{2}\Vert\cdot\Vert^2_*:=\left(\frac{1}{2}\Vert\cdot\Vert^2\right)^* $. 
In the following, we call $\loss^*$ the \emph{dual loss}.
Eq.~\eqref{lagr_beta} now has to be maximized with respect to the dual variables $\valpha,\beta$, subject to $\one^\top\valpha=0$ and $\beta\geq0$.
Let us ignore for a moment the non-negativity constraint on $\beta$ and solve $\partial\mathcal{L}/\partial{\beta}=0$ for the unbounded 
$\beta$. Setting the partial derivative to zero allows to express the optimal $\beta$ as
\begin{align}\label{eq:beta_opt}
\beta=\left\Vert\frac{1}{2} \left(\valpha^\top K_m\valpha\right)_{m=1}^M\right\Vert_*.
\end{align}
Obviously, at optimality, we always have $\beta\geq 0$. We thus discard the corresponding constraint from the optimization problem and plugging Eq. \eqref{eq:beta_opt} into  Eq.~\eqref{lagr_beta} results in the following \emph{dual} optimization problem which now solely depends on $\valpha$:
\\
\\
%\begin{op}[dual] \label{full_dual}
\textbf{Dual MKL Optimization Problem} \quad
%Given kernel matrices $K_1,\ldots,K_m$,
\newcounter{dualequation} 
\let\dualtheequation=\theequation 
\setcounter{dualequation}{\value{equation}} 
\setcounter{equation}{0} 
\renewcommand{\theequation}{D}
\begin{align} 
  \hspace{-0.5cm}
  \sup_{\valpha:~\one^\top\valpha=0} ~ -\C\sum_{i=1}^n \loss^*\left(-\frac{\alpha_i}{\C},~y_i\right)
  -\frac{1}{2}\left\Vert \left(\valpha^\top K_m \valpha\right)_{m=1}^M \right\Vert_* .
\end{align}
\setcounter{equation}{\value{dualequation}}  
\renewcommand{\theequation}{\dualtheequation}

The above dual generalizes multiple kernel learning to arbitrary convex loss functions and norms.\footnote{We can even employ non-convex losses and still the dual will be a convex problem; however, it might suffer from a duality gap.}
Note that if the loss function is continuous (e.g., hinge loss), the supremum is also a maximum. The threshold $b$ can be recovered from the solution by applying the KKT conditions.
%$$\alpha_i\left(y_i\left(\sum_{m=1}^M \vec v_m^\top\psi_m(\vec x_i) +b\right)\right)= 0,  \quad \xi_i(C-\alpha_i)=0, \quad 1,\ldots n .$$

The above dual can be characterized as follows. We start by
noting that the expression in \refdual is a composition of
two terms, first, the left hand side term, which depends on the
conjugate loss function $\loss^*$, and, second, the right hand side
term which depends on the conjugate norm.  The right hand side can be
interpreted as a regularizer on the quadratic terms that, according to the chosen norm, smoothens the solutions.
% w.r.t.~the quadratic terms kernel
%informativity.
%To see
%this, note that by Eq.~\eqref{eq:opt_v.b} the latter term can be
%rewritten as
%$\left\Vert\left(\Vertw_1\Vert_2^2/\theta_1^2,\ldots,\Vertw_M\Vert_2^2/\theta^2_M\right)^\top\right\Vert_*^2$. 
Hence we have a decomposition of the dual into a loss term
(in terms of the dual loss) and a regularizer (in terms of the dual
norm). For a specific choice of a pair $(\loss,\Vert\cdot\Vert)$ we can
immediately recover the corresponding dual by computing the pair of
conjugates $(\loss^*,\Vert\cdot\Vert_*)$ \cite[for a comprehensive list of dual losses see][Table~3]{RifLip07}.
In the next section, this is
illustrated by means of well-known loss functions and regularizers.

%\paragraph{A note on the employed technique.}
\blue{
At this point we would like to highlight some properties of \refdual that arise due to our dualization technique.  
%Our dualization technique might be interesting on its own right. We remark that the 
While approaches that firstly apply the representer theorem and secondly optimize in the primal such as \cite{Chapelle2006} also can employ general loss functions, the resulting loss terms depend on all optimization variables.
By contrast, in our formulation the dual loss terms are of a much simpler structure and they only depend on a single optimization variable $\alpha_i$. A similar dualization technique yielding singly-valued dual loss terms is presented in \cite{RifLip07}; it is based on 
Fenchel duality and limited to strictly positive definite kernel matrices. Our technique, which uses Lagrangian duality, extends the latter by allowing for positive semi-definite kernel matrices.
}

%\begin{table}
%\begin{center}
%%\medskip
%\begin{tabular}{l|l|l}
%  loss  & loss  function $\loss(t,y)$   & dual loss $\loss^*(t,y)$       \\\hline\hline
%  hinge  &  $\max(0,1-ty)$    &   $\loss^*(t,y)=\frac{t}{y}$ if $-1\leq\frac{t}{y}\leq 0$ and $\infty$ elsewise  \\
%  squared  &  $\frac{1}{2}(y-t)^2$   &   $\frac{1}{2}t^2+ty$    &   \\
%\end{tabular}
%\caption{\small{Loss functions  and corresponding dual losses used in this paper. Table inspired by \cite{RifLip07}; see Table~3 for a more complete list of dual losses.}}
%\label{tab:fenchel}
%\end{center}
%\end{table}

\section{Instantiations of the Model}\label{SEC-applications}

In this section we show that existing MKL-based learners are subsumed by the generalized formulation in \refdual.

\subsection{Support Vector Machines with Unweighted-Sum Kernels}\label{example1}
First we note that the support vector machine with an unweighted-sum
kernel can be recovered as a special case of our model. To see this, we consider
the regularized risk minimization problem using the hinge loss function
$\loss(t,y)=\max(0,1-ty)$ and the regularizer $\Vert\theta\Vert_\infty$.
We then can obtain the corresponding dual in terms of
Fenchel-Legendre conjugate functions as follows.

We first note that the dual loss of the hinge loss is $\loss^*(t,y)=\frac{t}{y}$ if
$-1\leq\frac{t}{y}\leq 0$ and $\infty$ elsewise \cite[][Table~3]{RifLip07}.
Hence, for each $i$ the term $\loss^*\left(-\frac{\alpha_i}{\C},~y_i\right)$ of
the generalized dual, i.e., \refdual, translates to 
$-\frac{\alpha_i}{Cy_i}$, provided that $0\leq \frac{\alpha_i}{y_i}\leq C$. Employing a variable substitution of the form
$\alpha^{\text{new}}_i=\frac{\alpha_i}{y_i}$, \refdual translates to
\begin{align}\label{dual_hinge}
   \max_\valpha ~ ~\one^\top\valpha
  -\frac{1}{2}\left\Vert\left(\valpha^\top Y K_mY\valpha \right)_{m=1}^M\right\Vert_*, \quad \text{s.t.} \quad {\y^\top\valpha=0 ~ ~ \text{and} ~ ~ \zero\leq\valpha\leq C\one}  ,
\end{align}
where we denote $Y={\rm diag}(y)$. The primal $\ell_\infty$-norm penalty $\Vert\vtheta\Vert_\infty$ is dual to $\Vert\vtheta\Vert_1$, 
hence, via the identity $\Vert\cdot\Vert_*=\Vert\cdot\Vert_1$ the right hand side of the last equation translates to $\sum_{m=1}^M\valpha^\top Y K_m Y\valpha$. 
Combined with \eqref{dual_hinge} this leads to the dual
\begin{align*}
   \sup_\valpha ~ ~\one^\top\valpha
  -\frac{1}{2} \sum_{m=1}^M\valpha^\top Y K_m Y\valpha, \quad \text{s.t.} \quad {\y^\top\valpha=0 ~ ~ \text{and} ~ ~ \zero\leq\valpha\leq C\one}  ,
\end{align*}
which is precisely an SVM with an unweighted-sum kernel.

\subsection{QCQP MKL of Lanckriet et al. (2004)}
A common approach in multiple kernel learning is to employ  regularizers of the form 
\begin{equation}\label{eq:ell1}
  \Omega(\vtheta)=\Vert\vtheta\Vert_1.
\end{equation}
This so-called $\ell_1$-norm regularizers are  specific instances of \emph{sparsity-inducing} regularizers. 
The obtained kernel mixtures  usually have a considerably large fraction of zero entries,
and hence equip the MKL problem by the favor of interpretable solutions.
Sparse MKL is a special case of our framework; to see this, note that the conjugate of \eqref{eq:ell1} is $\Vert\cdot\Vert_\infty$. 
Recalling the definition of an $\ell_p$-norm, the right hand side of \refdual translates
to $\max_{m\in\{1,\ldots,M\}}\valpha^\top Y K_mY\valpha$.
The maximum can subsequently be expanded into a slack variable $\slack$, resulting in 
\begin{align*}  %\label{eq:lanckriet}
   \sup_{\valpha,\vslack} &\quad \one^\top\valpha  -\slack \\
  \text{s.t.} & \quad \forall\hspace{1mm} m: \;\; \frac{1}{2}\valpha^\top Y K_m Y \valpha\leq\xi ~; \quad \y^\top\valpha=0 ~ ; \quad \zero\leq\valpha\leq C\one ,\nonumber
\end{align*}
which is the original QCQP formulation of MKL, firstly given by \cite{LanCriGhaBarJor04}.

\subsection{$\ell_p$-Norm MKL}
Our MKL formulation also allows for robust kernel mixtures by
employing an $\ell_{p}$-norm constraint with $p>1$, rather than an $\ell_1$-norm
constraint, on the mixing coefficients \citep{KloBreSonZieLasMue09}.
%has been extended to arbritrary $\ell_p$-norms \citep{KloBreSonZie09}. 
The following identity holds
\begin{align*}
\left( \frac{1}{2}\Vert\cdot\Vert_p^2 \right)^*=\frac{1}{2}\Vert\cdot\Vert^2_{p^*}, 
\end{align*}
where $p^*:=\frac{p}{p-1}$ \vspace{0.5mm} is the conjugated exponent of $p$,
and we obtain for the dual norm of the $\ell_p$-norm:
$\Vert\cdot\Vert_*=\Vert\cdot\Vert_{p^*}$.
This leads to the dual problem
\begin{align*}
  \sup_{\valpha:\one^\top\valpha=\zero} ~ -\C\sum_{i=1}^n \loss^*\left(-\frac{\alpha_i}{\C},~y_i\right)
  -\frac{1}{2}\left\Vert \left(\valpha^\top K_m \valpha\right)_{m=1}^M \right\Vert_{p^*}.
\end{align*}
In the special case of hinge loss minimization, we obtain the optimization problem
\begin{align*}
   \sup_\valpha ~ ~\one^\top\valpha
  -\frac{1}{2}\left\Vert \left(\valpha^\top Y K_m Y\valpha\right)_{m=1}^M \right\Vert_{p^*},	 \quad \text{s.t.} \quad {\y^\top\valpha=0 ~ ~ \text{and} ~ ~ \zero\leq\valpha\leq C\one}  .
\end{align*}
%It is thereby worth mentioning that the optimality conditions yield the	proportionality,
%$$  \theta^*_m \sim \left(\valpha^* K_m\valpha^*\right)^{\frac{2}{p-1}},$$
%as we will show in Sect.~\ref{opt_alex}.

%\subsection{Smooth MKL}
%A variation of MKL can be obtained by employing the squared hinge loss $\loss(t,y)=\frac{1}{2}\left(\max(0,1-ty)\right)^2$
%% instead of the hinge loss 
%together with a differentiable norm $\Vert\cdot\Vert$ to induce smooth optimization problems. The dual squared hinge loss is given by $\loss^*(t,y)=\frac{t}{y}+\frac{1}{2}\frac{t^2}{y^2}$ if $\frac{t}{y}\leq 0$ and $\infty$ otherwise. A similar calculation as it has been carried out in Section~\ref{example1} then shows, that the 
%generalized dual of \refdual boils to 
%\begin{align*}
%   \one^\top\valpha-\frac{1}{2C}\Vert\valpha\Vert_2^2
%  -\frac{1}{2}\left\Vert \left(\valpha^\top Y K_m Y\valpha\right)_{m=1}^M \right\Vert,
%\end{align*}
%which has to be maximized w.r.t.\ $\valpha\geq\zero$ and $\y^\top\valpha=0$. The linear constraint $\y^\top\valpha=0$ hinders the deployment of specialized optimization methods for differentiable unconstrained optimization problems. Therefore we propose
%to omit the bias $b$ in the primal MKL formulation given by \refprimal. As a result, the awkward constraint vanishes from above dual. Despite the box
%constraints, above dual can be dealt with very efficiently by adequately adjusted quasi-Newton solvers such as \cite{ZhuByrLuNoc97}.

\subsection{A Smooth Variant of Group Lasso}
\cite{YuaLin06} studied the following optimization problem for the special case $\mathcal H_m=\mathbb R^{d_m}$ and $\psi_m ={\rm id}_{\mathbb R^{d_m}}$, also known as group lasso,
\begin{align}\label{eq:groupLasso}
     \min_{\w}  \quad  \frac{C}{2}\sum_{i=1}^n \left(y_i-\sum_{m=1}^M \langle \w_m,\psi_m(\x_i)\rangle _{\mathcal{H}_m}\right)^2 + \frac{1}{2}\sum_{m=1}^M\Vert\w_m\Vert_{\mathcal{H}_m}.
\end{align}
The above problem has been solved by active set methods in the primal \citep{RotFis08}. We sketch an alternative approach based on dual optimization.
First, we note that Eq. \eqref{eq:groupLasso} can be equivalently expressed as \cite[][Lemma 26]{MicPon05}
\begin{align*}
     \inf_{\w, \vtheta:\vtheta\geq {\bf 0}}  \quad   \frac{C}{2}\sum_{i=1}^n \left(y_i-\sum_{m=1}^M \langle \w_m,\psi_m(\x_i)\rangle _{\mathcal{H}_m}\right)^2 + \frac{1}{2}\sum_{m=1}^M\frac{\Vert\w_m\Vert_{\mathcal{H}_m}^2}{\theta_m}, \quad \text{s.t.} \quad   \Vert\vtheta\Vert^2_1 \leq 1.
\end{align*}
The dual of $\loss(t,y)=\frac{1}{2}(y-t)^2$ is $\loss^*(t,y)=\frac{1}{2}t^2+ty$ and thus the corresponding group lasso dual can be written as
\begin{align}\label{groupLasso}
   \max_\valpha \quad  \y^\top\valpha -\frac{1}{2C}\Vert\valpha\Vert_2^2 
  -\frac{1}{2}\left\Vert \left(\valpha^\top Y K_m Y\valpha\right)_{m=1}^M \right\Vert_\infty  ,
\end{align}
which can be expanded into the following QCQP
\begin{align}  
   \sup_{\valpha,\xi} &\quad \y^\top\valpha -\frac{1}{2C}\Vert\valpha\Vert_2^2 -\blue{\xi} \\
  \text{s.t.} & \quad \forall \hspace{0.3mm}m: \quad \frac{1}{2}\valpha^\top Y K_m Y \valpha\leq\blue{\xi} \nonumber .
\end{align}
For small $n$, the latter formulation can be handled efficiently by QCQP solvers.
However, the quadratic constraints caused by the non-smooth $\ell_\infty$-norm in the objective still are computationally too demanding. 
As a remedy, we propose the following unconstrained variant based on $\ell_p$-norms ($ 1<p<\infty$), given by
\begin{align*}
   \max_\valpha \quad  \y^\top\valpha -\frac{1}{2C}\Vert\valpha\Vert_2^2 
  -\frac{1}{2}\left\Vert \left(\valpha^\top Y K_m Y\valpha\right)_{m=1}^M \right\Vert_{p^*}  .
\end{align*}
\blue{
It is straight forward to verify that the above objective function is  differentiable in any $\valpha\in\mathbb R^n$ (in particular, notice that the $\ell_p$-norm  function  is  differentiable for $1<p<\infty$) 
and hence the above optimization problem can be solved very efficiently by, for example, limited memory quasi-Newton descent methods \citep{LiuNoc1989}.
}

\subsection{Density Level-Set Estimation}
Density level-set estimators are frequently used for anomaly/novelty detection tasks  \citep{MarSin03a,MarSin03b}. 
Kernel approaches, such as one-class SVMs \citep{SchPlaShaSmoWil01} and Support Vector Domain Descriptions \citep{TaxDui99a} 
%have been extended to MKL settings by \cite{SonRaeSchSch06,RakBacCanGra08} and \cite{KloBreDueGehLas09}, respectively.
%1-class MKL has shown to perform well in intrusion detection and computer vision tasks \citep{KloNakBre09}.
can be cast into our MKL framework by employing  loss functions of the form $\loss(t)=\max(0,1-t)$. This gives rise to the primal 
\begin{align*}
     \inf_{\w, \vtheta:\vtheta\geq {\bf 0}}  \quad  C\sum_{i=1}^n \max\left(0,\sum_{m=1}^M \langle \w_m,\psi_m(\x_i)\rangle _{\mathcal{H}_m}\right)+ \frac{1}{2}\sum_{m=1}^M\frac{\Vert\w_m\Vert_{\mathcal{H}_m}^2}{\theta_m}, \quad \text{s.t.} \quad   \Vert\vtheta\Vert^2 \leq 1.
\end{align*}
Noting that the dual loss is $\loss^*(t)=t$ if $-1\leq t\leq 0$ and $\infty$ elsewise, we obtain the following generalized dual
\begin {align*}
   \sup_{\valpha} ~ ~\one^\top\valpha
  -\frac{1}{2}\left\Vert \left(\valpha^\top K_m\valpha\right)_{m=1}^M \right\Vert_{p^*}, \quad \text{s.t.} \quad {\zero\leq\valpha\leq C\one}  ,
\end{align*}
which has been studied by \cite{SonRaeSchSch06} and \cite{RakBacCanGra08} for $\ell_1$-norm, and by \cite{KloNakBre09} for $\ell_p$-norms.

\subsection{Non-Isotropic Norms}
\label{matrixnorm}
In practice, it is often desirable for an expert to incorporate prior knowledge about the problem domain.
\blue{For instance, an expert could provide estimates of the interactions of kernels $\{K_1,...,K_M\}$
in the form of an $M\times M$ matrix $E$. Alternatively, $E$ could be obtained by computing  pairwise kernel alignments
$E_{ij} = \frac{<K_i,K_j>}{\Vert K_i\Vert ~ \Vert K_j \Vert}$ given a dot product on the space of kernels such
as the Frobenius dot product \citep{OngEtAl2005}.
%\cite[e.g.,][]{DudHar73}.}
In a third scenario,  $E$ could be a diagonal matrix encoding the a priori importance of kernels---it might be known 
from pilot studies that a subset of the employed kernels is inferior to the remaining ones.}

All those scenarios can be easily handled within our framework by considering non-isotropic regularizers of the form\footnote{This idea is inspired by the Mahalanobis distance \citep{Mah36}.}
$$\Vert\vtheta\Vert_{E^{-1}}=\sqrt{\vtheta^\top E^{-1}\vtheta} ~  ~ \text{with}  ~ ~ E \succ 0 ,$$
where $E^{-1}$ is the matrix inverse of $E$.
The dual norm is again defined via 
$\frac{1}{2}\Vert\cdot\Vert^2_*:=\left(\frac{1}{2}\Vert\cdot\Vert^2_{E^{-1}}\right)^*$ and the following easily-to-verify identity,
\begin{align*}
  \left(\frac{1}{2}\Vert\cdot\Vert^2_{E^{-1}}\right)^*=\frac{1}{2}\Vert\cdot\Vert_{E}^2,
\end{align*}
leads to the dual,
\begin{align*}
  \sup_{\valpha:\one^\top\valpha=\zero} ~ -\C\sum_{i=1}^n \loss^*\left(-\frac{\alpha_i}{\C},~y_i\right)
  -\frac{1}{2}\left\Vert \left(\valpha^\top K_m \valpha\right)_{m=1}^M \right\Vert_{E},
\end{align*}
which is the desired non-isotropic MKL problem.
%In the special case of hinge loss minimization, we obtain the optimization problem
%\begin{align*}
%   \sup_\valpha ~ ~\one^\top\valpha
%  -\frac{1}{2}\left\Vert \left(\valpha^\top Y K_m Y\valpha\right)_{m=1}^M \right\Vert_E \quad \text{s.t.} \quad {\y^\top\valpha=0 ~ ~ \text{and} ~ ~ \zero\leq\valpha\leq C\one}  .
%\end{align*}

%Many estimation problems \citep{TasGueKol03,TsoJoaHofAlt05} can be written in terms as a piecewise linear loss function
%$$\loss\left(\langle \w,\psi(\x)\rangle _{\mathcal{H}}+b,~y\right)=\max_{\bar{y}\in\mathcal Y} \langle \w_y,\psi(\x)\rangle _{\mathcal{H}}-\langle \w_{\bar{y}},\psi(\x))\rangle _{\mathcal{H}} +\Delta(y,\tilde{y}) ~ ,$$
%for some joint feature map $\phi\otimes\id_{\mathcal{Y}}$ and a penalty function $\Delta(y,\tilde{y})$.
%When learning with multiple kernels this is extended to \citep{ZieOng07}
%\begin{align}
%\loss\left(\sum_{m=1}^M \langle \w_m,\psi_m(\x)\rangle _{\mathcal{H}_m}+b,~y\right)= \\
%\max_{\bar{y}\in\mathcal Y} \sum_{m=1}^M\left(\langle \w_{m,y},\psi_{m}(\x)\rangle _{\mathcal{H}_m}-\langle \w_{m,\bar{y}},\psi_{m}(\x)\rangle _{\mathcal{H}_m}\right) +\Delta(y,\bar{y}) ~ .
%\end{align}
%\comment{nee, so geht das nicht!}

%------------------------------------------------------------------------------
\section{Optimization Strategies}
\label{SEC-optimization}

%We in this paper focus on optimization non-sparse kernel regularization induced 
%by an $\ell_p$-norm.

The dual as given in \refdual does not lend itself to efficient
large-scale optimization in a straight-forward fashion, for instance
by direct application of standard approaches like gradient descent.
Instead, it is beneficial to exploit the structure of the MKL cost
function by alternating between optimizing w.r.t.~the mixings
$\vtheta$ and w.r.t.~the remaining variables.  Most recent MKL
solvers \cite[e.g.,][]{RakBacCanGra08,XuEtAl09,Nathetal09} do so by
setting up a two-layer optimization procedure: a master problem, which
is parameterized only by $\vtheta$, is
solved to determine the kernel mixture; to solve this master problem,
repeatedly a slave problem is solved which amounts to training a
standard SVM on a mixture kernel. Importantly, for the slave problem,
the mixture coefficients are fixed, such that conventional, efficient
SVM optimizers can be recycled. Consequently these two-layer procedures
are commonly implemented as \emph{wrapper} approaches.  Albeit
appearing advantageous, wrapper methods suffer from two
shortcomings: (i) Due to kernel cache limitations, the kernel matrices
have to be pre-computed and stored or many kernel computations have
to be carried out repeatedly, inducing heavy wastage of either memory
or time. (ii) The slave problem is always optimized to the end (and
many convergence proofs seem to require this), although most of the
computational time is spend on the non-optimal mixtures. 
Certainly suboptimal slave solutions would already
suffice to improve far-from-optimal $\vtheta$ in the master problem.

Due to these problems, MKL is prohibitive when learning with a
multitude of kernels and on large-scale data sets as commonly
encountered in many data-intense real world applications such as
bioinformatics, web mining, databases, and computer security.
The optimization approach presented in this paper
decomposes the MKL problem into smaller
subproblems \citep{Pla99,Joa99,FanCheLin05} by establishing a
wrapper-like scheme \emph{within} the decomposition algorithm.

%From an optimization standpoint, our work is most closely related to
%the SILP approach \citep{SonRaeSchSch06} and the SimpleMKL method
%\citep{RakBacCanGra08,SzaGraRak10}.  Both of these methods also aim at efficient
%large-scale MKL algorithms.  

Our algorithm is embedded into the large-scale framework of
\cite{SonRaeSchSch06} and extends it to the optimization of non-sparse
kernel mixtures induced by an $\ell_p$-norm penalty.
%However, most derivations are given 
%for more loss functions and the optimization schemes presented 
%can be canonically transfered to many other classes of loss functions and norms.
Our  strategy alternates between minimizing the primal problem
\eqref{RakoZienConvex} w.r.t.~$\vtheta$ via a simple analytical update formula and with incomplete optimization
w.r.t.~all other variables which, however, is performed in terms of the
dual variables $\valpha$. % For the second strategy, we devise a convex
%semi-infinite program (SIP), which we solve by column generation with
%nested sequential quadratically constrained linear programming
%(SQCLP). In both cases, 
Optimization w.r.t.~$\valpha$ is performed by
chunking optimizations with minor iterations.  
%The first, ``direct''  \comment{FIXME}
%approach can be applied without a common purpose QCQP solver. We show
Convergence of our algorithm is proven under typical technical regularity assumptions.
\subsection{A Simple Wrapper Approach Based on an Analytical Update}\label{opt_alex}

We first present an easy-to-implement wrapper version of our optimization approach to 
multiple kernel learning. The interleaved decomposition algorithm is deferred to the next section.
%differentiable strictly convex norm penalties
To derive the new algorithm, we first revisit the primal problem, i.e.
\begin{equation*}
   \inf_{\w, b,\vtheta:\vtheta\geq {\bf 0}}  \quad  C\sum_{i=1}^n \loss\left(\sum_{m=1}^M \langle \w_m,\psi_m(\x_i)\rangle _{\mathcal{H}_m}+b,~y_i\right)+ \frac{1}{2}\sum_{m=1}^M\frac{\Vert\w_m\Vert_{\mathcal{H}_m}^2}{\theta_m}, \quad \text{s.t.} \quad   \Vert\vtheta\Vert^2 \leq 1. ~ ~ ~ \refprimalcounter
\end{equation*}
In order to obtain an efficient optimization strategy, we divide the variables in the above OP into two groups, $(\w,b)$ on one hand and $\vtheta$ on the other.
In the following we will derive an algorithm which alternatingly operates on those two groups via a block coordinate descent algorithm, also known as the \emph{non-linear block Gauss-Seidel method}. Thereby the optimization w.r.t.~$\vtheta$ will be carried out analytically and the $(\w,b)$-step will be computed in the dual, if needed. 

% To see this, note that decomposition algorithms would need to have one kernel cache for each kernel, which - with increasing number of kernels - is infeasible. 
The basic idea of our first approach is that for a given, fixed set of primal variables $(\w,b)$, the
optimal $\vtheta$ in the primal problem~\refprimalcounter
can be  calculated analytically. In the subsequent derivations we employ non-sparse norms of the form 
$\Vert\vtheta\Vert_p= (\sum_{m=1}^M \theta_m^p)^{{1}/{p}}$, $1<p<\infty$. \footnote{While the reasoning also holds for  weighted $\ell_p$-norms, the extension to
more general norms, such as the ones described in Section~\ref{matrixnorm}, is left for future work.}
%\footnote{Lemmas 25 and 26 in \cite{MicPon05} indicate that the result could even be extended to more general cases.}. 
%(cf. Sec.~\ref{matrixnorm} for an example of a non-isotropic norm which satisfies these requirements).

The following proposition gives an analytic update formula for $\vtheta$ given fixed remaining variables $(\w,b)$ and will
become the core of our proposed  algorithm.

\begin{proposition}\label{prop:directBetasOpt}
  Let  $\loss$ be a convex loss function, be $p>1$. Given fixed (possibly suboptimal) $\w\neq \zero$ and $b$, the minimal $\vtheta$ in \refprimal is attained for
  \begin{equation}\label{directBetasOpt}
    \theta_m = \frac{\Vert\w_m\Vert_{\mathcal{H}_m}^{\frac{2}{p+1}}}
    {\left(\sum_{m'=1}^M\Vert\w_{m'}\Vert_{\mathcal{H}_{m'}}^{\frac{2p}{p+1}}\right)^{{1}/{p}}},   \quad ~ \forall m=1,\ldots,M.
  \end{equation}
\end{proposition}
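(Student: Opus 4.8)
The plan is to solve a constrained optimization problem in the single block of variables $\vtheta$, with $(\w,b)$ held fixed. Since $(\w,b)$ are fixed, the loss term in \refprimal does not depend on $\vtheta$, so the problem reduces to
\begin{align*}
  \min_{\vtheta \geq \zero} \quad \frac{1}{2}\sum_{m=1}^M \frac{\Vert\w_m\Vert_{\mathcal{H}_m}^2}{\theta_m} \qquad \text{s.t.} \qquad \sum_{m=1}^M \theta_m^p \leq 1 .
\end{align*}
First I would argue that the inequality constraint is active at the optimum: each summand $\Vert\w_m\Vert^2/\theta_m$ is strictly decreasing in $\theta_m$ on $(0,\infty)$, so if $\Vert\vtheta\Vert_p < 1$ one could scale every $\theta_m$ up by a common factor $>1$, strictly decreasing the objective while remaining feasible; hence $\sum_m \theta_m^p = 1$ at any minimizer. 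Next I would note that since $\w \neq \zero$, some $\Vert\w_m\Vert_{\mathcal{H}_m} > 0$, and for any coordinate with $\Vert\w_m\Vert_{\mathcal{H}_m} > 0$ we must have $\theta_m > 0$ (otherwise the objective is $+\infty$ by the stated convention). Coordinates with $\Vert\w_m\Vert_{\mathcal{H}_m} = 0$ contribute $0$ to the objective regardless and can be set to $0$; the formula \eqref{directBetasOpt} is consistent with this. So we may restrict attention to the strictly positive coordinates, where the objective is smooth and strictly convex and the feasible set is convex with nonempty interior, so the KKT conditions are necessary and sufficient.

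Then I would form the Lagrangian $\mathcal{L}(\vtheta,\eta) = \frac{1}{2}\sum_m \Vert\w_m\Vert^2/\theta_m + \eta\big(\sum_m \theta_m^p - 1\big)$ with multiplier $\eta \geq 0$, and set $\partial\mathcal{L}/\partial\theta_m = 0$, giving $-\frac{1}{2}\Vert\w_m\Vert^2/\theta_m^2 + \eta p\,\theta_m^{p-1} = 0$, i.e. $\theta_m^{p+1} = \frac{\Vert\w_m\Vert^2}{2\eta p}$, so $\theta_m = c\,\Vert\w_m\Vert^{2/(p+1)}$ for a common constant $c = (2\eta p)^{-1/(p+1)} > 0$ independent of $m$. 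It remains to pin down $c$ via the active constraint $\sum_m \theta_m^p = 1$: substituting gives $c^p \sum_m \Vert\w_m\Vert^{2p/(p+1)} = 1$, hence $c = \big(\sum_{m'} \Vert\w_{m'}\Vert^{2p/(p+1)}\big)^{-1/p}$, which yields exactly \eqref{directBetasOpt}. Finally I would check $\eta > 0$ (consistent with the active constraint) and that the resulting $\vtheta$ is nonnegative, so it is indeed the minimizer.

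I do not expect any serious obstacle here; the argument is a routine Lagrange-multiplier computation. The only points requiring a little care are the boundary behaviour — justifying that the constraint is tight and handling the coordinates where $\Vert\w_m\Vert_{\mathcal{H}_m} = 0$ (using the convention $t/0 = 0$ for $t=0$) so that one can legitimately reduce to the smooth strictly convex problem on the positive orthant — and confirming that $\w \neq \zero$ is exactly what is needed to rule out the degenerate case where the whole objective vanishes and $\vtheta$ is unconstrained in direction. These are precisely the same boundary considerations that appeared in the proof of Theorem~\ref{th:reg-obj}.
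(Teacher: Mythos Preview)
Your proposal is correct and follows essentially the same approach as the paper: both arguments derive the first-order optimality condition $\theta_m \propto \Vert\w_m\Vert_{\mathcal{H}_m}^{2/(p+1)}$ and then fix the normalization constant from the tight constraint $\Vert\vtheta\Vert_p = 1$. The only minor difference is that the paper first invokes Theorem~\ref{th:reg-obj} to pass to the Tikhonov form \eqref{eq:gen_varma} and differentiates the penalized objective there, whereas you work directly with the Ivanov-constrained problem \refprimalcounter and introduce a Lagrange multiplier on the constraint---a slightly more direct route that avoids the detour through the Tikhonov--Ivanov equivalence but arrives at the identical stationarity condition.
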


\begin{proof}\footnote{We remark that a more general result can be obtained by an alternative proof using H\"older's inequality \cite[see Lemma 26 in][]{MicPon05}.}
We start the derivation, by 
equivalently translating \refprimal via Theorem~\ref{th:reg-obj} into
\begin{equation}\label{varma_again}
  \inf_{\w, b,\vtheta:\vtheta\geq{\bf 0}} \quad  \tilde{C}\sum_{i=1}^n \loss\left(\sum_{m=1}^M \langle \w_m,\psi_m(\x_i)\rangle _{\mathcal{H}_m}+b,~y_i\right)+ \frac{1}{2}\sum_{m=1}^M\frac{\Vert\w_m\Vert_{\mathcal{H}_m}^2}{\theta_m} + \frac{\mu}{2}\Vert\vtheta\Vert_p^2,
\end{equation}
with $\mu>0$. Suppose we are given fixed $(\w,b)$, then setting the partial derivatives of the above objective w.r.t.~$\vtheta$ to zero yields
the following condition on the optimality of $\vtheta$,
\begin{equation}\label{eq:direct_op}
   -\frac{\Vert\w_m\Vert_{\mathcal{H}_m}^2}{2\theta_m^2} + \mu\cdot\frac{\partial \left(\frac{1}{2}\Vert\vtheta\Vert_p^2\right)}{\partial\theta_m}=0,   \quad \forall m=1,\ldots,M.
\end{equation}
The first derivative of the $\ell_p$-norm with respect to the mixing coefficients can be expressed as 
$$\frac{\partial\left(\frac{1}{2}\Vert\vtheta\Vert_p^2\right)}{\partial\theta_m}=\theta_m^{p-1}\Vert\vtheta\Vert_p^{2-p} ,$$
and hence Eq. \eqref{eq:direct_op} translates into the following optimality condition,
\begin{equation}\label{eq:op_temp}
   \exists\zeta \quad \forall m=1,\ldots,M: ~~ \quad \theta_m = \zeta\Vert\w_m\Vert_{\mathcal{H}_m}^{\frac{2}{p+1}}~. \quad \quad
\end{equation}
 Because $\w\neq0$, using the same argument as in the proof of Theorem \ref{th:reg-obj}, the constraint $\Vert\vtheta\Vert^2_p\leq 1$ in \eqref{varma_again} is at the upper bound, i.e.\  $\Vert\vtheta\Vert_p=1$ holds for an optimal $\vtheta$. Inserting \eqref{eq:op_temp} in the latter equation leads to $\zeta = \left(\sum_{m=1}^M\Vert\w_m\Vert_{\mathcal{H}_m}^{{2p}/{p+1}}\right)^{{1}/{p}}$. Resubstitution into \eqref{eq:op_temp} yields the  claimed formula \eqref{directBetasOpt}.
\end{proof}

%In the more interesting case,
%we will perform the above update in the
%dual, thereby operating on dual variables $\valpha$: 
%
%\begin{corollary}
%  Let  $\loss$ be a convex loss function, be $p>1$. Given fixed dual variable $\valpha$, as specified in Sect.~\eqref{sec:dual}, 
%  the optimal solution of \refprimal is attained for 
%  \begin{equation}\label{directBetasOpt_dual}
%    \theta^*_m = \frac{\left(\valpha K_m\valpha\right)^{\frac{1}{p-1}}}
%    {\left(\sum_{m'=1}^M\left(\valpha K_{m'}\valpha\right)^{\frac{p}{p-1}}\right)^{{1}/{p}}},   \quad ~ \forall m=1,\ldots,M.
%  \end{equation}
%  Note that if we deploy the hinge loss, then we operate on variables $\alpha_i^{\text{new}}=\alpha_i y_i$ (cf.~Sect.~\ref{example1}).
%\end{corollary}
%
%\begin{proof}
% According to the KKT optimality condition given by Eq.~\eqref{eq:opt_v.b} the dual variables $\valpha$ are specified in terms of $\w_m$ by
% $$\vec w_m^* = \theta_m^*\sum_{i=1}^n \alpha^*_i \psi_m(\vec x_i).$$  
%Plugging the above equation into Eq.~\eqref{eq:op_temp}, solving for $\theta_m$,
%and appropriately normalizing, we obtain
%the desired dual update formula \eqref{directBetasOpt_dual}.
%\end{proof}
Second, we consider how to optimize \refprimal w.r.t.~the
remaining variables $(\w,b)$ for a given set
of mixing coefficients $\vtheta$. Since optimization often is considerably easier in the dual space, we fix $\vtheta$ and
build the partial Lagrangian of \refprimal w.r.t.~all other primal variables $\w$, $b$. 
%The derivation is analogous to \citep{RakBacCanGra07,ZieOng07} and we omit details for lack of space. 
The resulting dual problem is of the form (detailed derivations omitted)
\begin{align}\label{regularSVM}
  \sup_{\valpha:\one^\top\valpha=0} 
  & -\C\sum_{i=1}^n \loss^*\left(-\frac{\alpha_i}{\C},~y_i\right)  -\frac{1}{2}\sum_{m=1}^M \theta_m\valpha^\top K_m \valpha,
\end{align}
\blue{and the KKT conditions yield $\vec w_m = \theta_m\sum_{i=1}^n \alpha_i \psi_m(\vec x_i)$ in the optimal point, hence
\begin{equation}\label{eq_kkt-temp}
  \Vert\vec w_m\Vert^2 = \theta_m^2\valpha K_m \valpha, \quad \forall~m=1,\ldots,M.
\end{equation}  
}
We now have all ingredients (i.e., Eqs.~\eqref{directBetasOpt}, \eqref{regularSVM}--\eqref{eq_kkt-temp}) to formulate a simple macro-wrapper algorithm for $\ell_p$-norm MKL training: 
%see  Alg.~\ref{alg:direct_wrapper}.

\begin{algorithm*}[hbt]
  \begin{algorithmic}[1]
  \small
  \STATE \textbf{input:} feasible $\valpha$ and $\vtheta$
  \STATE \textbf{while} optimality conditions are not satisfied {\bf do}
  \STATE \qquad Compute $\valpha$ according to  Eq.~\eqref{regularSVM}  (e.g.\ SVM)
  \blue{
  \STATE \qquad Compute $\Vert\w_m\Vert^2$ for all $m=1,...,M$ according to Eq.~\eqref{eq_kkt-temp} 
  \STATE \qquad Update $\vtheta$ according to Eq.~\eqref{directBetasOpt} 
  }
  \STATE \textbf{end while}
  \end{algorithmic}
  \caption{\label{alg:direct_wrapper} \textit{Simple $\ell_{p>1}$-norm MKL wrapper-based training algorithm.
  % employing analytical computation of $\vtheta$. 
  The analytical updates of $\vtheta$ and the SVM computations are optimized alternatingly.}}
\end{algorithm*}

The above algorithm alternatingly solves a convex risk minimization machine (e.g.\ SVM) w.r.t.~the actual mixture $\vtheta$  (Eq.~\eqref{regularSVM}) and
subsequently computes the analytical  update according to Eq.~\eqref{directBetasOpt} and \eqref{eq_kkt-temp}. It can, for example, be stopped based on changes of the objective function or the duality gap within subsequent iterations.

\subsection{Towards Large-Scale MKL---Interleaving SVM and MKL Optimization}\label{analytical}

However, a disadvantage of the above wrapper approach still is that it deploys a full blown kernel matrix. We thus
propose to interleave the SVM optimization of SVMlight with the $\vtheta$- and $\valpha$-steps at training time.
We have implemented this so-called \emph{interleaved} algorithm in Shogun for hinge loss,
thereby promoting sparse solutions in $\alpha$. 
This allows us to solely operate on a small number of active variables.\footnote{In practice, it turns out that 
the kernel matrix of active variables typically is about of the size $40\times 40$, even when we deal with ten-thousands of examples.}
The resulting interleaved optimization method is shown in Algorithm~\ref{alg:directmkl}.
Lines 3-5 are standard in chunking based SVM solvers and carried out
by $\text{SVM}^{\text{light}}$ \blue{(note that $Q$ is chosen as described in \cite{Joa99})}. Lines 6-7 compute SVM-objective
values. Finally,  the analytical 
$\vtheta$-step is carried out in Line 9. The algorithm  terminates if the
maximal KKT violation \citep[c.f.][]{Joa99} falls below a predetermined precision
$\varepsilon$ and if the normalized maximal constraint violation
$|1-\frac{\omega}{\omega_{\text old}}|<\varepsilon_{mkl}$ for the MKL-step, where $\omega$ denotes the MKL objective function value (Line 8).

\begin{algorithm*}[hbt]
  \begin{algorithmic}[1]
    \small
	\STATE \textbf{Initialize:} $g_{m,i}=\hat g_i=\alpha_i=0$, $\forall i=1,...,n$;  \quad $L=S=-\infty$; \quad 
	$\theta_m=\sqrt[p]{1/M}$, $\forall m=1,...,M$
	%$m=1,\ldots,M$ and $i=1,\ldots,n$
	\STATE \textbf{iterate}
	\STATE \quad Select Q variables $\alpha_{i_1},\ldots,\alpha_{i_Q}$ based on the gradient $\hat {\bf g}$ of \eqref{regularSVM} w.r.t.\ $\valpha$
	\STATE \quad Store $\valpha^{old}=\valpha$ and then update $\valpha$\vspace*{0.2ex} according to \eqref{regularSVM} with respect to the selected
	variables 
    \STATE \quad Update gradient $g_{m,i}\leftarrow g_{m,i}+ \sum_{q=1}^Q (\alpha_{i_q}-\alpha^{old}_{i_q})
	    \k_m(\x_{i_q},\x_i)$, ~$\forall$ $m=1,\ldots,M$,	$i=1,\ldots,n$ \vspace{0.25ex}
  \STATE \quad Compute the quadratic terms ~$S_m=\frac{1}{2}\sum_{i} g_{m,i}\alpha_i$, ~$q_m = 2\theta_m^2 S_m$, $~\forall m=1,\ldots,M$ \vspace{0.5ex}
  \STATE \quad $L_{\text old}=L$, \quad $L=\sum_iy_i\alpha_i$, \quad $S_{\text old}=S$, \quad $S=\sum_{m} \theta_m S_m$  \vspace{0.4ex}
	\STATE \quad {\bf if} $|1-\frac{L-S}{L_{\text old}-S_{\text old}}|\geq\varepsilon$  
  \STATE \quad  \quad $ \theta_m = \left(q_{m}\right)^{1/(p+1)}  / \left(\sum_{m'=1}^M\left(q_{m'}\right)^{p/(p+1)}\right)^{1/p}, \quad \forall~ m=1,\ldots,M$
  \STATE \quad {\bf else}
  \STATE \qquad \textbf{break}
  \STATE \quad {\bf end if}
  \STATE \quad $\hat g_i=\sum_m \theta_m g_{m,i}$ for all $i=1,\ldots,n$
  \end{algorithmic}
  \caption{\label{alg:directmkl} ~  \textit{$\ell_p$-Norm MKL chunking-based training
  algorithm via analytical update. Kernel weighting $\vtheta$ and (signed) SVM $\valpha$  are optimized interleavingly. The
    accuracy parameter $\varepsilon$ and the subproblem size $Q$ are
    assumed to be given to the algorithm.}}
\end{algorithm*}

\subsection{Convergence Proof for $p>1$}

In the following, we exploit the primal view of the above algorithm as a nonlinear block Gauss-Seidel method,
to prove convergence of our algorithms. We first need the following useful result about convergence of the nonlinear block Gauss-Seidel method in general.

\begin{proposition}[{Bertsekas, 1999, Prop. 2.7.1}]
	\label{prop:gauss-seidel}
  Let $\mathcal X=\bigotimes_{m=1}^M\mathcal{X}_m$ be the Cartesian product of closed convex sets $\mathcal{X}_m\subset\mathbb R^{d_m}$, be $f:\mathcal X\rightarrow \mathbb R$ a continuously differentiable function. Define the nonlinear block Gauss-Seidel method recursively by letting $\x^0\in \mathcal X$ be any feasible point, and be
  \begin{equation}\label{eq:seidel_method}
     \x_m^{k+1}=\argmin_{\boldsymbol{\xi}\in\mathcal X_m} f\left(\x_1^{k+1},\cdots,\x_{m-1}^{k+1},\boldsymbol{\xi},\x_{m+1}^{k},\cdots,\x_{M}^{k}\right), ~ ~ \forall m=1,\ldots,M.
   \end{equation}     
  Suppose that for each $m$ and $\x\in\mathcal X$, the minimum 
  \begin{equation}\label{eq:unique}
    \min_{\boldsymbol{\xi}\in\mathcal X_m} f\left(\x_1,\cdots,\x_{m-1},\boldsymbol{\xi},\x_{m+1},\cdots,\x_M\right)
  \end{equation}
  is uniquely attained. 
  Then every limit point of the sequence $\{\x^k\}_{k\in\mathbb N}$ is a stationary point.
\end{proposition}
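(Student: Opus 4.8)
The statement is the classical convergence theorem for block coordinate (Gauss--Seidel) minimization, and I would prove it by the standard descent-plus-compactness argument. First I would record the monotonicity that turns the scheme into a descent method: writing ${\boldsymbol z}_m^k := (\x_1^{k+1},\ldots,\x_m^{k+1},\x_{m+1}^k,\ldots,\x_M^k)$ for the iterate after the first $m$ blocks have been updated (so ${\boldsymbol z}_0^k=\x^k$ and ${\boldsymbol z}_M^k=\x^{k+1}$), the defining property \eqref{eq:seidel_method} gives $f(\x^k)\ge f({\boldsymbol z}_1^k)\ge\cdots\ge f({\boldsymbol z}_{M-1}^k)\ge f(\x^{k+1})$. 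Hence $\{f(\x^k)\}$ is nonincreasing; if $\bar\x$ is a limit point along a subsequence $\{\x^k\}_{k\in K}$, continuity of $f$ forces the \emph{entire} value sequence $\{f(\x^k)\}$ to converge to $f(\bar\x)$, and consequently $f({\boldsymbol z}_m^k)\to f(\bar\x)$ for every $m$.

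Next I would show, block by block, that $\x_m^{k+1}\to\bar\x_m$ along $K$ --- equivalently ${\boldsymbol z}_m^k\to\bar\x$ along $K$. For the first block, suppose not: then there are $\bar\gamma>0$ and a subsequence (which I relabel) on which $\gamma^k:=\norm{\x_1^{k+1}-\x_1^k}\ge\bar\gamma$. Normalize ${\boldsymbol s}^k:=(\x_1^{k+1}-\x_1^k)/\gamma^k$ and, using compactness of the unit sphere in $\mathbb R^{d_1}$, pass to a further subsequence with ${\boldsymbol s}^k\to\bar{\boldsymbol s}$, $\norm{\bar{\boldsymbol s}}=1$. Fix $\e\in(0,\bar\gamma]$. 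Convexity of $\mathcal X_1$ makes $\x_1^k+\e{\boldsymbol s}^k=(1-\tfrac{\e}{\gamma^k})\x_1^k+\tfrac{\e}{\gamma^k}\x_1^{k+1}$ feasible, so optimality of $\x_1^{k+1}$ gives $f({\boldsymbol z}_1^k)\le f(\x_1^k+\e{\boldsymbol s}^k,\x_2^k,\ldots,\x_M^k)$; letting $k\to\infty$ along $K$ yields $f(\bar\x)\le f(\bar\x_1+\e\bar{\boldsymbol s},\bar\x_2,\ldots,\bar\x_M)$. On the other hand, passing to the limit in the inequality $f({\boldsymbol z}_1^k)\le f(\xi,\x_2^k,\ldots,\x_M^k)$, valid for every $\xi\in\mathcal X_1$, shows that $\bar\x_1$ minimizes $f(\cdot,\bar\x_2,\ldots,\bar\x_M)$ over $\mathcal X_1$; since $\bar\x_1+\e\bar{\boldsymbol s}\in\mathcal X_1$ ($\mathcal X_1$ closed), this gives $f(\bar\x_1+\e\bar{\boldsymbol s},\ldots)\ge f(\bar\x)$, hence equality. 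Thus $\bar\x_1+\e\bar{\boldsymbol s}\neq\bar\x_1$ would be a second minimizer of the block-$1$ subproblem, contradicting the uniqueness assumption \eqref{eq:unique}. Therefore ${\boldsymbol z}_1^k\to\bar\x$ along $K$; running the same argument with ${\boldsymbol z}_1^k$ in place of $\x^k$ gives ${\boldsymbol z}_2^k\to\bar\x$, and inductively ${\boldsymbol z}_m^k\to\bar\x$ for all $m$ (in particular $\x^{k+1}={\boldsymbol z}_M^k\to\bar\x$).

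Finally I would read off stationarity. The first-order optimality condition for the convex subproblem \eqref{eq:seidel_method} is $\langle\nabla_m f({\boldsymbol z}_m^k),\,\xi-\x_m^{k+1}\rangle\ge0$ for all $\xi\in\mathcal X_m$; letting $k\to\infty$ along $K$ and using ${\boldsymbol z}_m^k\to\bar\x$, $\x_m^{k+1}\to\bar\x_m$, and continuity of $\nabla f$, we obtain $\langle\nabla_m f(\bar\x),\,\xi-\bar\x_m\rangle\ge0$ for all $\xi\in\mathcal X_m$ and all $m$. Summing over $m$ and invoking the product structure $\mathcal X=\bigotimes_m\mathcal X_m$ yields $\langle\nabla f(\bar\x),\,\x-\bar\x\rangle\ge0$ for all $\x\in\mathcal X$, i.e.\ $\bar\x$ is a stationary point of $f$ on $\mathcal X$.

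The crux of the argument, and the only place the uniqueness hypothesis is genuinely needed, is the middle step: showing that the successive within-block displacements $\norm{\x_m^{k+1}-\x_m^k}$ vanish along the convergent subsequence. Without uniqueness the method can stall between distinct block minimizers, so some such hypothesis is unavoidable; the device that makes it bite is the rescaling ${\boldsymbol s}^k=(\x_1^{k+1}-\x_1^k)/\gamma^k$ onto the unit sphere, which converts ``the displacement does not tend to zero'' into ``there exist two distinct minimizers of the same block subproblem.'' The descent chain and the limiting argument in the variational inequality are routine given continuity of $f$ and $\nabla f$ and closedness and convexity of the sets $\mathcal X_m$.
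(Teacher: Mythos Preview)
Your argument is correct and is essentially the classical Bertsekas proof. Note, however, that the paper does not actually supply its own proof of this proposition: it is quoted verbatim as Proposition~2.7.1 from \cite{Ber99} and the paper simply refers the reader to pages~268--269 there. So there is no ``paper's proof'' to compare against beyond the Bertsekas reference itself, and what you have written is precisely that standard argument---monotone descent to pin down the value limit, the unit-sphere rescaling trick together with the uniqueness hypothesis to force $\norm{\x_m^{k+1}-\x_m^k}\to 0$ along the convergent subsequence, and then passage to the limit in the blockwise first-order optimality conditions to conclude stationarity on the product set.
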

The proof can be found in \cite{Ber99}, p.~268-269. The next proposition basically establishes convergence of the proposed $\ell_p$-norm MKL training algorithm.  
%\blue{Note that the below proof of convergence also applies to the method of \cite{Xuetal10}.}

\begin{theorem}
	\label{thm:directmkl}
  Let $\loss$ be the hinge loss and be $p> 1$. Let the kernel matrices $K_1,\ldots,K_M$ be positive definite.
  Then every limit point of   Algorithm~\ref{alg:direct_wrapper} is a globally optimal point of \refprimal.
  Moreover, suppose that the SVM computation is solved exactly in each iteration, then the same holds true for Algorithm~\ref{alg:directmkl}.
\end{theorem}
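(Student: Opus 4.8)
The plan is to read Algorithm~\ref{alg:direct_wrapper} as a two-block nonlinear Gauss--Seidel (block coordinate descent) iteration on the \emph{primal} objective of \refprimal and to invoke Proposition~\ref{prop:gauss-seidel}. Split the variables into the block $(\w,b)$, ranging over the closed convex set $\mathcal X_1=\mathcal H\times\mathbb R$, and the block $\vtheta$, ranging over $\mathcal X_2=\{\vtheta\ge\zero:\Vert\vtheta\Vert_p^2\le 1\}$, which is also closed and convex; by the coordinate transformation mentioned in the footnote to Theorem~\ref{th:reg-obj} we may take each $\mathcal H_m$ finite-dimensional, so that $\mathcal X=\mathcal X_1\times\mathcal X_2$ has exactly the product structure required by Proposition~\ref{prop:gauss-seidel}. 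One then checks that the two update steps of Algorithm~\ref{alg:direct_wrapper} are precisely the exact block minimizations of type \eqref{eq:seidel_method}: the $\valpha$-step solves the SVM \eqref{regularSVM}, which by the KKT relations ($\w_m=\theta_m\sum_i\alpha_i\psi_m(\x_i)$, cf.\ \eqref{eq_kkt-temp}) returns the minimizer over $(\w,b)$ for the current $\vtheta$; and the $\vtheta$-step with $\Vert\w_m\Vert^2=\theta_m^2\valpha^\top K_m\valpha$ is, by Proposition~\ref{prop:directBetasOpt}, exactly the minimizer over $\vtheta$ for the current $(\w,b)$.

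Next I would verify the hypotheses of Proposition~\ref{prop:gauss-seidel}. \emph{Uniqueness of the block minima:} for the $\vtheta$-block this is immediate, since Proposition~\ref{prop:directBetasOpt} gives the minimizer in closed form \eqref{directBetasOpt} (for $\w\neq\zero$, which holds for the hinge loss whenever both classes are present); for the $(\w,b)$-block, positive definiteness of every $K_m$ makes $\frac12\sum_m\Vert\w_m\Vert^2/\theta_m$ strictly convex in $\w$ on $\{\vtheta>\zero\}$, so the optimal $\w$ is unique, and the optimal $b$ is unique under the standard non-degeneracy of the SVM solution. \emph{Continuous differentiability of the objective on $\mathcal X$:} this is the delicate ingredient, because the objective is neither finite at $\theta_m=0$ nor smooth at the hinge kink. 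The singularity at $\theta_m=0$ I would remove by showing that the iterates, and all their limit points, stay in the region $\{\vtheta:\theta_m>0\ \forall m\}$ where the quadratic term is $C^1$: the objective decreases monotonically along the algorithm; positive definiteness bounds $\Vert\w\Vert$ on sublevel sets; and, since Algorithm~\ref{alg:direct_wrapper}/\ref{alg:directmkl} is initialized with $\theta_m=(1/M)^{1/p}>0$, the explicit update \eqref{directBetasOpt} together with $\Vert\w_m\Vert^2=\theta_m^2\valpha^\top K_m\valpha$ and $\valpha^\top K_m\valpha>0$ keeps every $\theta_m$ strictly positive and, on the relevant compact set, bounded away from zero. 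The hinge kink I would handle by noting that the non-smooth part of the objective depends only on the single block $(\w,b)$, i.e.\ is block-separable, so that a two-block block-coordinate-descent argument tolerating such a term applies (alternatively, by an elementary smoothing of the hinge loss and a limiting argument).

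Granting this, Proposition~\ref{prop:gauss-seidel} (resp.\ its block-separable two-block variant) yields that every limit point of the sequence is a coordinatewise, hence stationary, point of the problem on $\mathcal X$. Since the objective of \refprimal is jointly convex in $(\w,b,\vtheta)$ on $\{\vtheta>\zero\}$ (the term $\Vert\w_m\Vert^2/\theta_m$ is a perspective function) and $\mathcal X$ is convex, every such point is a global minimizer; thus every limit point of Algorithm~\ref{alg:direct_wrapper} is globally optimal for \refprimal. Finally, for Algorithm~\ref{alg:directmkl} I would observe that, under the assumption that the SVM subproblem is solved exactly at every outer iteration, the interleaved scheme generates exactly the same sequence of pairs $(\valpha^k,\vtheta^k)$ as Algorithm~\ref{alg:direct_wrapper}: the chunking and minor-iteration bookkeeping in lines~3--5 only affects how the exact SVM solution is reached, not the solution itself, and the update in line~9 coincides with \eqref{directBetasOpt}. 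Hence the conclusion transfers verbatim.

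I expect the main obstacle to be exactly the continuous-differentiability requirement of Proposition~\ref{prop:gauss-seidel}: one must rule out $\theta_m\to 0$ along the iterates and cope with the non-differentiability of the hinge loss before the off-the-shelf Gauss--Seidel theorem can be applied. Everything else --- identifying the two blocks, reading uniqueness off positive definiteness and off the closed-form $\vtheta$-update, upgrading stationarity to global optimality via convexity, and equating the two algorithms' iterate sequences --- is routine.
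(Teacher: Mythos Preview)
Your overall strategy --- two-block Gauss--Seidel on the primal, verify the hypotheses of Proposition~\ref{prop:gauss-seidel}, then upgrade stationarity to global optimality via joint convexity, and finally identify the iterate sequences of the two algorithms --- is exactly the paper's. You also correctly isolate the two genuine obstacles: the singularity of $\Vert\w_m\Vert^2/\theta_m$ at $\theta_m=0$, and the non-differentiability of the hinge loss. Where you differ from the paper is in how you propose to remove these obstacles.

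For the hinge kink, the paper does not invoke a block-separable nonsmooth extension of Proposition~\ref{prop:gauss-seidel} or a smoothing argument. Instead it expands the hinge loss into slack variables $\vxi$, enlarging the $(\w,b)$-block to $(\w,b,\vxi)$; the objective then becomes $C\sum_i\xi_i+\tfrac12\sum_m\Vert\w_m\Vert^2/\theta_m$, which is smooth, with the hinge encoded as linear inequality constraints that stay inside the single $(\w,b,\vxi)$-block. This lets the paper apply Proposition~\ref{prop:gauss-seidel} \emph{as stated}, without appealing to any variant.

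For the boundary $\theta_m=0$, the paper does not argue that iterates stay uniformly bounded away from zero. Instead, after observing (via Proposition~\ref{prop:directBetasOpt} and $K_m\succ 0\Rightarrow\w\neq\zero$) that the constraint $\vtheta\ge\zero$ may be replaced by $\vtheta>\zero$, it applies the bijective change of variables $\theta_m\mapsto\log\theta_m$. In the new coordinates the feasible set $\{\vtheta:\Vert\exp(\vtheta)\Vert_p^2\le 1\}$ is closed and convex (since $\Vert\cdot\Vert_p^2\circ\exp$ is convex), and the regularizer becomes $\tfrac12\sum_m\exp(-\theta_m)\Vert\w_m\Vert^2$, which is $C^1$ everywhere. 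The Gauss--Seidel sequences in old and new coordinates coincide, so Proposition~\ref{prop:gauss-seidel} applies directly. Your bounded-away-from-zero argument can in principle be made to work, but as written it is incomplete: restricting to $\{\theta_m\ge\varepsilon\}$ gives a closed set, yet you would still have to show that the $\vtheta$-block minimizer over this restricted set coincides with the analytic update \eqref{directBetasOpt} (i.e., that the constraint $\theta_m\ge\varepsilon$ is never active), uniformly along the whole sequence and at every limit point. The paper's reparametrization sidesteps this entirely.

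In short: same architecture, different plumbing. The paper's two tricks --- slack variables and the $\log$ reparametrization --- are what allow a clean, off-the-shelf invocation of Proposition~\ref{prop:gauss-seidel}; your proposed alternatives are viable but require either an extended coordinate-descent theorem or a more delicate compactness argument than you have sketched.
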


\begin{proof}
If we ignore the numerical speed-ups, %in Algorithm \ref{alg:direct_wrapper}, 
then the Algorithms \ref{alg:direct_wrapper} and \ref{alg:directmkl} coincidence for the hinge loss. Hence, 
it suffices to show the wrapper algorithm converges.

To this aim, we have to transform \refprimal into a form such that the requirements for application of Prop.~\ref{prop:gauss-seidel} are fulfilled.
We start by expanding \refprimal into
\begin{align*}
     \min_{\w, b,\vxi,\vtheta} & \quad  C\sum_{i=1}^n \xi_i+ \frac{1}{2}\sum_{m=1}^M\frac{\Vert\w_m\Vert_{\mathcal{H}_m}^2}{\theta_m}, \\ \text{s.t.} ~ &\quad \forall i: ~ \sum_{m=1}^M\langle\w_m,\psi_m(\x_i)\rangle _{\mathcal{H}_m}+b\geq 1-\xi_i; \quad \vxi\geq 0;  \quad \Vert\vtheta\Vert_p^2 \leq 1; \quad \vtheta\geq {\bf 0},
\end{align*}
thereby extending the second block of variables, $(\w,b)$, into $(\w,b,\vxi)$. Moreover, we note
that after an application of the representer theorem\footnote{Note that the coordinate transformation into $\mathbb R^n$ can be explicitly given in terms of the empirical kernel map \citep{SchMikBurKniMueRaeSmo99}.} \citep{KiWa71} we may without loss of generality assume $\mathcal H_m= \mathbb R^{n}$.

In the problem's current form, the possibility of $\theta_m=0$ while $\w_m\neq 0$ renders the objective function nondifferentiable. 
This hinders the application of Prop.~\ref{prop:gauss-seidel}.
Fortunately, it follows from Prop.~\ref{prop:directBetasOpt} (note that $K_m\succ 0$ implies $\w\neq \zero$) that this case is impossible.
We therefore can substitute the constraint $\vtheta\geq \zero$ by $\vtheta >\zero$ for all $m$. In order to maintain the closeness of the feasible set
we subsequently apply a bijective coordinate transformation $\phi:\mathbb R^M_+\rightarrow\mathbb R^M$ with $\theta_m^{\text{new}}=\phi_m(\theta_m)=\log(\theta_m)$, resulting in the following equivalent problem,
\begin{align*}
     \inf_{\w, b,\vxi,\vtheta} & \quad  C\sum_{i=1}^n \xi_i+ \frac{1}{2}\sum_{m=1}^M\exp(-\theta_m)\Vert\w_m\Vert_{\mathbb R^n}^2, \\ \text{s.t.} ~ &\quad \forall i: ~ \sum_{m=1}^M\langle\w_m,\psi_m(\x_i)\rangle _{\mathbb R^n}+b\geq 1-\xi_i; \quad \vxi\geq 0;  \quad \Vert\exp(\vtheta)\Vert_p^2 \leq 1,
\end{align*}
where we employ the notation $\exp(\vtheta)=\left(\exp(\theta_1),\cdots,\exp(\theta_M)\right)^\top$. 

Applying the Gauss-Seidel method in Eq.~\eqref{eq:seidel_method} to the base problem \refprimalcounter and to the reparametrized problem yields the same sequence of solutions $\{(\w,b,\vtheta)^k\}_{k\in\mathbb N_0}$. The above problem now allows to apply Prop.~\ref{prop:gauss-seidel} for the two blocks of coordinates $\vtheta\in\mathcal X_1$ and $(\w,b,\vxi)\in\mathcal X_2$: the objective is continuously differentiable and the sets $\mathcal X_1$ are closed and convex. To see the latter, note that $\Vert\cdot\Vert_p^2\circ\exp$ is a convex function, since $\Vert\cdot\Vert_p^2$ is convex and non-increasing in each argument \cite[cf., e.g., Section 3.2.4 in][] {BoyVan04}. Moreover, the minima in Eq.~\eqref{eq:seidel_method} are uniquely attained: the $(\w,b)$-step amounts to solving an SVM on a positive definite kernel mixture, and the analytical $\vtheta$-step clearly yields unique solutions as well.

Hence, we conclude that every limit point of the sequence $\{(\w,b,\vtheta)^k\}_{k\in\mathbb N}$ is a stationary point of \refprimal. For a convex problem, this is equivalent to such a limit point being globally optimal.
\end{proof}

In practice, we are facing two problems. First, 
the standard Hilbert space setup necessarily implies that
$\norm{\w_m} \geq 0$ for all $m$.  However in practice this assumption
may often be violated, either due to numerical imprecision or because
of using an indefinite ``kernel'' function. However, for any
$\norm{\w_m}\leq 0$ it also follows that $\theta^\star_m=0$ as long as at least one strictly positive 
$\norm{\w_{m'}}> 0$ exists.
This is because for any $\lambda< 0$ we have
$\lim_{h\rightarrow0,h> 0} \frac{\lambda}{h} = -\infty$.
Thus, for any $m$ with $\norm{\w_m}\leq 0$, we can immediately set
the corresponding mixing coefficients $\theta^\star_m$ to zero.
The remaining $\vtheta$ are then computed according to Equation \eqref{alg:directmkl},
and convergence will be achieved as long as at least one strictly positive 
$\norm{\w_{m'}}> 0$ exists in each iteration.
%As long as at least one strictly positive 
%$\norm{\w_{m'}}^2> 0$ exists, the primal objective will decrease and convergence 
%will be achieved.

%By directly computing the optimal $\vtheta$ for any given $\valpha$,
%it is guaranteed that the (primal) objective cannot increase.  If we
%also ensure that the primal objective decreases (if possible) in each
%$\valpha$-step, then the alternating optimization must converge.  This
%is certainly the case when, for any given $\vtheta$ the corresponding
%optimal $\valpha$ is computed, 
%As witnessed by the superior computational
%performance of the interleaved optimization for the SIP-approach, this
%can be wasteful.

Second, in practice, the SVM problem will only be solved with finite precision,
which may lead to convergence problems.
Moreover, we actually want to improve the $\valpha$ only a little 
bit before recomputing $\vtheta$ since 
computing a high precision solution can be wasteful, as indicated by the superior performance of 
the interleaved algorithms (cf. Sect.~\ref{sec-exectime}).
 This helps to avoid spending a lot of $\valpha$-optimization (SVM training) on
a suboptimal mixture $\vtheta$.
Fortunately, we can overcome the potential convergence problem by ensuring that the primal objective decreases 
within each $\valpha$-step. 
%Then the alternating optimization is guaranteed to converge.
This is enforced in practice, by computing the SVM by a higher precision if needed.
However, in our computational experiments we find that this
precaution is not even necessary: even without it, the algorithm
converges in all cases that we tried (cf. Section \ref{SEC-experiments}).

%However, most SVM optimizers that operate on a kernel expansion with a
%corresponding $\valpha$ variable solve the SVM dual (although this is
%not coercive; see \citep{Chapelle2006}).  Thus improving
%$\valpha$ by running the SVM optimizer ``a bit'' (e.g., performing a
%two-variable optimization of SMO, or optimizing one chink in SVMlight)
%may even increase the primal objective.  This would open the
%possibility for the alternating algorithm to run in circles and void
%the convergence guarantee.  To avoid this, one can implement a loop
%that repeatedly improves $\valpha$ according to the dual objective,
%until an improvement of the primal is also achieved.  Due to convexity
%and smoothness of the SVM problem, we know that this must eventually
%occur.  

Finally, we would like to point out that the proposed block coordinate descent approach lends itself more
naturally to combination with primal SVM optimizers like
\citep{Chapelle2006}, LibLinear \citep{FanChaHsiWanLin08} or Ocas \citep{FraSon08}.  Especially for
linear kernels this is extremely appealing.

\subsection{Technical Considerations}

\subsubsection{Implementation Details}
\label{sec:implementation}

We have implemented the analytic optimization algorithm described in the previous Section, as well as the cutting plane  and 
Newton algorithms by \cite{KloBreSonZieLasMue09}, within the SHOGUN
toolbox \citep{shogun} for regression,
one-class classification, and two-class classification tasks. In addition one
can choose the optimization scheme, i.e., decide whether the interleaved
optimization algorithm or the wrapper algorithm should be applied.
In all approaches any of the SVMs contained in SHOGUN can be used. 
Our implementation can be downloaded from \url{http://www.shogun-toolbox.org}.

In the more conventional family of approaches, the \emph{wrapper algorithms},
an optimization scheme on $\vtheta$ wraps around a single kernel SVM.
Effectively this results in alternatingly solving for $\valpha$ and $\vtheta$.
For the outer optimization (i.e., that on $\vtheta$) SHOGUN offers the three choices listed above.
The semi-infinite program (SIP) uses a traditional SVM to generate
new violated constraints and thus requires a single kernel SVM.
A linear program (for $p=1$) or a sequence of quadratically
constrained linear programs (for $p>1$) is solved via
GLPK\footnote{\url{http://www.gnu.org/software/glpk/}.} or IBM ILOG
CPLEX\footnote{\url{http://www.ibm.com/software/integration/optimization/cplex/}.}.
Alternatively, either an analytic or a Newton update (for $\ell_p$ norms with $p>1$)
step can be performed, obviating the need for an additional mathematical programming software.

The second, much faster approach performs interleaved optimization and thus requires
modification of the core SVM optimization algorithm. It is currently 
integrated into the chunking-based SVRlight and SVMlight.
To reduce the implementation effort, we implement a single
function \texttt{perform\_mkl\_step($\sum_{\valpha}$, obj$_m$)}, that has the arguments
%sum-of-alphas, i.e., 
$\sum_{\valpha}=\sum_{i=1}^n \alpha_i$ and obj$_m$=$\frac{1}{2}\valpha^T K_m \valpha$,
i.e.\ the current linear $\valpha$-term and the SVM objectives for each kernel.
This function is either, in the interleaved optimization case, called
as a callback function (after each chunking step or a couple of SMO
steps), or it is called by the wrapper algorithm (after each SVM
optimization to full precision).
% while alternatingly solving for $\valpha$
%using any SVM as the constraint generator and a (series of) SIP(s) whenever
%the SVM generates new constraints (e.g., after each chunking step or a couple
%of SMO steps or a SVM optimization to full precision).

\paragraph{Recovering Regression and One-Class Classification.}
It should be noted that one-class classification is trivially implemented using
$\sum_{\valpha}=0$ while support vector regression (SVR) is typically performed by internally
translating the SVR problem into a standard SVM classification problem with
twice the number of examples once positively and once negatively labeled with
corresponding $\valpha$ and $\valpha^*$. Thus one needs direct access to
$\valpha^*$ and computes $\sum_{\valpha}=-\sum_{i=1}^{n}
(\alpha_i+\alpha_i^*)\varepsilon - \sum_{i=1}^{n} (\alpha_i - \alpha_i^*)y_i$
\cite[cf.][]{SonRaeSchSch06}. Since this requires modification of the core SVM
solver we implemented SVR only for interleaved optimization and SVMlight.

\paragraph{Efficiency Considerations and Kernel Caching.}
Note that the choice of the size of the kernel cache becomes crucial when
applying MKL to large scale learning applications.\footnote{\emph{Large scale} in the sense, that the data cannot be stored in memory or the computation reaches a maintainable limit. In the case of MKL this can be due both a large sample size or a high number of kernels.}
 While for the wrapper
algorithms only a \emph{single} kernel SVM needs to be solved and thus a single
large kernel cache should be used, the story is different for interleaved
optimization. Since one must keep track of the several partial MKL
objectives obj$_m$, requiring access to individual kernel rows, the same
cache size should be used for all sub-kernels.

%  \item In rare cases, some or all of our proposed optimization algorithms
%    do not converge.  We observe this for:
%    \begin{itemize}
%      \item subcellular localization with close-to-one MKL norm (specifically, $P=32/31$);
%      \item any other data set?
%    \end{itemize}
%  \item This is not in contradiction to our convergence guarantees:
%    instead our investigations show that it can be attributed to
%    difficulties of the utilized SVM solvers to find optimal solutions
%    for some of the mixtures $\vtheta$.  It happens with all SVM solvers
%    that we tried.
%  \item One cause of this problem is that the kernel matrices are ill-conditioned:
%    for all four of the protein subcellular localization data sets, at least 66 of the 69 kernel
%    matrices numerically has at least one negative eigenvalue (albeit on the order
%    of $10^{-12}$).
%  \item As this suggests, convergence can be restored by improving the
%    kernel conditioning, for example by adding a ridge.
%\end{itemize}
%------------------------------------------------------------------------------

\subsubsection{Kernel Normalization}
\label{sec:kernelnormalization}

The normalization of kernels is as important for MKL as the
normalization of features is for training regularized linear or
single-kernel models.  This is owed to the bias introduced by the
regularization: optimal feature / kernel weights are requested to be
small.  This is easier to achieve for features (or entire feature
spaces, as implied by kernels) that are scaled to be of large
magnitude, while downscaling them would require a correspondingly
upscaled weight for representing the same predictive model.  Upscaling
(downscaling) features is thus equivalent to modifying regularizers
such that they penalize those features less (more).  As is common
practice, we here use isotropic regularizers, which penalize
all dimensions uniformly.  This implies that the kernels have to
be normalized in a sensible way in order to represent an
``uninformative prior'' as to which kernels are useful.

There exist several approaches to kernel normalization, of which we
use two in the computational experiments below.  They are
fundamentally different. The first one generalizes the common
practice of standardizing features to entire kernels, thereby directly
implementing the spirit of the discussion above. 
\blue{ In contrast, the
second normalization approach rescales the data points to unit norm in feature space. }
Nevertheless it can have a
beneficial effect on the scaling of kernels, as we argue below.

\paragraph{Multiplicative Normalization.}
As done in \cite{OngZien08}, we multiplicatively normalize the kernels to
have uniform variance of data points in feature space. Formally, we
find a positive rescaling $\rho_m$ of the kernel, such that the
rescaled kernel $\tilde k_m(\cdot,\cdot) = \rho_m k_m(\cdot,\cdot)$ and
the corresponding feature map $\tilde\Phi_m(\cdot) = \sqrt{\rho_m}
\Phi_m(\cdot)$ satisfy
\blue{
\begin{eqnarray*}
   \frac{1}{n} \sum_{i=1}^n \left\Vert \tilde\Phi_m(\x_i) - \tilde\Phi_m(\bar\x) \right\Vert^2 = 1 
\end{eqnarray*}
for each $m = 1,\ldots,M$, where $\tilde\Phi_m(\bar\x) := \frac{1}{n} \sum_{i=1}^n \tilde\Phi_m(\x_i)$
is the empirical mean of the data in feature space.
The above equation can be  equivalently be expressed in terms of kernel functions as
\begin{eqnarray*}
  \frac{1}{n} \sum_{i=1}^n \tilde k_m(\x_i,\x_i) - \frac{1}{n^2} \sum_{i=1}^n \sum_{j=1}^n \tilde k_m(\x_i,\x_j)=1,
\end{eqnarray*}
}
so that the final normalization rule is
 \begin{equation}\label{normvariance}
\k(\x,\bar\x) \longmapsto \frac{k(\x,\bar\x)}{\frac{1}{n}\sum_{i=1}^n k(\x_i, \x_i) - \frac{1}{n^2}\sum_{i,j=1}^n, k(\x_i, \x_j)}.
\end{equation}
\blue{Note that in case the kernel is centered (i.e.\ the empirical mean of the data points lies on
the origin), the above rule simplifies to 
$ \k(\x,\bar\x) \longmapsto {\k(\x,\bar{\x})}/{\frac{1}{n}{\rm tr}(K)}$,
where ${\rm tr}(K):=\sum_{i=1}^n \k(\x_i, \x_i)$ is the trace of the kernel matrix $K$.
}
%\blue{We remark that similar normalizations have been used by other authors, e.g., \cite{BousquetHerrmann2002,ChaRak08}, who optimize a convex combination of kernel matrices $K=\sum_{m=1}^M \theta_m K_m$ subject to $\tilde{\rm tr}(K)\leq 1$, where $\tilde{\rm tr}(K)=\sum_{m=1}^M k(x_i,x_i) +\frac{1}{n}\sum_{i,j=1}^n k(x_i,x_j)$ is the normalized trace.
%To see the equivalence notice that $\tilde{\rm tr}(K)={\rm tr}(K)$ holds for kernels that are normalized according to \eqref{normvariance}.}

\paragraph{Spherical Normalization.}
Frequently, kernels are normalized according to
\begin{align}\label{norm}
\k(\x,\bar\x)\longmapsto\frac{\k(\x,\bar\x)}{\sqrt{ k(\x,\x)k(\bar\x,\bar\x)}}.
\end{align}
After this operation, $\|\x\|=k(\x,\x)=1$ holds for each data point $x$;
this means that each data point is rescaled to lie on the unit sphere.
\blue{Still, this also may have an effect on the scale of the features: a
spherically normalized and centered kernel is also always  multiplicatively 
normalized, because the multiplicative normalization rule becomes
$ \k(\x,\bar\x) \longmapsto {\k(\x,\bar\x)}/{\frac{1}{n}{\rm tr}(K)} = {\k(\x,\bar\x)}/1$.
}
%the above goal that the
%points have unit variance (around their mean),

Thus the spherical
normalization may be seen as an approximation to the above
multiplicative normalization and may be used as a substitute for it.
Note, however, that it changes the data points themselves by eliminating
length information; whether this is desired or not depends on the
learning task at hand. Finally note that both normalizations achieve that the
optimal value of $C$ is not far from $1$.

\subsection{Limitations and Extensions of our Framework}
\label{section-limits}
In this section, we  show the connection of $\ell_p$-norm MKL to 
a formulation based on block norms, point out  
limitations and sketch extensions of our framework.
To this aim let us recall the primal MKL problem \refprimalcounter and consider the special case of $\ell_p$-norm MKL given by
\begin{align}\label{eq:lpmkl}
\hspace{-0.4cm}
     \inf_{\w, b,\vtheta:\vtheta\geq {\bf 0}}  \quad  C\sum_{i=1}^n \loss\left(\sum_{m=1}^M \langle \w_m,\psi_m(\x_i)\rangle _{\mathcal{H}_m}+b,~y_i\right)+ \frac{1}{2}\sum_{m=1}^M\frac{\Vert\w_m\Vert_{\mathcal{H}_m}^2}{\theta_m}, \quad \text{s.t.} \quad   \Vert\vtheta\Vert_p^2 \leq 1. 
\end{align}
The subsequent proposition shows that \eqref{eq:lpmkl} equivalently can be translated into the following mixed-norm formulation,
\begin{align}\label{eq:block-norm}
     \inf_{\w, b}  \quad  \tilde{C}\sum_{i=1}^n \loss\left(\sum_{m=1}^M \langle \w_m,\psi_m(\x_i)\rangle _{\mathcal{H}_m}+b,~y_i\right)+  \frac{1}{2}\sum_{m=1}^M \Vert\w_m\Vert_{\mathcal{H}_m}^q ,
\end{align}
where $q=\frac{2p}{p+1}$, and $\tilde{C}$ is a constant. This has been studied by \cite{BacLanJor04} for $q=1$ and by \cite{SzaGraRak08} for hierarchical penalization. 

\begin{proposition}\label{prop:block-norm}
  Let be $p> 1$, be $\loss$ a convex loss function, and define $q:=\frac{2p}{p+1}$ (i.e.\ $p=\frac{q}{2-q}$). Optimization Problem
  \eqref{eq:lpmkl} and \eqref{eq:block-norm} are equivalent, i.e., for each $C$ there exists a $\tilde{C}> 0$, such that for each optimal solution ($\w^*,b^*,\theta^*$) of OP~\eqref{eq:lpmkl} using $C$, we have that ($\w^*,b^*$) is also optimal in OP~\eqref{eq:block-norm} using $\tilde{C}$, and vice versa.
\end{proposition}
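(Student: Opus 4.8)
The plan is to eliminate the mixing coefficients $\vtheta$ from \eqref{eq:lpmkl} and to see that what is left is \eqref{eq:block-norm} up to the value of the trade-off constant. Write $L(\w,b):=\sum_{i=1}^n\loss\big(\sum_{m=1}^M\langle\w_m,\psi_m(\x_i)\rangle_{\mathcal H_m}+b,\,y_i\big)$ for the convex loss term common to both problems, and assume, as in Proposition~\ref{prop:directBetasOpt}, the non-degenerate case $\w^\star\neq\zero$ (otherwise both problems collapse to $\inf_b L(\zero,b)$). Fixing $(\w,b)$ with $\w\neq\zero$ and invoking Proposition~\ref{prop:directBetasOpt}, which applies verbatim since \eqref{eq:lpmkl} is \refprimal specialized to the $\ell_p$-norm, the minimizing coefficient is $\theta_m^\star=\Vert\w_m\Vert^{2/(p+1)}/\big(\sum_{m'}\Vert\w_{m'}\Vert^{2p/(p+1)}\big)^{1/p}$. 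Substituting this into the regularizer $\tfrac12\sum_m\Vert\w_m\Vert_{\mathcal H_m}^2/\theta_m$ and using $2-\tfrac{2}{p+1}=\tfrac{2p}{p+1}=q$ together with $1+\tfrac1p=\tfrac{2}{q}$, the sum collapses to $\tfrac12\big(\sum_m\Vert\w_m\Vert_{\mathcal H_m}^q\big)^{2/q}$. Thus \eqref{eq:lpmkl} is equivalent to the reduced problem $\inf_{\w,b}\,C\,L(\w,b)+\tfrac12\big(\sum_m\Vert\w_m\Vert_{\mathcal H_m}^q\big)^{2/q}$.

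Next I would put the reduced problem and \eqref{eq:block-norm} into the common form $\inf_{\w,b}\,c\,L(\w,b)+g\big(\rho(\w)\big)$, where $\rho(\w):=\big(\sum_m\Vert\w_m\Vert_{\mathcal H_m}^q\big)^{1/q}$ is a genuine (block-$\ell_q$) norm on $\mathcal H_1\times\cdots\times\mathcal H_M$ and $g$ is continuous, strictly increasing and vanishes at $0$: namely $g(t)=\tfrac12 t^2$ with $c=C$ for the reduced problem, and $g(t)=\tfrac12 t^q$ with $c=\tilde C$ for \eqref{eq:block-norm} (both are convex, since $t\mapsto t^2$ and $t\mapsto t^q$ are convex and nondecreasing on $[0,\infty)$ for $q>1$). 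Exactly as in the proof of Theorem~\ref{th:reg-obj}, using that all norms on the finite-dimensional space obtained after the representer theorem are equivalent, I would check that at the optimum the regularizer is \emph{tight} whenever $\w^\star\neq\zero$, so that condition \eqref{eq:constr_active} of Proposition~\ref{prop:pareto} holds; Proposition~\ref{prop:pareto} then converts each of the two problems into the common Ivanov form $\inf_{\w,b:\,\rho(\w)\le r}L(\w,b)$, and note that the positive constant $c$ and the strictly increasing wrapper $g$ do not affect the minimizing $(\w,b)$ once the constraint is written directly as a bound on $\rho(\w)$.

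It then remains to match the trade-off constants. Given $C$, let $r^\star:=\rho(\w^\star)>0$ at an optimal solution of the reduced problem; by the previous step every optimal $(\w,b)$ of \eqref{eq:lpmkl} solves $\inf_{\rho(\w)\le r^\star}L(\w,b)$, and since $\rho(\w)\le r^\star$ is the same constraint as $\rho(\w)^q\le(r^\star)^q$, the Ivanov$\to$Tikhonov direction of Proposition~\ref{prop:pareto} (applied with the $q$-th power form of the constraint; Slater holds because $\rho(\zero)=0<r^\star$) returns a multiplier $\mu^\star>0$ for which this Ivanov problem is equivalent to $\inf_{\w,b}\,L(\w,b)+\mu^\star\rho(\w)^q$; rescaling the objective turns this into $\inf_{\w,b}\,\tilde C\,L(\w,b)+\tfrac12\rho(\w)^q$ for a suitable $\tilde C>0$, i.e.\ exactly \eqref{eq:block-norm}, and a matching $\theta^\star$ is then recovered from any optimal $(\w^\star,b^\star)$ of \eqref{eq:block-norm} via the closed form of Proposition~\ref{prop:directBetasOpt}; the converse direction, from a given $\tilde C$ to a suitable $C$ (using the $2$nd power form of the constraint), is symmetric. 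The exponent bookkeeping in the first step is routine; the main obstacle is the second and third steps -- verifying the tightness hypothesis of Proposition~\ref{prop:pareto} for both reformulations and carefully tracking the correspondence between $C$ and $\tilde C$ through the shared Ivanov problem, which is essentially the bookkeeping already carried out for the $(\tilde C,\tau)$ pair in the proof of Theorem~\ref{th:reg-obj}.
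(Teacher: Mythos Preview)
Your proposal is correct and, in fact, more careful than the paper's own argument. Both you and the paper begin identically: eliminate $\vtheta$ via Proposition~\ref{prop:directBetasOpt}. The paper then writes the resulting objective as $C\,L(\w,b)+\tfrac{1}{2\zeta}\sum_m\Vert\w_m\Vert^{q}$ and simply declares $\tilde C:=\zeta C$, which glosses over the fact that $\zeta=\zeta(\w)$ depends on $\w$; strictly speaking one still owes an argument that the minimizer $\w^\star$ of the $\w$-dependent-$\zeta$ objective is also a minimizer of the constant-$\tilde C$ objective. You avoid this sleight of hand by recognizing that the reduced regularizer is exactly $\tfrac12\rho(\w)^2$ with $\rho$ the $\ell_q$-block norm, and then passing through a common Ivanov form via Proposition~\ref{prop:pareto} to convert between the $\rho^2$- and $\rho^q$-penalized Tikhonov problems. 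This buys you a clean justification for the change of exponent at the price of the extra bookkeeping you flag (tightness of the Ivanov constraint when $\w^\star\neq\zero$), which is indeed the same verification carried out in Theorem~\ref{th:reg-obj}.

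A shorter alternative that makes the paper's substitution rigorous, and which you might prefer: once you have the reduced problem $\inf_{\w,b}\,C\,L+\tfrac12\rho(\w)^2$, compare first-order optimality conditions at $\w^\star\neq\zero$ with those of $\inf_{\w,b}\,\tilde C\,L+\tfrac12\rho(\w)^q$. Since $\nabla(\tfrac12\rho^2)(\w^\star)=\rho(\w^\star)\,\nabla\rho(\w^\star)$ and $\nabla(\tfrac12\rho^q)(\w^\star)=\tfrac q2\rho(\w^\star)^{q-1}\nabla\rho(\w^\star)$ are positive scalar multiples of one another (here $q>1$ so $\rho$ is differentiable away from zero), the two stationarity conditions coincide for $\tilde C=\tfrac{q}{2}\,C\,\rho(\w^\star)^{\,q-2}>0$, and convexity upgrades stationarity to global optimality. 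This bypasses the tightness check entirely and gives the explicit correspondence $\tilde C\leftrightarrow C$ in one line.
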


\begin{proof}
\blue{
%We begin by applying Theorem~\ref{th:reg-obj} to rephrase  \refprimal  as
%$$
%  \inf_{\w, b,\vtheta:\vtheta\geq{\bf 0}} \quad  \tilde{C}\sum_{i=1}^n \loss\left(\sum_{m=1}^M \langle \w_m,\psi_m(\x_i)\rangle _{\mathcal{H}_m}+b,~y_i\right)+ \frac{1}{2}\sum_{m=1}^M\frac{\Vert\w_m\Vert_{\mathcal{H}_m}^2}{\theta_m} + \mu\Vert\vtheta\Vert_p^2.
%$$
%Setting the partial derivatives w.r.t.~ $\vtheta$ to zero, 
%we obtain the following equation at optimality:
%\begin{equation}\label{eq:w_optimality}
%   -\frac{\Vert\w_m\Vert_{\mathcal{H}_m}^2}{2\theta_m^2} + \beta\cdot\theta_m^{p-1}\Vert\vtheta\Vert_p^{2-p}=0,   \quad \forall m=1,\ldots,M. 
%\end{equation}
%Hence, Eq. \eqref{eq:w_optimality} translates into the following optimality condition on $\w$ and $\vtheta$:
From Prop.~\ref{prop:directBetasOpt} it follows that for any fixed $\w$  in \eqref{eq:lpmkl} it holds for the $\w$-optimal $\vtheta$:
\begin{equation*}
      \exists \zeta : ~ ~ \theta_m = \zeta\Vert\w_m\Vert_{\mathcal{H}_m}^{\frac{2}{p+1}},   \quad \forall m=1,\ldots,M.
\end{equation*}
Plugging the above equation into \eqref{eq:lpmkl} yields 
\begin{align}
     \inf_{\w, b}  \quad  C\sum_{i=1}^n \loss\left(\sum_{m=1}^M \langle \w_m,\psi_m(\x_i)\rangle _{\mathcal{H}_m}+b,~y_i\right)+ \frac{1}{2\zeta}\sum_{m=1}^M \Vert\w_m\Vert_{\mathcal{H}_m}^{\frac{2p}{p+1}} .
\end{align}
Defining $q:=\frac{2p}{p+1}$ and $\tilde{C} := \zeta C$ results in \eqref{eq:block-norm}.
}
\end{proof}

Now, let us take a closer look on the parameter range of 
$q$. It is easy to see that when we vary $p$ in the real interval $[1,\infty]$, then $q$ is limited to range in $[1,2]$. 
So in other words the methodology presented in this paper only covers the $1\leq q \leq 2$ block norm case.
%This raises the question whether we can derive an efficient wrapper-based optimization strategy for the case of $q>2$. 
%A framework by 
%covers the case $q\geq 2$, although their method more generally aims at hierarchical kernel learning. 
%Note, that $q\leq 2$ and hence $\ell_p$-norm MKL is not covered by their approach. 
\blue{
However, from an algorithmic perspective our framework can be easily extended to the $q>2$ case:  
although originally aiming at the more sophisticated case of hierarchical kernel learning,
\cite{AflBenBhaNatRamJMLR} showed in particular that for $q\geq 2$, Eq.~\eqref{eq:block-norm} is equivalent to 
\begin{align}\label{eq:mixednorm}
     \sup_{\vtheta:\vtheta\geq {\bf 0},\Vert\vtheta\Vert^2_{\p}\leq 1} ~ \inf_{\w, b}  \quad  \tilde{C}\sum_{i=1}^n \loss\left(\sum_{m=1}^M \langle \w_m,\psi_m(\x_i)\rangle _{\mathcal{H}_m}+b,~y_i\right)+  \frac{1}{2}\sum_{m=1}^M\theta_m\Vert\w_m\Vert_{\mathcal{H}_m}^2 ,
\end{align}
where $\p:=\frac{q}{q-2}$.}
%Note that despite the similarity to $\ell_p$-norm MKL, the 
%above problem significantly differs from $\ell_p$-norm MKL 
%for two reasons. First, obvious differences such as 
%the mixing coefficients $\vtheta$ appearing in the nominator 
%and the consequential maximization w.r.t.~ $\vtheta$, render
%the above problem a min-max problem.
Note the difference to $\ell_p$-norm MKL:
the mixing coefficients $\vtheta$ appear in the nominator 
and by varying $\p$ in the interval 
$[1,\infty]$, the range of $q$ in the interval $[2,\infty]$
can be obtained, which explains why this method is complementary 
to ours, where $q$ ranges in $[1,2]$. 

%Using the hinge loss, Eq. \eqref{eq:mixednorm} can be partially
%dualized w.r.t.\ fixed $\vtheta$, resulting in a convex optimization
%problem  \citep[][p.~76]{BoyVan04}
%\begin{align}\label{eq:mixeddual}
%  \max_{\valpha,\vtheta} \quad & \one^\top\valpha -\frac{1}{2}\valpha^\top\sum_{m=1}^M\frac{Q_m}{\theta_m}\valpha \\
%  \text{s.t.} \quad & \zero\leq\valpha\leq \C\vec 1; \quad \y^\top\valpha=0;\quad
%\vtheta\geq 0; \quad \Vert\vtheta\Vert_p^2 \leq 1 \nonumber,
%\end{align}
%where, as in the previous sections, we denote $Q_j=YK_jY$ and and 
%$Y={\rm diag}({\bf y})$.
\blue{
It is straight forward to show that for every fixed (possibly suboptimal) pair $(\w,b)$ the optimal $\vtheta$ is given by
$$ \theta_m = \frac{\Vert\w_m\Vert_{\mathcal{H}_m}^{\frac{2}{r-1}}}{\left(\sum_{m'=1}^M\Vert\w_{m'}\Vert_{\mathcal{H}_{m'}}^{\frac{2r}{r-1}}\right)^{{1/r}}},   
  \quad ~ \forall m=1,\ldots,M.$$
The proof is analogous to that of Prop.~\ref{prop:directBetasOpt} and the above analytical update formula can be
used to derive a block coordinate descent algorithm that is analogous to ours.
In our framework,  the mixings $\theta$, however, appear in the denominator of the objective function of \refprimal. Therefore, the corresponding update formula in our framework is 
\begin{equation}\label{direct_temp}
   \theta_m = \frac{\Vert\w_m\Vert_{\mathcal{H}_m}^{\frac{-2}{r-1}}}{\left(\sum_{m'=1}^M\Vert\w_{m'}\Vert_{\mathcal{H}_{m'}}^{\frac{-2r}{r-1}}\right)^{{1/r}}},   
  \quad ~ \forall m=1,\ldots,M.
\end{equation}
This shows that
%, although our SHOGUN implementation originally did not aim at optimizing the range of $q\in [2,\infty]$, 
we can simply optimize $2<q\leq\infty$-block-norm MKL within our 
computational framework, using the update formula \eqref{direct_temp}.
}
%We omit the derivations which are analogous to those presented in Section~\ref{opt_alex}.

%We note that investigating the performance of the algorithm 
%by  \cite{AflBenBhaNatRamJMLR} for the special case of a non-hierachical setup, as considered here, is a interesting  
%topic of further MKL research.
%In the light of toy experiment of the subsequent section, we conjecture that in some cases the  
%prediction accuracies could be leveraged over $\ell_p$-norm when the optimal solution is almost flat, albut not completely flat (cf. Section~\ref{sectoy1}).

%==============================================================================
% EXPERIMENTS
%==============================================================================

\section{Computational Experiments}\label{SEC-experiments}

In this section we study non-sparse MKL in terms of computational
efficiency and predictive accuracy. 
%Throughout all our experiments
%both $\ell_p$-norm MKL implementations, presented in Sections
%\ref{opt_alex} 
%and \ref{cuttingplane}, perform comparably.  
We apply the method of \cite{SonRaeSchSch06} in the case of $p=1$.
%, as it
%is recovered as a special case of our cutting plane strategy. 
We write $\ell_\infty$-norm MKL for a regular SVM with the unweighted-sum
kernel $K=\sum_m K_m$.

We first study a toy problem in Section 
\ref{sectoy1} where we have full control over the distribution of the relevant 
information in order to shed light on the appropriateness of sparse, non-sparse, and
$\ell_\infty$-MKL. We report on 
real-world problems from bioinformatics, namely
protein subcellular localization (Section \ref{sec:prot}), 
 finding transcription start sites of
RNA Polymerase II binding genes in genomic DNA sequences (Section
\ref{sec:tss}), and  
reconstructing metabolic gene networks (Section \ref{sec:bleakley}).

%Complementarily, for a more complete view of the empirical picture,
%we would like to mention empirical results of other researchers which have been experimenting with
%non-sparse MKL. %Since the number of papers concerning empirical evaluations is steadily growing, we restrict us
%the rather early ones. 
%%We refer the interested reader to the following papers:
%%\cite{CorMohRos09a} study $\ell_2$-norm MKL in regression tasks on Reuters and sentiment analysis dataset. 
%\cite{Yuetal} studies $\ell_2$-norm for six real-world bioinformatics scenarios such as clinical decision support in cancer
%diagnosis and disease relevant gene prioritization. On all six data sets $\ell_2$-norm shows advantageous over an unweighted-sum kernel SVM.
%\cite{Yanetal} apply $\ell_2$-norm MKL to image and video classification tasks and  \cite{nakayimaetal} study $\ell_p$-norm MKL for multi-label image categorization
%both showing improvements over $\ell_{1}$- and $\ell_{\infty}$-norm MKL baselines.
%%All those papers show an improvement of $\ell_2$-norm MKL over sparse MKL and the unweighted sum kernel SVM.

\subsection{Measuring the Impact of Data Sparsity---Toy Experiment}\label{sectoy1}

%The following three Definitions are only used in this subsection.
\newcommand{\truevtheta}{{\vtheta}}
\newcommand{\truethetai}{{\theta_i}}
\newcommand{\mklvtheta}{{\widehat{\vtheta}}}

The goal of this section is to study the relationship of the level of sparsity
of the true underlying function to be learned to the chosen norm $p$ in the model.
\blue{Intuitively, we might expect that the optimal choice of $p$ directly corresponds to
the true level of sparsity.} Apart from verifying this conjecture,
we are also interested in the effects of suboptimal choice of $p$.
To this aim we constructed several artificial data sets in which we vary 
the degree of sparsity in the true kernel mixture coefficients.
We go from having all weight focussed on a single kernel (the highest level of sparsity)
to uniform weights (the least sparse scenario possible) in several steps.
We then study the statistical performance of $\ell_p$-norm MKL for different
values of $p$ that cover the entire range $[1,\infty]$.
\begin{figure}[t]
  \centering
    \includegraphics[width=0.5\textwidth]{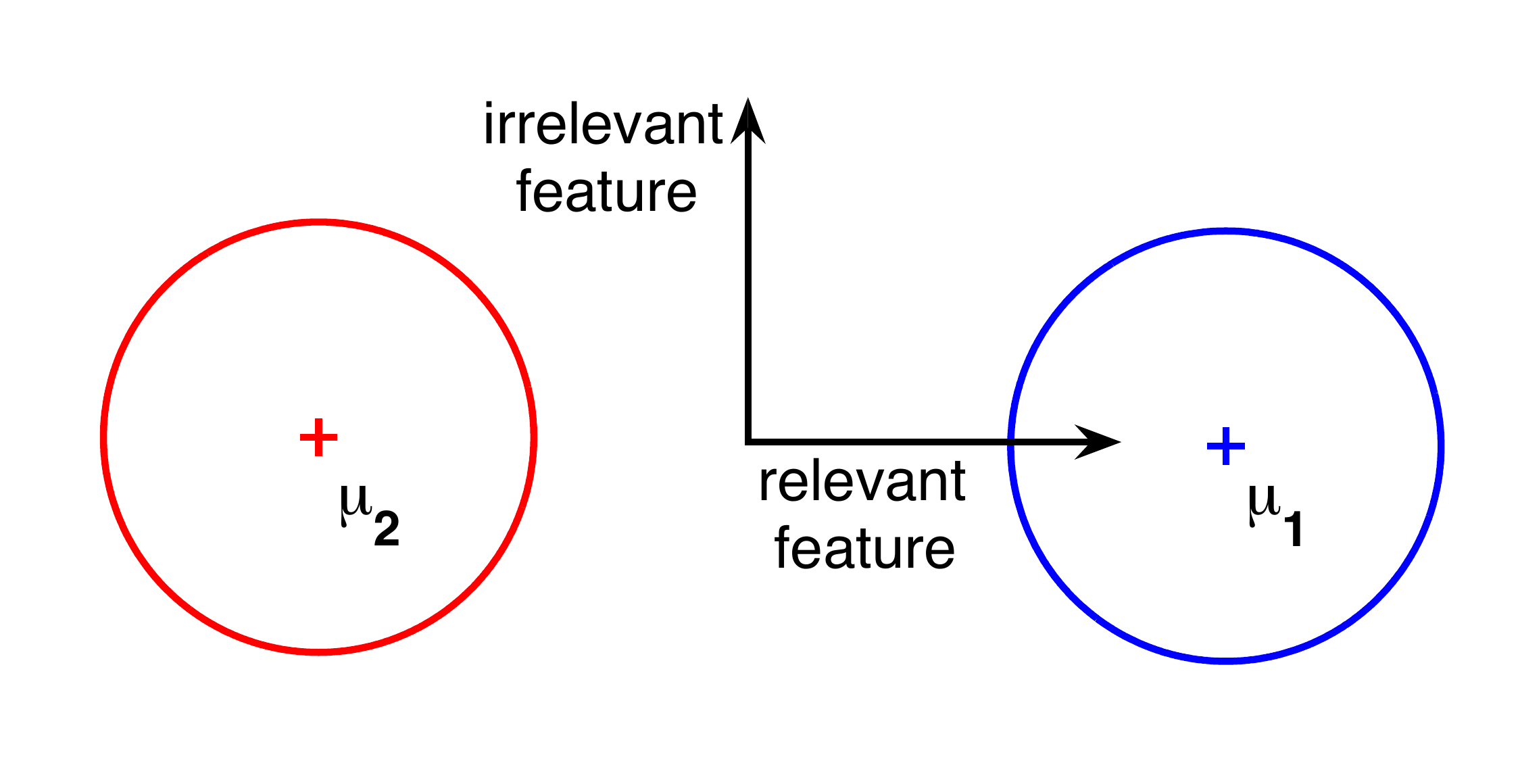}
    \caption{\blue{Illustration of the toy experiment for $\truevtheta=(1,0)^\top$.\label{fig:illustr_toy}}}
\end{figure}

We generate an $n$-element balanced sample
$\mathcal{D}=\{(\x_i,y_i)\}_{i=1}^n$ from two $d=50$-dimensional
isotropic Gaussian distributions with equal covariance matrices
$C=I_{d\times d}$ and equal, but opposite, means 
$\mu_1=\frac{\rho}{\Vert\truevtheta\Vert_2}\truevtheta$ and
$\mu_2=-\mu_1$.  Thereby $\truevtheta$ is a
binary vector, i.e., $\forall i:\truethetai\in\{0,1\}$, encoding the true underlying
data sparsity as follows. Zero components $\truethetai=0$ clearly imply
identical means of the two classes' distributions in the $i$th feature
set; hence the latter does not carry any discriminating information.
In summary, the fraction of zero components,
$\nu(\truevtheta)=1-\frac{1}{d}\sum_{i=1}^d \truethetai$, is a measure for
the feature sparsity of the learning problem.
%(see Fig.~\ref{fig:illustr_toy}).

For several values of $\nu$ we generate $m=250$ data sets $\mathcal{D}_1,\ldots,\mathcal{D}_m$ fixing $\rho=1.75$. 
Then, each feature is input to a linear kernel and the resulting kernel matrices are multiplicatively normalized as described
in Section \ref{sec:kernelnormalization}.
Hence, $\nu(\truevtheta)$ gives the fraction of noise kernels in the working kernel set.
Then, classification models are computed by training $\ell_p$-norm MKL for $p=1,4/3,2,4,\infty$ on each $\mathcal D_i$. Soft  margin parameters $C$ are tuned on independent $10,000$-elemental validation sets by grid search over $C\in 10^{[-4,3.5,\ldots,0]}$ (optimal $C$s are attained in the interior of the grid). 
The relative duality gaps were optimized up to a precision of $10^{-3}$.
We report on test errors evaluated on $10,000$-elemental independent test sets and
 pure mean $\ell_2$ model errors of the computed kernel mixtures, that is ${\rm ME}(\mklvtheta)=\Vert\zeta(\mklvtheta)-\zeta(\truevtheta)\Vert_2$, where $\zeta(\x)=\frac{\x}{\Vert\x\Vert_2}$.

The results are shown in Fig.~\ref{toy-gap50}  for $n=50$ and $n=800$, where the figures on the left show the test errors and the ones on the right the model errors  ${\rm ME}(\mklvtheta)$. Regarding the latter, model errors reflect the corresponding test errors for $n=50$. This observation can be explained by 
statistical learning theory. 
The minimizer of the empirical risk performs unstable for small sample sizes 
and the model selection results in a strongly regularized hypothesis, 
leading to the observed agreement between test error and model error.

\begin{figure}[t]
  \subfigure[]{
    \includegraphics[width=0.49\textwidth]{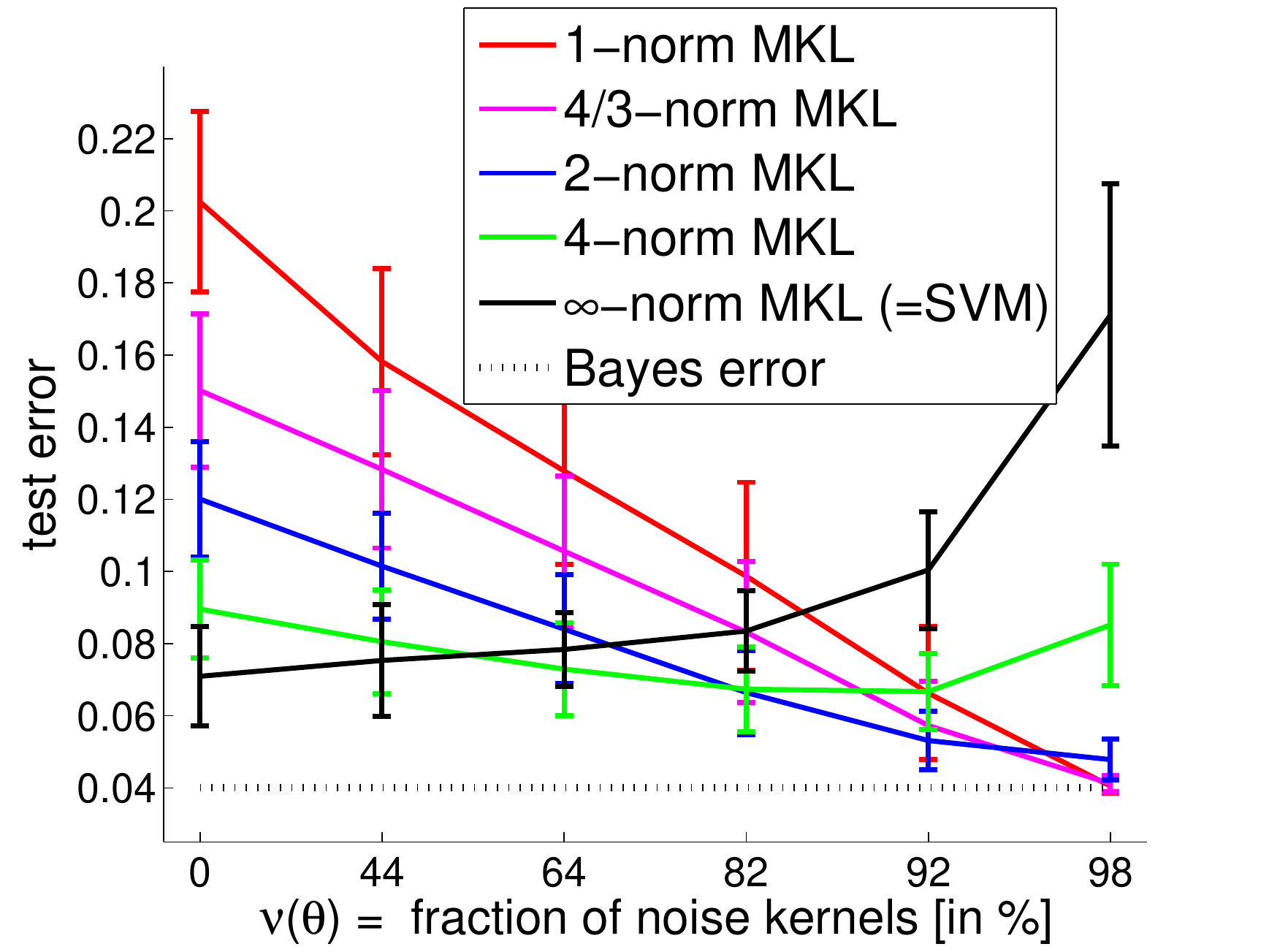}
  }
  \subfigure[]{
    \includegraphics[width=0.5\textwidth]{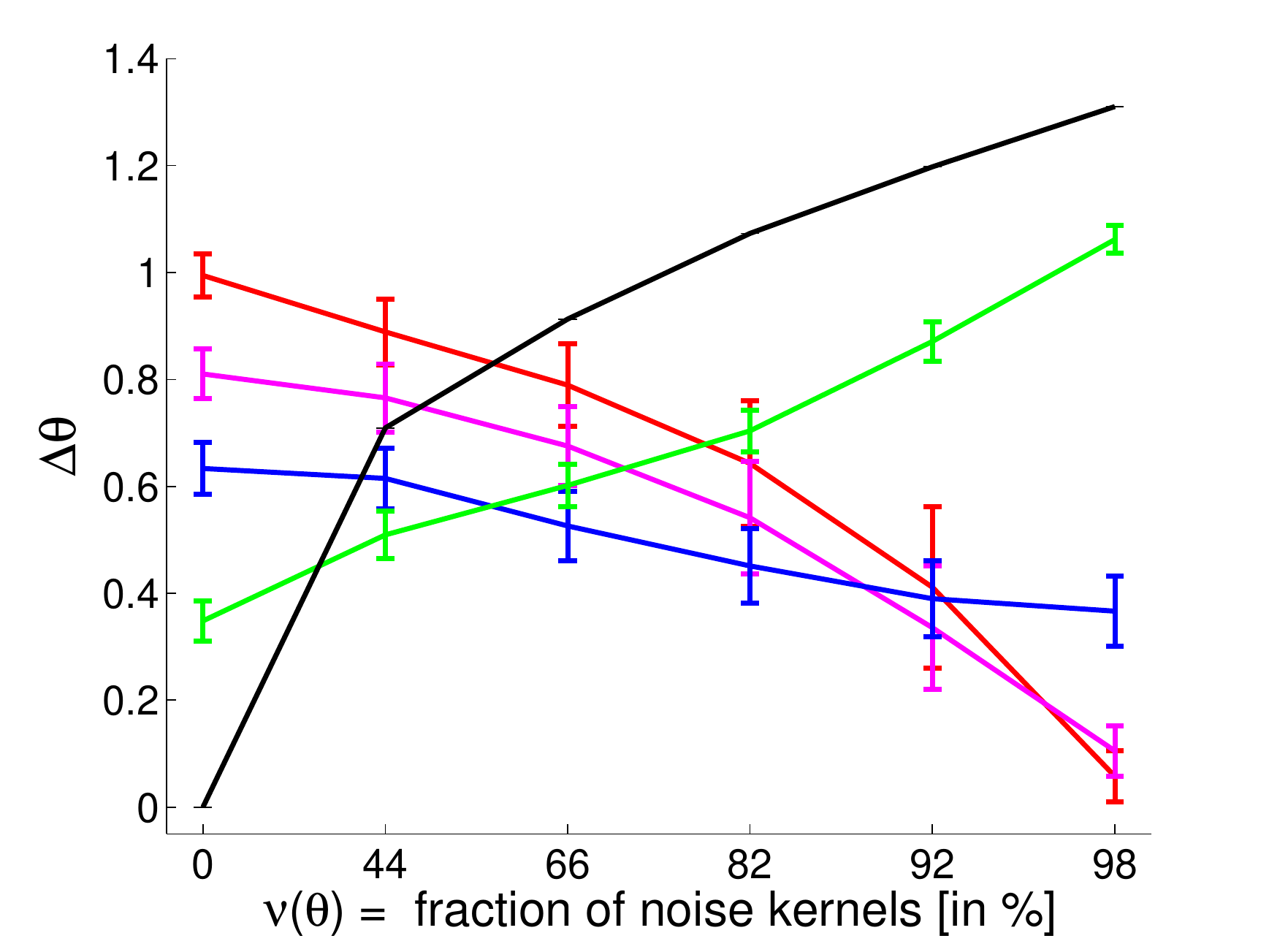}
  }
  \subfigure[]{
    \includegraphics[width=0.49\textwidth]{test_error}
  }
  \subfigure[]{
    \includegraphics[width=0.5\textwidth]{delta_weights}
  }
  \caption{\small{
  %(a) Illustration of artificial data set: two Gaussians with opposing means have been sampled. The sparse vector $\vtheta$ parametrizes the data generation.
    Results of the artificial experiment for sample sizes of $n=50$ (top) and $n=800$ (below) training instances in terms of test errors (left) and mean $\ell_2$ model errors ${\rm ME}(\mklvtheta)$ (right).}}
  \label{toy-gap50}
\end{figure}

\blue{Unsurprisingly, $\ell_1$ performs best and reaches the Bayes error in 
the sparse scenario, where only a single kernel carries the whole discriminative information of the learning problem.
However, in the other scenarios it mostly performs worse than the other MKL variants.
This is remarkable because the underlying ground truth, i.e.\ the vector $\truevtheta$, is sparse in all but the uniform scenario.
In other words, selecting this data set may imply a bias towards $\ell_1$-norm.}
\blue{In contrast, the vanilla SVM using an unweighted sum kernel performs best when all kernels are equally informative, 
however, its performance does not approach the Bayes error rate.
This is because it corresponds to a $\ell_{2,2}$-block norm regularization (see Sect.~\ref{section-limits}) but for a truly uniform regularization
a $\ell_\infty$-block norm penalty \cite[as employed in][]{Nathetal09} would be needed.
This indicates a limitation of our framework; it shall, however, be kept in mind that such a uniform scenario might quite artificial.}
The non-sparse $\ell_4$- and $\ell_2$-norm MKL variants perform best in the balanced scenarios, i.e., when the noise level is ranging in the interval 64\%-92\%. Intuitively, the non-sparse $\ell_4$-norm 
MKL is the most robust MKL variant, achieving a test error of less than $10\%$ in all scenarios. 
Tuning the sparsity parameter $p$ for each experiment,   
$\ell_p$-norm MKL achieves the  lowest test error across all scenarios.

When the sample size is increased to $n=800$ training instances, test errors decrease significantly. 
Nevertheless, we still observe differences of up to 1\% test error between the best ($\ell_\infty$-norm MKL) and worst ($\ell_1$-norm MKL) prediction model 
in the two most non-sparse scenarios. Note that all $\ell_{p}$-norm MKL variants perform well in the sparse scenarios.
In contrast with the test errors, the mean model errors depicted in Figure
\ref{toy-gap50} (bottom, right) are relatively high. 
Similarly to above reasoning, this discrepancy can be explained by the minimizer of the empirical risk becoming stable when increasing the sample size (see theoretical Analysis in Appendix~\ref{sect:bounds}, where we show that speed of the minimizer becoming stable is $O(1/\sqrt{n})$).  
Again, $\ell_p$-norm MKL  achieves the smallest test error for all 
scenarios for appropriately chosen $p$ and for a fixed $p$ across all
experiments, the non-sparse $\ell_4$-norm MKL performs the most robustly. 

In summary, the choice of the norm parameter $p$ is important for small 
sample sizes, whereas its impact decreases with an increase of 
the training data. 
As expected, sparse MKL performs best in sparse scenarios, 
while non-sparse MKL performs best in moderate or non-sparse scenarios, and
for uniform scenarios the unweighted-sum kernel SVM performs best.
For appropriately tuning the norm parameter, $\ell_p$-norm MKL proves 
robust in all scenarios.

%------------------------------------------------------------------------------
\subsection{Protein Subcellular Localization---a Sparse Scenario}
%------------------------------------------------------------------------------
\label{sec:prot}
The prediction of the subcellular localization of proteins is one of
the rare empirical success stories of $\ell_1$-norm-regularized MKL
\citep{OngZien08,ZieOng07}: after defining 69 kernels that capture
diverse aspects of protein sequences, $\ell_1$-norm-MKL could raise
the predictive accuracy significantly above that of the unweighted sum
of kernels, and thereby also improve on established prediction systems
for this problem.  This has been demonstrated on 4 data sets,
corresponding to 4 different sets of organisms (plants, non-plant
eukaryotes, Gram-positive and Gram-negative bacteria) with differing
sets of relevant localizations.  In this section, we investigate the
performance of non-sparse MKL on the same 4 data sets.

We downloaded the kernel matrices of all 4 data sets\footnote{Available
 from \url{http://www.fml.tuebingen.mpg.de/raetsch/suppl/protsubloc/}}.
The kernel matrices are multiplicatively normalized as described
in Section \ref{sec:kernelnormalization}.
The experimental setup used here is related to that of
\cite{OngZien08}, although it deviates from it in several details.
For each data set, we perform the following steps for each of the 30
predefined splits in training set and test set (downloaded from the
same URL): We consider norms $p \in \{ 1, 32/31, 16/15, 8/7, 4/3, 2, 4, 8, \infty \}$ and regularization constants $C \in \{1/32,1/8,1/2,1,2,4,8,32,128\}$.
For each parameter setting  $(p,C)$, we train $\ell_p$-norm MKL using a 1-vs-rest 
strategy on the training set. 
The predictions on the test set are then evaluated
w.r.t.\ average (over the classes) MCC (Matthews correlation coefficient).
\blue{As we are only interested in the influence of the norm on the
performance, we forbear proper cross-validation (the so-obtained systematical error affects all norms equally).
Instead, for each of the 30 data splits and for each $p$,
the value of $C$ that yields the highest MCC is selected.
Thus we obtain an optimized $C$ and $MCC$ value for each
combination of data set, split, and norm $p$.
For each norm, the final $MCC$ value is obtained by
averaging over the data sets and splits (i.e., $C$ is selected to be optimal for each data set and split).
}

The results, shown in Table \ref{tab-psl}, indicate that indeed, with
proper choice of a non-sparse regularizer, the accuracy of
$\ell_1$-norm can be recovered.  
%This is remarkable, as this data set
%is particular in that it fulfills the rare condition that
%$\ell_1$-norm MKL performs better than $\ell_\infty$-norm MKL.  
%In other words, selecting these data may imply a bias towards
%$\ell_1$-norm.  
On the other hand, non-sparse MKL can approximate the
$\ell_1$-norm arbitrarily close, and thereby approach the same
results.  However, even when $1$-norm is clearly superior to
$\infty$-norm, as for these 4 data sets, it is possible that
intermediate norms perform even better.  As the table shows, this is
indeed the case for the PSORT data sets, albeit only slightly and not
significantly so.

\begin{table}[t]
%{\small
\begin{center}
\caption{ \label{tab-psl} \small{
  Results for Protein Subcellular Localization.
  For each of the 4 data sets (rows) and each considered norm (columns),
  we present a measure of prediction error together with its
  standard error.  As measure of prediction error we use 1 minus
  the average MCC, displayed as percentage.}
}
\medskip
\begin{tabular}{l|r|r|r|r|r|r|r|r|r|r}
{\bf $\ell_p$-norm} & $1$ & $32/31$ & $16/15$ & $8/7$ & $4/3$ & $2$ & $4$ & $8$ & $16$ & $\infty$ \\
\hline
\hline \mbox{{\bf plant}} & {\bf 8.18} & 8.22 & 8.20 & 8.21 & 8.43 & 9.47 & 11.00 & 11.61 & 11.91 & 11.85 \\
\quad std.~err. 
& $\pm$0.47 & $\pm$0.45 & $\pm$0.43 & $\pm$0.42 & $\pm$0.42 & $\pm$0.43 & $\pm$0.47 & $\pm$0.49 & $\pm$0.55 & $\pm$0.60 \\
\hline {\bf nonpl} & {\bf 8.97} & 9.01 & 9.08 & 9.19 & 9.24 & 9.43 & 9.77 & 10.05 & 10.23 & 10.33 \\
\quad std.~err. 
& $\pm$0.26 & $\pm$0.25 & $\pm$0.26 & $\pm$0.27 & $\pm$0.29 & $\pm$0.32 & $\pm$0.32 & $\pm$0.32 & $\pm$0.32 & $\pm$0.31 \\
\hline {\bf psortNeg} & 9.99 & 9.91 & {\bf 9.87} & 10.01 & 10.13 & 11.01 & 12.20 & 12.73 & 13.04 & 13.33 \\
\quad std.~err. 
& $\pm$0.35 & $\pm$0.34 & $\pm$0.34 & $\pm$0.34 & $\pm$0.33 & $\pm$0.32 & $\pm$0.32 & $\pm$0.34 & $\pm$0.33 & $\pm$0.35 \\
\hline {\bf psortPos} & 13.07 & {\bf 13.01} & 13.41 & 13.17 & 13.25 & 14.68 & 15.55 & 16.43 & 17.36 & 17.63 \\
\quad std.~err. 
& $\pm$0.66 & $\pm$0.63 & $\pm$0.67 & $\pm$0.62 & $\pm$0.61 & $\pm$0.67 & $\pm$0.72 & $\pm$0.81 & $\pm$0.83 & $\pm$0.80 \\
\end{tabular}
%}
\end{center}
\end{table}

%%% \begin{figure}[t]
%%%   \centering
%%%   %\vspace*{-4cm}
%%%   \includegraphics[width=0.4\columnwidth]{figures/plant,avgAUC}
%%%   %\vspace*{-4cm}
%%%   \caption{\label{fig:subloc}
%%%     Predictive error for protein subcellular localization on the plant
%%%     data set by two measures (left: 1-avg MCC; right 1-avg AUC).
%%%     \comment{So far only averaged over 4 splits.
%%%     AUC seems unreliable. Would a table be better?}
%%%   }
%%% \end{figure}

%We briefly mention that the superior performance of $\ell_{p\approx 1}$-norm MKL in this setup is not surprising: 
%\cite{OngZien08} studied single kernel SVMs for each kernel individually and found that 
%depending on the studied subcellular localization data set, there are about $19$--$53$ hardly relevant kernels. 
%Hence, those noise kernels, which almost do not contain any discriminative information, 
%render a non-sparse kernel mixture ineffective. We conclude that 
%$\ell_1$-norm must be the best prediction model in this high noise scenario.

We briefly mention that the superior performance of $\ell_{p\approx
1}$-norm MKL in this setup is not surprising. There are four sets of
16 kernels each, in which each kernel picks up very similar
information: they only differ in number and placing of gaps in all
substrings of length 5 of a given part of the protein sequence. The
situation is roughly analogous to considering (inhomogeneous)
polynomial kernels of different degrees on the same data vectors.
This means that they carry large parts of overlapping information.  By
construction, also some kernels (those with less gaps) in principle
have access to more information (similar to higher degree polynomials
including low degree polynomials).  Further, \cite{OngZien08} studied
single kernel SVMs for each kernel individually and found that in most
cases the 16 kernels from the same subset perform very similarly. 
This means that each set of 16 kernels is highly redundant and
the excluded parts of information are not very discriminative.
This renders a non-sparse kernel mixture ineffective. We conclude that
$\ell_1$-norm must be the best prediction model.

%------------------------------------------------------------------------------
\subsection{Gene Start Recognition---a Weighted Non-Sparse Scenario}\label{sec:tss}
%------------------------------------------------------------------------------

This experiment aims 
at detecting transcription start sites (TSS) of
RNA Polymerase II binding genes in genomic DNA sequences.
Accurate detection of the
transcription start site is crucial to identify genes and their
promoter regions and can be regarded as a first step in
deciphering the key regulatory elements in the promoter region that
determine transcription.

Transcription start site finders exploit the fact that the features
of promoter regions and the transcription start sites 
are different from the features of other  genomic DNA \citep{BajTanSuzSug04}.
Many such detectors thereby rely on a combination of 
feature sets which makes the learning task appealing for MKL. 
For our experiments we use the data set from \cite{SonZieRae06} 
which contains a curated set of 8,508 TSS annotated genes utilizing
dbTSS version 4 \citep{SuzYamNakSug02} and refseq genes.
These are translated into positive training instances 
by extracting windows of size $[-1000,+1000]$ around the TSS. 
Similar to \cite{BajTanSuzSug04}, 85,042 negative instances are 
generated from the interior of the gene using the same window size. 

Following \cite{SonZieRae06}, we employ five different kernels representing the TSS signal 
(weighted degree with shift), the  promoter (spectrum), 
the 1st exon (spectrum), angles (linear), 
and energies (linear). Optimal kernel parameters are determined
by model selection in \cite{SonZieRae06}.
The kernel matrices are spherically normalized as described
in section \ref{sec:kernelnormalization}.
We reserve 13,000 and 20,000 randomly drawn instances for validation and 
test sets, respectively, and use the remaining 60,000 as the training pool.
Soft  margin parameters $C$ are tuned on the validation set by grid 
search over $C\in 2^{[-2,-1,\ldots,5]}$ (optimal $C$s are attained in the interior of the grid). 
Figure~\ref{fig:tss} shows test errors for varying training set sizes
drawn from the pool; training sets of the same size are disjoint.
Error bars indicate standard errors of 
repetitions for small training set sizes. 

Regardless of the sample size, $\ell_1$-norm MKL is significantly outperformed
by the sum-kernel. On the contrary, non-sparse MKL significantly
achieves higher AUC values than the $\ell_\infty$-norm MKL for sample sizes
up to 20k. The scenario is well suited for $\ell_2$-norm MKL which
performs best. Finally, for 60k training instances, all methods but
$\ell_1$-norm MKL yield the same performance. 
%A possible explanation for this behavior might come from asymptotic
%statistics where sparse estimators are known for their slow uniform
%convergence against the true parameter \citep{LeePoe08}.
%We will discuss this finding in greater
%detail in Section \ref{SEC-relatedwork}. 
Again, the superior performance of non-sparse MKL is remarkable, and
of significance for the application domain: the method using the
unweighted sum of kernels \citep{SonZieRae06} has recently been
confirmed to be leading in a comparison of 19 state-of-the-art
promoter prediction programs \citep{AbeelPS09}, and our experiments
suggest that its accuracy can be further elevated by non-sparse MKL.

\begin{figure}[t]
  \centering
  %\vspace*{-4cm}
  \hspace{-0.6cm}
  \includegraphics[width=0.56\textwidth]{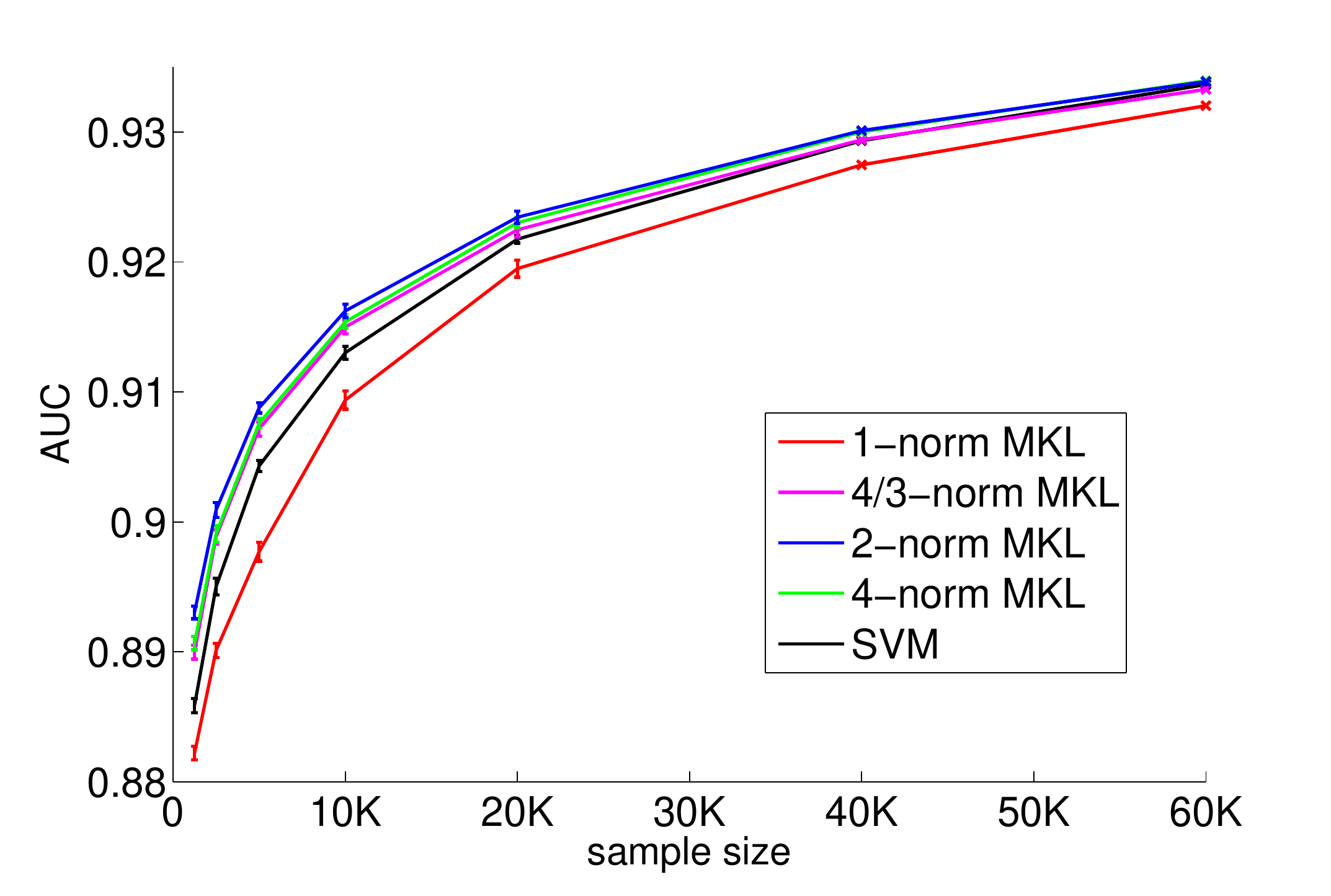}
  \hspace{-0.75cm}
  \includegraphics[width=0.5\textwidth]{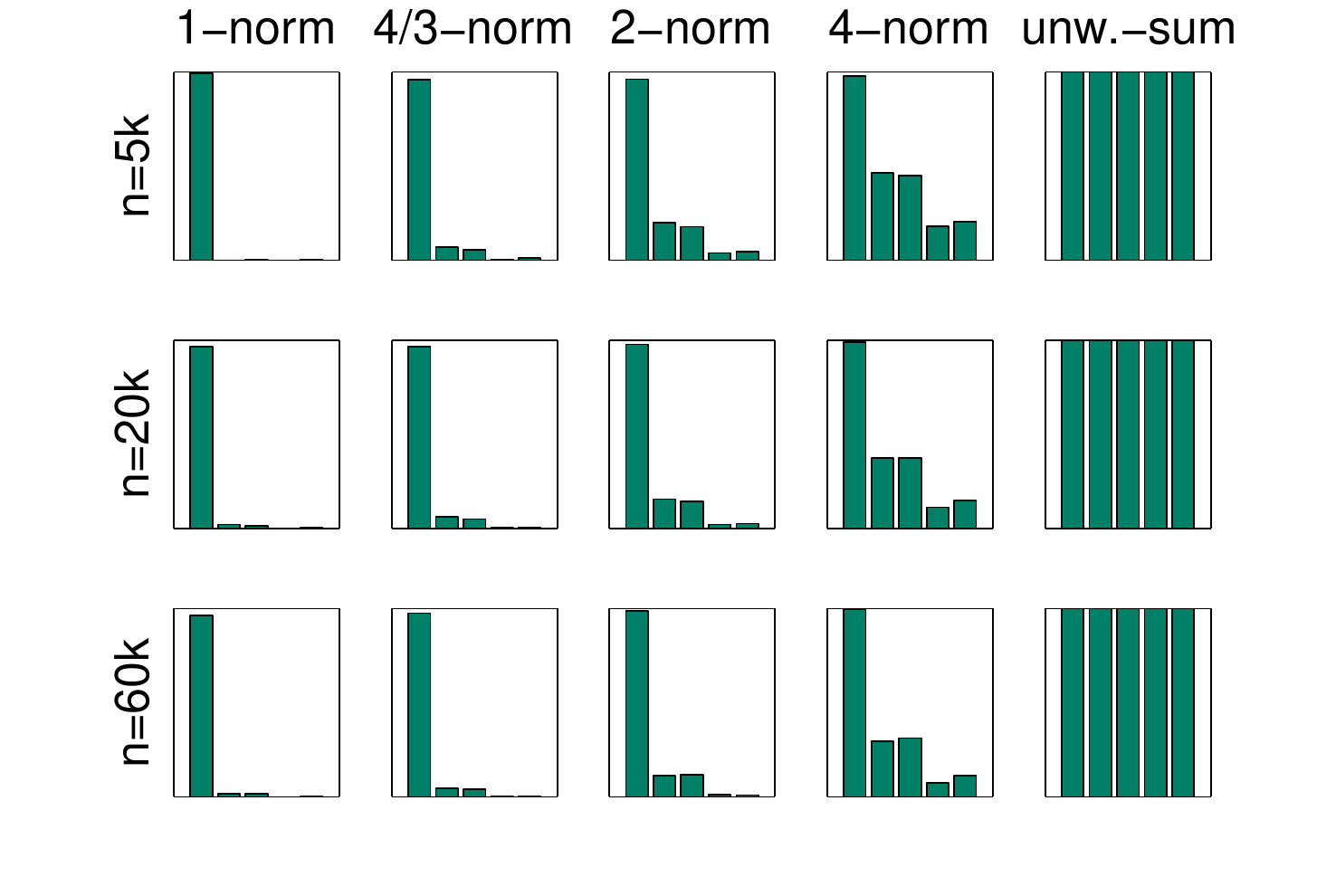}
  \caption{ \small \label{fig:tss}{
    (left) Area under ROC curve (AUC) on test data for TSS recognition
    as a function of the training set size.  Notice the tiny bars indicating standard errors
    w.r.t.~repetitions on disjoint training sets. (right) Corresponding kernel mixtures. For $p=1$
    consistent sparse solutions are obtained while the optimal $p=2$ distributes weights on the 
    weighted degree and the 2 spectrum kernels in good agreement to \citep{SonZieRae06}.
}}
  %\vspace{-.4cm}
\end{figure}

%Experimental Setup: 
%Eine Permutation der 93k daten gemacht (dann fest; keine Wdh.).
%Ersten 60K=train; danach 20K testen; 13K validieren.
%C=[0.25, 1, 2, 4, 8,16,32].
%n=[1250 2500 5000 10000 20000 40000 60000]
%mkl_norms=[1 1.333 2 4 inf]
%Training auf floor(60000/n) verschiedenen *disjunkten* Bl?cken ([1:n], [n+1:2n],...usw.).
%Ergebnisse ?ber die Bl?cke geaveraged, std und std_err ausgerechnet.
%ABER: immer f?r das gleichen test und val set ausgewerten
%Auf Validierungsset validiert (--\rangle  C_opt), auf testset mit optimalem
%C getestet.
%C_opt= zw. 2 und 32 f?r alle  norms;  daher Grid search nur f?r gro?e C's
%AUC: siehe Plot.

We give a brief explanation of the reason for optimality of a
non-sparse $\ell_p$-norm in the above experiments.  It has been shown by
\cite{SonZieRae06} that there are three highly and two moderately
informative kernels.  We briefly recall those results by reporting on
the AUC performances obtained from training a single-kernel SVM on
each kernel individually: TSS signal $0.89$, promoter $0.86$, 1st exon
$0.84$, angles $0.55$, and energies $0.74$, for fixed sample size
$n=2000$.  While non-sparse MKL distributes the weights over all
kernels (see Fig.~\ref{fig:tss}), sparse MKL focuses on the best
kernel.  However, the superior performance of non-sparse MKL means
that dropping the remaining kernels is detrimental, indicating
that they may carry additional discriminative information.

To investigate this hypothesis we computed the pairwise alignments\footnote{The
alignments can be interpreted as empirical estimates of the Pearson
correlation of the kernels \citep{Cristianini2002}.} of the centered
kernel matrices, i.e., $\mathcal A(i,j) = \frac{<K_i,K_j>_F}{\Vert K_i\Vert_F
\Vert K_j \Vert_F}$, with respect to the Frobenius dot product
\cite[e.g.,][]{GolLoa96}.  The computed alignments are shown in
Fig.~\ref{fig:tss_alignments}.  One can observe that the three
relevant kernels are highly aligned as expected since they are
correlated via the labels.

However, the energy kernel shows only a slight correlation with the
remaining kernels, which is surprisingly little compared to its single
kernel performance (AUC=$0.74$).  We conclude that this kernel
carries complementary and orthogonal information about the learning
problem and should thus be included in the resulting kernel
mixture. This is precisely what is done by non-sparse MKL, as can be
seen in Fig.~\ref{fig:tss}(right), and the reason for the empirical
success of non-sparse MKL on this data set.

\begin{figure}[t]
  \centering
  \includegraphics[width=0.50\textwidth]{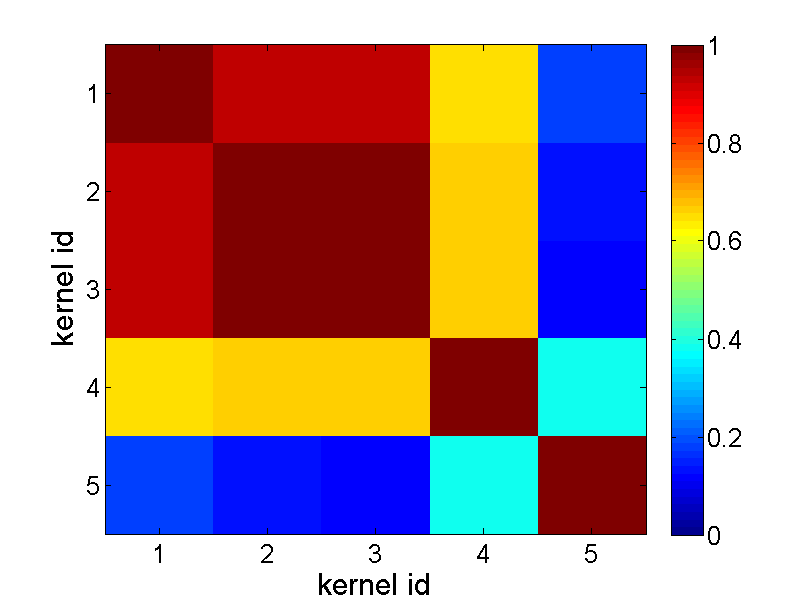}
  \caption{ \small  \label{fig:tss_alignments}{Pairwise alignments of the kernel matrices are shown for the gene start recognition experiment. From left to right, the ordering of the kernel matrices is TSS signal, 
    promoter, 1st exon, angles, and energies. The first three kernels are highly correlated, as expected by their high AUC performances (AUC=$0.84$--$0.89$)
    and the angle kernel correlates decently (AUC=$0.55$). Surprisingly,  the energy kernel correlates only few, despite a descent AUC of $0.74$.}}
\end{figure}

%------------------------------------------------------------------------------
\subsection{Reconstruction of Metabolic Gene Network---a Uniformly Non-Sparse Scenario}\label{sec:bleakley}
%------------------------------------------------------------------------------

In this section, we apply non-sparse MKL to a problem originally studied
by \cite{YamanishiEtAl2005}. Given 668 enzymes of the yeast 
\emph{Saccharomyces cerevisiae} and 2782 functional relationships 
extracted from the KEGG database \citep{KanehisaEtAl2004}, the task is to 
predict functional relationships for unknown enzymes. We employ 
the experimental setup of \cite{BleakleyEtAl2007} who phrase the 
task as graph-based edge prediction with local models by learning
a model for each of the 668 enzymes. They provided 
kernel matrices capturing expression data (EXP), cellular 
localization (LOC), and the phylogenetic profile (PHY); additionally 
we use the integration of the former 3 kernels (INT) which matches our 
definition of an unweighted-sum kernel. 

Following \cite{BleakleyEtAl2007}, we employ a $5$-fold cross validation;
in each fold we train on average 534 enzyme-based models; however, 
in contrast to  \cite{BleakleyEtAl2007} we omit enzymes reacting with 
only one or two others to guarantee well-defined problem settings.
As Table \ref{diabolisch2000} shows, this results in slightly better 
AUC values for single kernel SVMs where the results by \cite{BleakleyEtAl2007}
are shown in brackets. 

As already observed \citep{BleakleyEtAl2007}, the unweighted-sum
kernel SVM performs best. Although its solution is well approximated
by non-sparse MKL using large values of $p$, $\ell_p$-norm MKL is not
able to improve on this $p=\infty$ result. Increasing the number of
kernels by including recombined and product kernels does improve the
results obtained by MKL for small values of $p$, but the maximal AUC
values are not statistically significantly different from those of
$\ell_\infty$-norm MKL.
We conjecture that the performance of the unweighted-sum kernel SVM
can be explained by all three kernels performing well invidually. 
Their correlation is only  moderate, as shown in Fig.~\ref{fig:bleakley_alignments},
suggesting that they contain complementary information. 
Hence, downweighting one of those three orthogonal 
kernels leads to a decrease in performance, as observed in our experiments.
This explains why $\ell_\infty$-norm MKL is the best prediction model in this experiment. 
%However,  in the light the discussion of Sect.~\ref{section-limits}
%and our experiment on controlled artifical data (Sect.~\ref{sectoy1}), we remark
%that the prediction accuracy eventually could be leveraged by the method of \cite{AflBenBhaNatRamJMLR}.

\begin{table}
\begin{center}
\caption{\small{Results for the reconstruction of a metabolic gene network.
\label{diabolisch2000} Results by \cite{BleakleyEtAl2007} for single 
kernel SVMs are shown in brackets.}}
\medskip
\begin{tabular}{l|l}
            & AUC $\pm$ stderr     \\\hline\hline
EXP         &   $71.69 \pm 1.1$ $\quad$ ($69.3 \pm 1.9$)\\
LOC         &   $58.35 \pm 0.7$ $\quad$ ($56.0 \pm 3.3$)\\
PHY         &   $73.35 \pm 1.9$ $\quad$ ($67.8 \pm 2.1$)\\
INT ($\infty$-norm MKL)  &   $\mathbf{82.94} \pm \mathbf{1.1}$ $\,\,\,$ ($\mathbf{82.1} \pm \mathbf{2.2}$)\\\hline\hline
$1$-norm MKL & $75.08 \pm 1.4$ \\
${4/3}$-norm MKL & $  78.14 \pm 1.6$\\
$2$-norm MKL     &  $ 80.12 \pm 1.8$\\
$4$-norm    MKL   &  $ 81.58 \pm 1.9$\\
$8$-norm   MKL    &  $ 81.99 \pm 2.0$\\
$10$-norm  MKL    &  $ \mathbf{82.02} \pm \mathbf{2.0}$\\\hline
\multicolumn{2}{l}{Recombined and product kernels}\\\hline
$1$-norm MKL    &  $79.05 \pm 0.5$ \\
$4/3$-norm MKL  & $80.92 \pm 0.6$  \\
$2$-norm MKL    & $81.95 \pm 0.6$  \\
$4$-norm MKL    & $\mathbf{83.13} \pm \mathbf{0.6}$  \\
\end{tabular}
\end{center}
\end{table}

\begin{figure}[t]
  \centering
  \vskip2cm
  \includegraphics[width=0.50\textwidth]{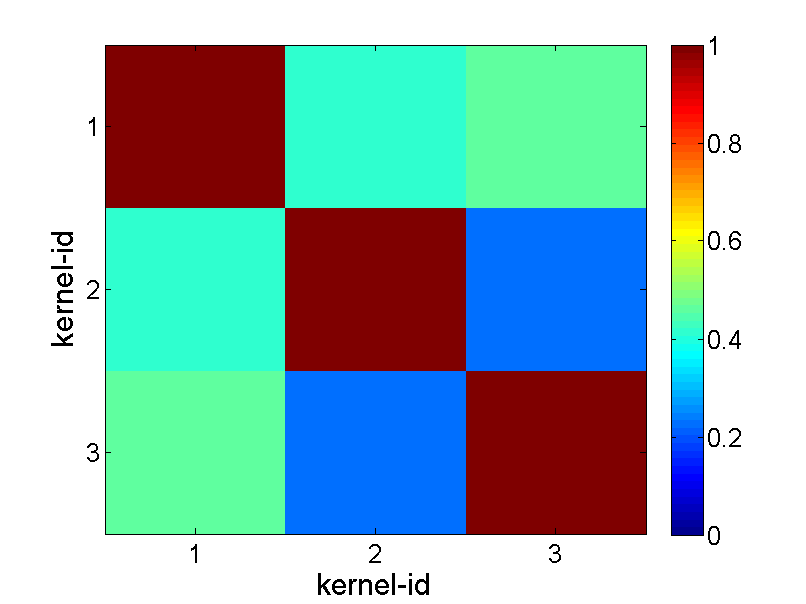}
  \caption{ \small \label{fig:bleakley_alignments}{Pairwise alignments of the kernel matrices are shown for the metabolic gene network experiment. From left to right, the ordering of the kernel matrices is EXP, LOC, and PHY. One can see that all kernel matrices are equally correlated. Generally, the alignments are relatively low, suggesting that combining all kernels with equal weights is beneficial.}}
\end{figure}

%------------------------------------------------------------------------------
\subsection{Execution Time}\label{sec-exectime}

In this section we demonstrate the efficiency of our implementations
of non-sparse MKL. We experiment on the MNIST data set\footnote{This data set is
available from \url{http://yann.lecun.com/exdb/mnist/}.},
where the task
is to separate odd vs.\ even digits. The digits in this $n=60,000$-elemental data set are of size
28x28 leading to $d=784$ dimensional examples.
We compare our analytical solver  for  non-sparse MKL (Section~\ref{opt_alex}--\ref{analytical}) with the state-of-the art  for $\ell_1$-norm MKL, namely SimpleMKL\footnote{We obtained an implementation from \url{http://asi.insa-rouen.fr/enseignants/~arakotom/code/}.}
\citep{RakBacCanGra08}, HessianMKL\footnote{We obtained an implementation from \url{http://olivier.chapelle.cc/ams/hessmkl.tgz}.} \citep{ChaRak08}, SILP-based wrapper, and SILP-based chunking optimization \citep{SonRaeSchSch06}. 
We also experiment with the analytical method for $p=1$, although convergence is only guaranteed by our Theorem~\ref{thm:directmkl} for $p>1$.
We also compare to the semi-infinite program (SIP) approach to $\ell_p$-norm MKL presented in \cite{KloBreSonZieLasMue09}.
\footnote{The Newton method presented in the same paper performed similarly most of the time but sometimes had convergence problems, especially when $p\approx 1$ and thus was excluded from the presentation.}
In addition, we solve standard SVMs\footnote{We use SVMlight as SVM-solver.}
using the unweighted-sum kernel ($\ell_\infty$-norm MKL) as baseline.  

We experiment with MKL using precomputed kernels
(excluding the kernel computation time from the timings) and MKL based on
on-the-fly computed kernel matrices measuring training time \emph{including
kernel computations}. Naturally, runtimes of  on-the-fly methods should be expected to be higher than the
ones of the precomputed counterparts. We optimize all methods up to a precision of $10^{-3}$ for the outer
SVM-$\varepsilon$ and $10^{-5}$ for the ``inner'' SIP precision, and computed
relative duality gaps. To provide a fair stopping criterion to SimpleMKL and HessianMKL, we set
their stopping criteria to the relative duality gap of their 
$\ell_1$-norm SILP counterpart.  
%This way, the deviations of relative objective
%values of $\ell_1$-norm MKL variants are guaranteed to be smaller than $10^{-3}$.  
SVM trade-off parameters are set to $C=1$ for all methods.

\begin{figure}[htbp]
\centering
\includegraphics[width=.97\textwidth]{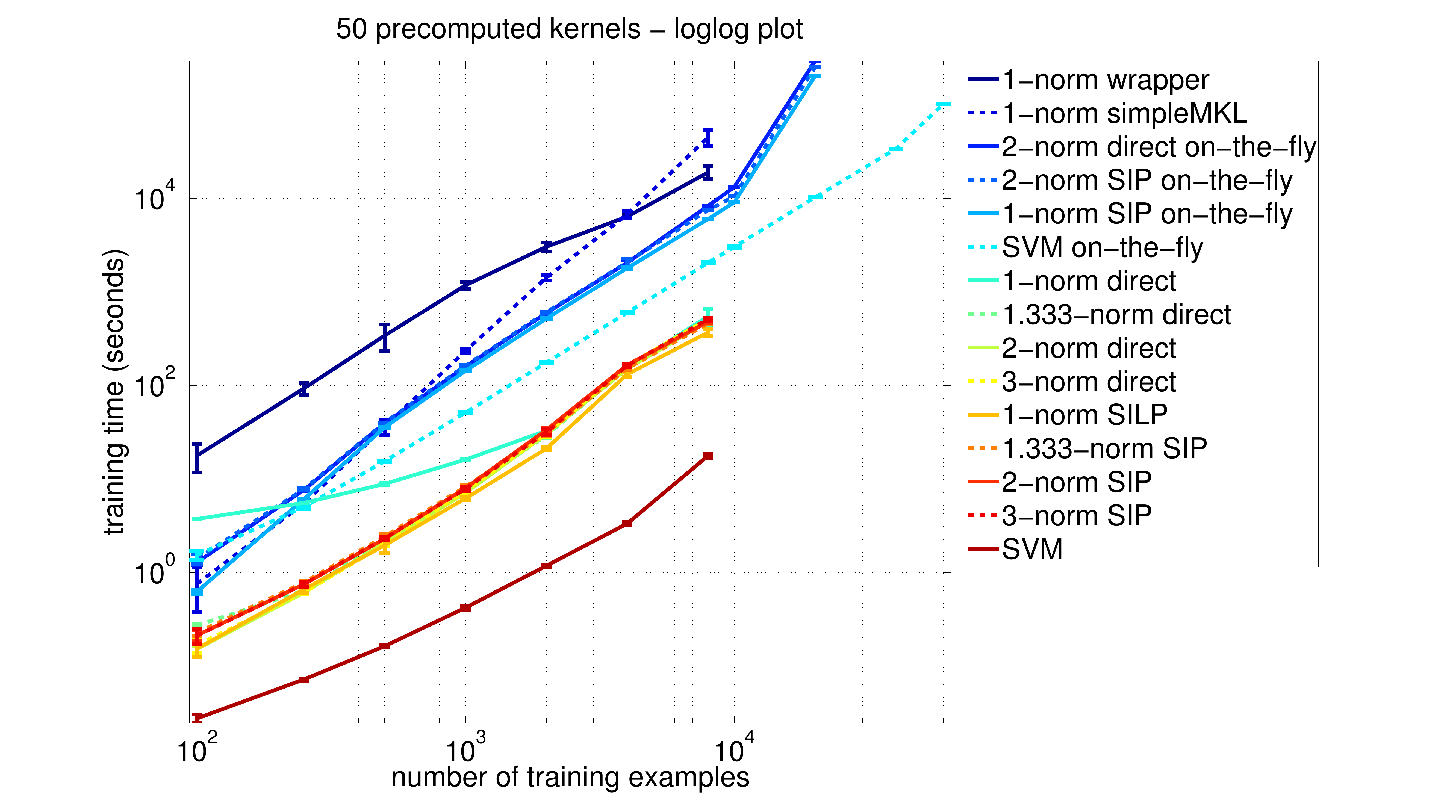}\vspace*{1ex}
\includegraphics[width=.96\textwidth]{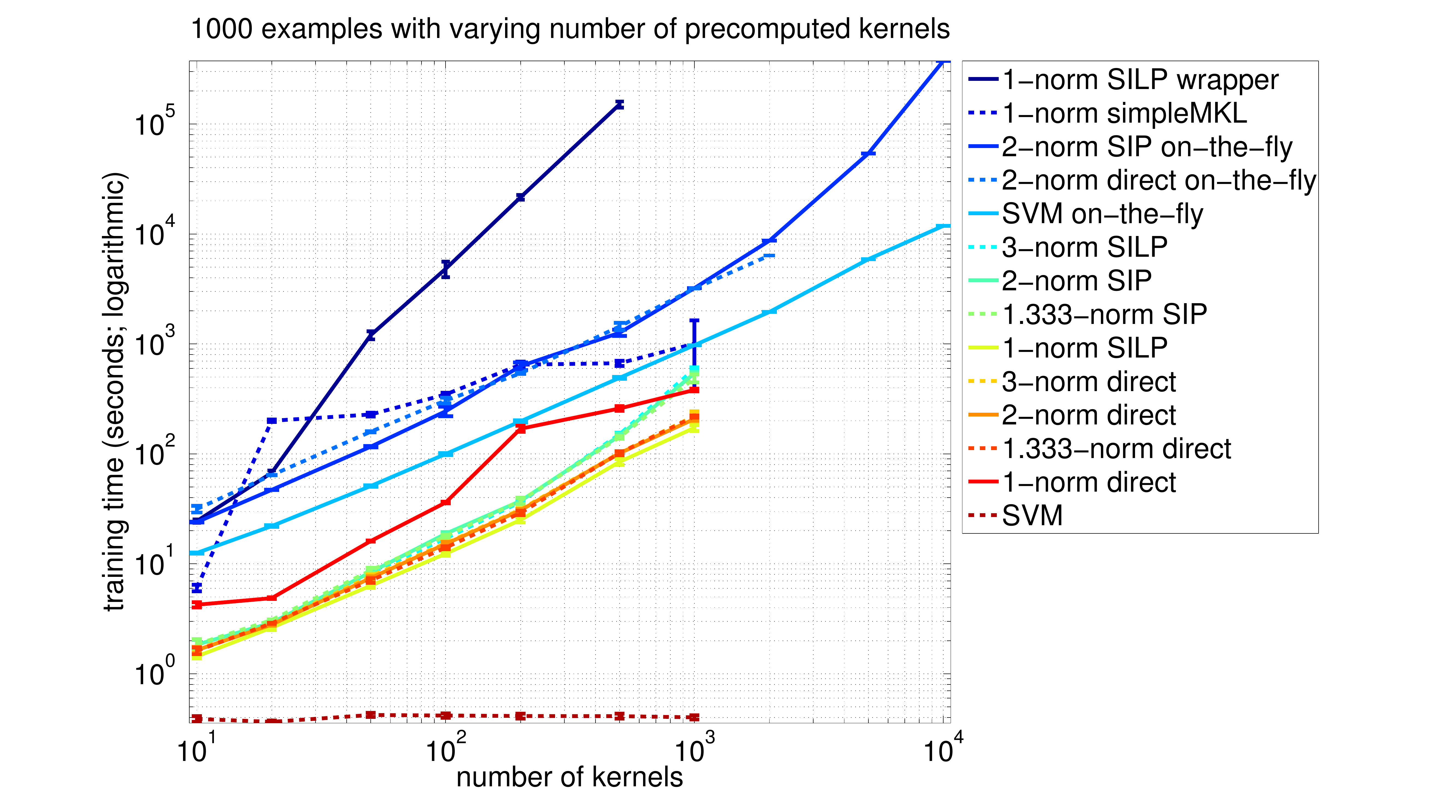}\hspace{0.25cm}
%\begin{minipage}[b]{.18\textwidth}
%\includegraphics[width=1\textwidth]{../../experiments/mnist/nofex/legende}
%\vspace{.25mm}
%\end{minipage}
\caption{{\small \blue{Execution times of SVM and $\ell_p$-norm MKL based on interleaved
optimization via analytical optimization and semi-infinite programming (SIP), respectively, and wrapper-based optimization via SimpleMKL
wrapper and SIP wrapper.} Top: Training using fixed number of 50 kernels varying training set
size. Bottom: For 1000 examples and varying numbers of kernels. Notice the tiny
error bars and that these are log-log plots. \label{fig-exectime}}}
% to enable pdflatex to include these figs, do:
% epstopdf mnist/nofex/laufzeit.eps
% epstopdf mnist/nofex/laufzeit2.eps
\end{figure}

\paragraph{Scalability of the Algorithms w.r.t.~Sample Size}
Figure \ref{fig-exectime} (top) displays the results for varying sample sizes
and 50 precomputed or on-the-fly computed Gaussian kernels with bandwidths $2\sigma^2 \in
1.2^{0,\dots,49}$. Error bars indicate standard error over 5 repetitions.
As expected, the SVM with the unweighted-sum kernel using precomputed kernel
matrices is the fastest method. The classical MKL wrapper based methods, SimpleMKL and
the SILP wrapper, are the slowest; they are even slower than methods that compute
kernels on-the-fly. Note that the on-the-fly methods naturally have higher runtimes because they
do not profit from precomputed kernel matrices.

Notably, when considering 50 kernel matrices of size 8,000 times 8,000 (memory
requirements about 24GB for double precision numbers), SimpleMKL is the slowest method:
it is more than 120 times slower than the $\ell_1$-norm SILP solver from
\cite{SonRaeSchSch06}. This  is because  SimpleMKL suffers from having to train an SVM to full precision
for each gradient evaluation. In contrast, kernel caching and interleaved optimization
still allow to train our algorithm on kernel matrices of size $20000\times20000$, which would
usually not completely fit into memory since they require about 149GB.

Non-sparse MKL scales similarly as
$\ell_1$-norm SILP for both optimization strategies, the analytic optimization and the sequence of SIPs. 
Naturally, the generalized SIPs are slightly slower than the SILP variant, since they solve an additional series of Taylor expansions
within each $\vtheta$-step. \blue{HessianMKL ranks in between on-the-fly and non-sparse interleaved methods.}

\paragraph{Scalability of the Algorithms w.r.t.~the Number of Kernels}
Figure \ref{fig-exectime} (bottom) shows the results for varying the
number of precomputed and on-the-fly computed RBF kernels for a fixed sample
size of 1000. The bandwidths of the kernels are scaled such that for $M$ kernels
$2\sigma^2\in 1.2^{0,\dots,M-1}$. As expected, the
SVM with the unweighted-sum kernel is hardly affected by this setup,
taking an essentially constant training time. The $\ell_1$-norm MKL by
\cite{SonRaeSchSch06} handles the increasing number of kernels best
and is the fastest MKL method. Non-sparse approaches to MKL show
reasonable run-times, being just slightly slower. Thereby the analytical methods are somewhat faster than the SIP approaches.
\blue{The sparse analytical method performs worse than its non-sparse counterpart; 
%this might be related to the non-uniqueness of the $\vtheta$-step, that also lead to
%convergence not being shown for $p=1$.
this might be related to the fact that convergence of the analytical method is only guaranteed for $p>1$.}
The wrapper methods again perform
worst. 

However, in contrast to the previous experiment, SimpleMKL
becomes more efficient with increasing number of kernels.
%But since SimpleMKL
%requires all kernel matrices to be precomputed\footnote{When using 1,000 kernel
%matrices of size 1,000 times 1,000 about 7.4GB of memory are required.} again only the
%interleaved methods are capable of training with 10,000 kernels (requiring 74GB).
We conjecture that this is in part owed to the sparsity
of the best solution, which accommodates the $l_1$-norm model of SimpleMKL.
But the capacity of SimpleMKL remains limited due to memory
restrictions of the hardware. For example, for storing 1,000 kernel
matrices for 1,000 data points, about 7.4GB of memory are required.
On the other hand, our interleaved optimizers which allow for effective caching
can easily cope with 10,000 kernels of the same size (74GB). 
\blue{HessianMKL is considerably faster than SimpleMKL but slower than the non-sparse interleaved methods and the SILP.
Similar to SimpleMKL, it becomes more efficient with increasing number of kernels but eventually runs out of memory.}

\blue{Overall, our proposed interleaved analytic and cutting plane based optimization strategies
achieve a speedup of up to one and two orders of magnitude over HessianMKL and SimpleMKL, respectively.} Using
efficient kernel caching, they allow for truely large-scale multiple kernel learning
well beyond the limits imposed by having to precompute and store the complete kernel matrices.
Finally, we note that performing MKL with 1,000 precomputed kernel matrices of size 1,000 times 1,000 requires
less than 3 minutes for the SILP. This suggests that it focussing future research efforts
on improving the accuracy of MKL models may pay off more than further accelerating the
optimization algorithm.

%\blue{\subsection{Related Work and Empirical Results by Other Researchers}\label{SEC-related}}
%\blue{In this section, we discuss related work and mention further empirical results by researchers which have been experimenting with
%non-sparse MKL. This gives a more complete view of the empirical picture of sparse vs. non-sparse MKL.}
%\cite{Yanetal} study $\ell_2$-norm MKL empirically on image and video classification tasks
%\cite{nakayimaetal} study $\ell_p$-norm MKL for multi-label image categorization on VOC 2007 data 
% apply $\ell_2$-norm MKL to six bioinformatics data sets and %, where $\ell_2$-norm MKL is shown to be advanteous to the unweighted-sum kernel SVM;
%  \item
%\cite{YAN-CVPR-2010} extend $\ell_p$-norm MKL to kernel Fisher discriminant analysis and  apply it to object recognition tasks (VOC 2007 and Caltach 101 data)
  %\item
  %\cite{OraJieCap10} present an online-batch algorithm to solve $1\leq q\leq\infty$-block norm MKL
  %\item
  %\cite{ryota} present a MKL variant using elastic net regularization
  %\item
  %We note that all those papers show an improvement of non-sparse MKL over the $\ell_1$-norm MKL and the unweighted-sum kernel SVM baselines. 
%\cite{KloRueBar10} present a derivation that unifies previous variational, block norm, and elestic net MKL variants.

%----------------------------------------------------
\section{Conclusion}\label{SEC-conclusion}

We translated multiple kernel learning into a regularized risk 
minimization problem for arbitrary convex loss functions,
Hilbertian regularizers, and arbitrary-norm penalties on the
mixing coefficients. Our formulation can be motivated
by both Tikhonov and Ivanov regularization approaches, the latter one
having an additional regularization parameter.
Applied to previous MKL research, our framework provides a unifying view
and shows that so far seemingly different MKL approaches are in fact equivalent.

Furthermore, we presented a general dual formulation of multiple 
kernel learning that subsumes many existing algorithms. We devised 
an efficient optimization scheme for non-sparse $\ell_p$-norm 
MKL with $p\geq1$, based on an analytic update for the mixing coefficients, and 
interleaved with chunking-based SVM training to 
allow for application at large scales.
It is an open question whether our algorithmic approach extends to more general norms.
%to large scale data sets. 
Our implementations are freely available and
included in the SHOGUN toolbox. 
The execution times of our algorithms 
revealed that the interleaved optimization 
vastly outperforms commonly used wrapper approaches. 
 Our results and the scalability of our MKL approach 
pave the way for other real-world applications of multiple kernel learning.

In order to empirically validate our $\ell_p$-norm MKL model,
%Empirically, we apply $\ell_p$-norm MKL 
we applied it to artificially generated
data and real-world problems from computational biology. 
For the controlled toy experiment, where we simulated various levels
of sparsity, $\ell_p$-norm MKL achieved a low test error in all
scenarios for scenario-wise tuned parameter $p$.
Moreover, we studied three real-world problems showing that 
the choice of the norm is crucial for state-of-the art performance.
For the TSS recognition, non-sparse MKL raised the bar in 
predictive performance, while for the other two tasks either sparse MKL
or the unweighted-sum mixture performed best. In those cases
the best solution can be arbitrarily closely approximated
by $\ell_p$-norm MKL with $1<p<\infty$.
Hence it seems natural that we observed non-sparse MKL to be never worse than
an unweighted-sum kernel or a sparse MKL approach. 
Moreover, empirical evidence from our experiments along with others
suggests that the popular $\ell_1$-norm MKL is more prone to bad solutions
than higher norms, despite appealing guarantees like the model selection consistency \citep{Bac07}.

\blue{A first step towards a learning-theoretical understanding of this empirical behaviour
may be the convergence analysis undertaken in the appendix of this paper.
%convergence analysis of sparse estimators undertaken by \cite{LeePoe08}.
It is shown that in a sparse scenario $\ell_1$-norm MKL converges faster
than non-sparse MKL due to a bias that well is well-taylored to the ground truth.
%However, in the non-sparse case we exemplify that all methods may suffer from slow convergence. 
%However, even restricted to $\ell_1$-norm and $\ell_2$-norm this issue is not yet completely resolved,
%and there is still some lack of theoretical underpinning remains to be filled.
In their current form the bounds seem to suggest that furthermore,
in all other cases, $\ell_1$-norm MKL is at least as good as non-sparse MKL.
However this would be inconsistent with both the no-free-lunch theorem
and our empirical results, which indicate that there exist scenarios
in which non-sparse models are advantageous.
We conjecture that the non-sparse bounds are not yet tight and need
further improvement, for which the results in Appendix~\ref{sect:bounds}
may serve as a starting point.}\footnote{\blue{We conjecture that the $\ell_{p>1}$-bounds are off by a logarithmic factor, because our proof technique ($\ell_1$-to-$\ell_p$ conversion) introduces a slight bias towards $\ell_1$-norm.}}

A related---and obtruding!---question is whether the optimality of the parameter $p$
can retrospectively be explained or, more profitably, even be estimated in advance.
Clearly, cross-validation based model selection over the choice of $p$
will inevitably tell us which cases call for sparse or non-sparse models.
The analyses of our real-world applications suggests that both
the correlation amongst the kernels with each other and their correlation
with the target (i.e., the amount of discriminative information that
they carry) play a role in the distinction of sparse from non-sparse scenarios.
However, the exploration of theoretical explanations is beyond the scope of this work.
Nevertheless, we remark that even completely redundant but uncorrelated kernels 
may improve the predictive performance of a model, as averaging
over several of them can reduce the variance of the predictions \cite[cf., e.g.,][Sect. 3.1]{GuyEli03}.
Intuitively speaking, we observe clearly that in some cases all
features, even though they may contain redundant information, should be
kept, since putting their contributions to zero worsens
prediction, i.e.\ all of them are informative to our MKL models. 
%Note however that this result is also class specific, i.e.~for some classes we
%may sparsify. 

Finally, we would like to note that it may be worthwhile to rethink the
current strong preference for sparse models in the scientific community.
%
%\blue{In fact such rethinking seems to already take place: for instance,
%\citet{Gelman10} claims that in the social sciences even in causal models
%"There are (almost) no true zeros". Anyway, note that already a few direct causal links can
%confer indirect dependence of the predicted variable on many or all observed
%variables, which renders non-sparse predictive models more robust than sparse ones.}
%
\blue{Already weak connectivity in a causal graphical model may be sufficient
for all variables to be required for optimal predictions (i.e., to have non-zero coefficients),
and even the prevalence of sparsity in causal flows is being questioned
(e.g., for the social sciences \citet{Gelman10} argues that "There are (almost) no true zeros").}
A main reason for favoring sparsity may be the presumed
interpretability of sparse models.  This is not the topic and goal of
this article; however we remark that in general the identified model
is sensitive to kernel normalization, and in particular in the
presence of strongly correlated kernels the results may be somewhat
arbitrary, putting their interpretation in doubt.  However, in the
context of this work the predictive accuracy is of focal interest, and
in this respect we demonstrate that non-sparse models may improve
quite impressively over sparse ones.

\begin{acks}
The authors wish to thank Vojtech Franc, Peter Gehler, Pavel Laskov, Motoaki Kawanabe, and
Gunnar R\"atsch for stimulating discussions, and Chris Hinrichs  and Klaus-Robert M\"uller for helpful comments on
the manuscript. 
%We thank Chris Hinrichs for pointing out an error in a previous version of this manuscript.
We acknowledge Peter L. Bartlett and Ulrich R\"uckert for contributions to parts of an earlier version 
of the theoretical analysis that appeared at ECML 2010.
We thank the anonymous reviewers for comments and suggestions 
that helped to improve the manuscript.
This work was supported in part by the German
Bundesministerium f\"ur Bildung und Forschung (BMBF) under the project REMIND
(FKZ 01-IS07007A), and by the FP7-ICT
program of the European Community, under the PASCAL2  Network of Excellence,
ICT-216886. S\"oren Sonnenburg 
acknowledges financial support by the German Research Foundation
(DFG) under the grant MU 987/6-1 and RA 1894/1-1, and Marius Kloft acknowledges a scholarship by the German Academic Exchange Service (DAAD).
\end{acks}

\appendix

%------------------------------------------------------------------------------
\section{Theoretical Analysis}\label{sect:bounds}
%------------------------------------------------------------------------------

\blue{
In this section we present a theoretical analysis of $\ell_p$-norm MKL, based on Rademacher complexities.\footnote{An excellent introduction to statistical learning theory, which equips the reader with the needed basics for this section, is given in \cite{BouBouLug04}.}
%It can be easily extended to arbitrary norms.
We prove a theorem that converts any Rademacher-based generalization bound on $\ell_1$-norm MKL into a 
generalization bound for $\ell_p$-norm MKL (and even more generally: arbitrary-norm MKL).
%bound  of  \cite{CorMohRos10} or the result by \cite{KakShaTew10}, 
%can be easily extended to $\ell_p$-norm bound by a . 
Remarkably this $\ell_1$-to-$\ell_p$ conversion is obtained almost without any effort: by a
simple 5-line proof. The proof idea is based on \cite{KloRueBar10}.\footnote{We acknowledge the contribution of Ulrich R\"uckert.}
%\footnote{We acknowledge Ulrich R\"uckert for substantial contributions to this idea.}
We remark that an  $\ell_p$-norm MKL bound  was already given in \cite{CorMohRos10}, 
but first their bound is only valid for the special cases
% $p\leq 2$ where $p/(p-1)$ is an integer,
$p = n/(n-1)$ for $n=1,2,\ldots$,
and second it is not tight for all $p$, as it diverges to infinity when $p>1$ and $p$ approaches one.  By contrast, beside a rather unsubstantial $\log(M)$-factor, our result matches the best known lower bounds, when $p$ approaches one.
%Parts of this section are based on a previous publication that appeared in ECML 2010 \citep{KloRueBar10}, but the present analysis goes beyond that in a new conversion technique, an illustrative case study, and a didactically improved presentation.
}

\blue{
%Our goal in this section is to bound the generalization error of the classifier $f(\cdot)=\sum_{m=1}^M \w_m^\top\psi_m(\cdot)+b$ outputted by the (primal) MKL problem (the bias is omitted)
Let us start by defining the hypothesis set that we want to investigate. Following \cite{CorMohRos10}, we consider the following hypothesis class  for $p\in[1,\infty]$:
\begin{equation*}
 \H_M^p :=  \left\{ h:\mathcal X\rightarrow \mathbb R ~ \bigg| ~ h(\x)= \sum_{m=1}^M \sqrt{\theta_m}\langle\w_m,\psi_m(\x)\rangle_{\mathcal H_m}, ~ \Vert\w\Vert_{\mathcal H}\leq 1, ~ \Vert\vtheta\Vert_p \leq 1 \right\}.
\end{equation*}
Solving our primal MKL problem \refprimalcounter corresponds to empirical risk minimization in the above hypothesis class.
We are thus interested in bounding the generalization error of the above class w.r.t.\ an i.i.d.\ sample $(\x_1,y_1),...,(\x_n,y_n)\in\mathcal X\times\{-1,1\}$ 
from an arbitrary distribution $P=P_X\times P_Y$.
In order to do so, we compute the \emph{Rademacher complexity},
$$\R(\H_M^p)  := \E \bigg[ \sup_{h \in \H_M^p} \frac{1}{n} \sum_{i=1}^n \sigma_i h(\x_i) \bigg], $$
where $\sigma_1, \ldots, \sigma_n$ are independent Rademacher variables (i.e. they obtain the values -1 or +1 with the same probability 0.5) and the $\E$ is the expectation operator that removes the dependency on all random variables, i.e. $\sigma_i$, $\x_i$, and $y_i$ ($i=1,...,n$).
If the  Rademacher complexity is known, there is a large body of results which can be used to bound the generalization error \cite[e.g.,][]{KolPan02,BarMen02}.
}

\blue{
We now show a simple $\ell_1$-to-$\ell_p$ conversion technique for the Rademacher complexity, which is the main result of this section:
\begin{theorem}[$\ell_1$-to-$\ell_p$ Conversion]\label{MainTheorem}
  For any sample of size $n$ and $p\in[1,\infty]$, the  Rademacher complexity of the hypothesis set 
  $\H_M^p$ can be bounded as follows:
  $$\R(\H_M^p)\leq \sqrt{M^{1/p^*}} \R(\H_M^1) ,$$
  where $p^*:=p/(p-1)$ is the conjugated exponent of $p$.
\end{theorem}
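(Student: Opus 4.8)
The plan is to derive the bound from a single set inclusion between the two hypothesis classes, after which the elementary monotonicity and positive-homogeneity of the Rademacher complexity do the rest. The only quantitative ingredient needed is Hölder's inequality relating the $\ell_1$- and $\ell_p$-norms on $\mathbb R^M$; everything else is bookkeeping, which is why a 5-line proof is plausible.

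First I would record the norm comparison. For any $\vtheta\in\mathbb R^M$, Hölder's inequality gives $\Vert\vtheta\Vert_1=\sum_{m=1}^M|\theta_m|\cdot 1\leq \Vert\vtheta\Vert_p\,\Vert\one\Vert_{p^*}=M^{1/p^*}\Vert\vtheta\Vert_p$, where $p^*=p/(p-1)$; hence the constraint $\Vert\vtheta\Vert_p\leq 1$ implies $\Vert\vtheta\Vert_1\leq M^{1/p^*}$. Next I would push this factor out of the model. Take $h\in\H_M^p$, so $h(\x)=\sum_m\sqrt{\theta_m}\langle\w_m,\psi_m(\x)\rangle$ with $\Vert\w\Vert_{\mathcal H}\leq 1$ and $\Vert\vtheta\Vert_p\leq 1$. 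Put $\vtheta'=\vtheta/M^{1/p^*}$, so $\Vert\vtheta'\Vert_1\leq 1$ by the previous step, and note $\sqrt{\theta_m}=\sqrt{M^{1/p^*}}\,\sqrt{\theta'_m}$. Thus $h=\sqrt{M^{1/p^*}}\,\tilde h$ with $\tilde h(\x)=\sum_m\sqrt{\theta'_m}\langle\w_m,\psi_m(\x)\rangle\in\H_M^1$, which shows $\H_M^p\subseteq\sqrt{M^{1/p^*}}\,\H_M^1$. The square root here is exactly the reflection of the fact that $\theta_m$ enters the model through $\sqrt{\theta_m}$.

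Finally I would invoke the two trivial properties of the Rademacher complexity, both immediate from the definition of the supremum inside the expectation: it is monotone under set inclusion, and $\R(c\,\mathcal H)=c\,\R(\mathcal H)$ for any $c\geq 0$. Combining these with the inclusion yields $\R(\H_M^p)\leq\R\big(\sqrt{M^{1/p^*}}\,\H_M^1\big)=\sqrt{M^{1/p^*}}\,\R(\H_M^1)$, which is the claim. I do not expect a genuine obstacle; the only points deserving a line of care are the nonnegativity of $\vtheta$ (needed so that $\sqrt{\theta_m}$ is well-defined and $|\theta_m|=\theta_m$) and the two boundary cases $p=1$ (where $M^{1/p^*}=M^0=1$ and the bound becomes an equality) and $p=\infty$ (where $M^{1/p^*}=M$), both of which are consistent with the generic argument.
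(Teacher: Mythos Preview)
Your proposal is correct and follows essentially the same approach as the paper: both use H\"older's inequality to obtain $\Vert\vtheta\Vert_1\leq M^{1/p^*}\Vert\vtheta\Vert_p$, then rescale $\vtheta$ and pull out the factor $\sqrt{M^{1/p^*}}$ arising from the square root in the model. The only cosmetic difference is that you phrase this via the set inclusion $\H_M^p\subseteq\sqrt{M^{1/p^*}}\,\H_M^1$ together with monotonicity and positive homogeneity of the Rademacher complexity, whereas the paper carries out the identical manipulation directly inside the supremum and expectation.
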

\begin{proof}
By H\"older's inequality \cite[e.g.,][]{SteeleBook}, we have
\begin{equation}\label{hoelder_trick}
  \forall \vtheta\in\mathbb R^M: ~\quad\Vert\vtheta\Vert_1 = \one^\top\vtheta \leq \Vert \one\Vert_{p^*} \Vert\vtheta\Vert_p = M^{1/p^*}\Vert\vtheta\Vert_p~. \hspace{0.5cm}
\end{equation}
Hence,
\begin{eqnarray*}
 \R(\H_M^p) & \stackrel{\rm Def.}{=} &  \E\bigg[ \sup_{\w: \Vert\w\Vert_{\mathcal H}\leq 1, ~ \vtheta:\Vert\vtheta\Vert_p \leq 1} \frac{1}{n} \sum_{i=1}^n \sigma_i \sum_{m=1}^M \sqrt{\theta_m}\langle\w_m,\psi_m(\x_i)\rangle_{\mathcal H_m} \bigg] \\
 & \stackrel{\eqref{hoelder_trick}}{\leq} & \E \bigg[ \sup_{\w:\Vert\w\Vert_{\mathcal H}\leq 1, ~ \vtheta:\Vert\vtheta\Vert_1 \leq M^{1/p^*}} \frac{1}{n} \sum_{i=1}^n \sigma_i \sum_{m=1}^M \sqrt{\theta_m}\langle\w_m,\psi_m(\x_i)\rangle_{\mathcal H_m} \bigg] \\
 & = & \E \bigg[ \sup_{\w:\Vert\w\Vert_{\mathcal H}\leq 1, ~ \vtheta:\Vert\vtheta\Vert_1 \leq 1} \frac{1}{n} \sum_{i=1}^n \sigma_i \sum_{m=1}^M \sqrt{\theta_m M^{1/p^*}}\langle\w_m,\psi_m(\x)\rangle_{\mathcal H_m}\bigg] \\
 & \stackrel{\rm Def.}{=} &\sqrt{M^{1/p^*}} \R(\H_M^1) .
\end{eqnarray*}
\end{proof}
\begin{remark}\label{theremark}
More generally we have that for any norm $\Vert\cdot\Vert_\star$ on $\mathbb R^M$, because all norms on $\mathbb R^M$ are equivalent \cite[e.g.,][]{Rud91}, there exists a $c_\star\in\mathbb R$ such that
$$\R(\H_M^p)\leq c_\star\R(\H_M^{\star}) .$$
This means the conversion technique extends to arbitrary norms: for any given norm ${\Vert\cdot\Vert_\star}$, we can convert any bound on $\R(\H_M^p)$ into a bound on the  Rademacher complexity $\R(\H_M^{\star})$ of hypothesis set induced by $\Vert\cdot\Vert_\star$.
\end{remark}
A nice thing about the above bound is that we can make use of any existing bound on the  Rademacher complexity of $H_M^1$ in order to obtain a generalization bound for $\H_M^p$.
This fact is illustrated in the following.
For example, the tightest result bounding $\R(\H_M^1)$ known so far is:
\begin{theorem}[\cite{CorMohRos10}]\label{thm:cortes}
    Let $M>1$ and assume that $k_m(\x,\x)\leq R^2$ for all $\x\in\mathcal X$ and $m=1,\ldots,M$. Then, for any sample of size $n$, the  Rademacher complexity of the hypothesis set 
    $\H_M^1$ can be bounded as follows (where $\c:=23/22$):%\footnote{For any $s\in\mathbb R$ we define $\lceil s \rceil$  as the smallest integer with $\lceil s\rceil>s$.}
    $$\R(\H_M^1)\leq \sqrt{\frac{\c e\lceil\log M\rceil R^2}{n}} .$$
\end{theorem}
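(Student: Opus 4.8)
The plan is to reconstruct the argument of \cite{CorMohRos10} in three moves: reparametrize $\H_M^1$ as a block-$\ell_1$/$\ell_2$ ball, evaluate the supremum defining $\R(\H_M^1)$ by passing to the dual norm, and control the resulting expected maximum of $M$ Hilbert-space Rademacher sums by the moment method. For the first move, substitute $\v_m:=\sqrt{\theta_m}\,\w_m$; the Cauchy--Schwarz inequality $\sum_{m}\Vert\v_m\Vert_{\mathcal H_m}=\sum_m\sqrt{\theta_m}\,\Vert\w_m\Vert_{\mathcal H_m}\le\big(\sum_m\theta_m\big)^{1/2}\big(\sum_m\Vert\w_m\Vert_{\mathcal H_m}^2\big)^{1/2}\le1$ shows that every $h\in\H_M^1$ can be written as $h(\x)=\sum_m\langle\v_m,\psi_m(\x)\rangle_{\mathcal H_m}$ with $\sum_m\Vert\v_m\Vert_{\mathcal H_m}\le1$; the reverse inclusion follows by choosing $\theta_m=\Vert\v_m\Vert_{\mathcal H_m}\big/\sum_{m'}\Vert\v_{m'}\Vert_{\mathcal H_{m'}}$ (and $\w_m=\v_m/\sqrt{\theta_m}$, with the obvious convention when some $\v_m=0$). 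Hence $\H_M^1$ is exactly the unit ball of the block norm $\v\mapsto\sum_m\Vert\v_m\Vert_{\mathcal H_m}$, whose dual norm is $\u\mapsto\max_m\Vert\u_m\Vert_{\mathcal H_m}$.

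Next, interchanging the two finite sums inside the expectation and using this dual norm, one obtains
$$\R(\H_M^1)=\E\bigg[\sup_{\sum_m\Vert\v_m\Vert\le1}\frac1n\sum_{i=1}^n\sigma_i\sum_{m=1}^M\langle\v_m,\psi_m(\x_i)\rangle_{\mathcal H_m}\bigg]=\frac1n\,\E\bigg[\max_{1\le m\le M}\Big\Vert\sum_{i=1}^n\sigma_i\psi_m(\x_i)\Big\Vert_{\mathcal H_m}\bigg].$$
Write $Z_m:=\big\Vert\sum_i\sigma_i\psi_m(\x_i)\big\Vert_{\mathcal H_m}\ge0$. For any $q\ge1$, Jensen's inequality together with a sum over $m$ gives the standard estimate $\E\max_mZ_m\le\big(\E\max_mZ_m^q\big)^{1/q}\le\big(\sum_m\E Z_m^q\big)^{1/q}\le M^{1/q}\max_m\big(\E Z_m^q\big)^{1/q}$. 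Conditioned on the sample, $\sum_i\sigma_i\psi_m(\x_i)$ is a Rademacher sum in the Hilbert space $\mathcal H_m$ with $\E_\sigma Z_m^2=\sum_i k_m(\x_i,\x_i)\le nR^2$, since $\E_\sigma[\sigma_i\sigma_j]=\delta_{ij}$ and $k_m(\x,\x)\le R^2$.

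The remaining task is a $q$-th moment bound for this vector-valued Rademacher sum. Qualitatively it is immediate from the Khintchine--Kahane inequality in Hilbert space (equivalently, hypercontractivity of the Rademacher noise operator): $\big(\E_\sigma Z_m^q\big)^{1/q}\le\sqrt{q-1}\,\big(\E_\sigma Z_m^2\big)^{1/2}\le\sqrt{q-1}\,\sqrt{n}\,R$ for $q\ge2$, and since this is a deterministic bound it survives taking expectation over the sample. To pin down the exact constant $\c=23/22$ in the statement, one takes $q$ to be the even integer $2\lceil\log M\rceil$ and estimates $\E_\sigma Z_m^{2k}=\E_\sigma\big(\sum_{i,j}\sigma_i\sigma_j k_m(\x_i,\x_j)\big)^k$ directly: expand the quadratic form, discard the Rademacher monomials of odd degree (which vanish in expectation), and use $K_m:=(k_m(\x_i,\x_j))_{ij}\succeq0$ together with $(K_m)_{ii}\le R^2$; a careful count of the surviving terms yields a bound of the form $\big(\E_\sigma Z_m^{2k}\big)^{1/k}\le\c\,k\,nR^2$.

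Substituting this into the chain above with $q=2k=2\lceil\log M\rceil$, and using $M^{1/q}=M^{1/(2\lceil\log M\rceil)}\le M^{1/(2\log M)}=\sqrt e$, gives
$$\R(\H_M^1)\le\frac1n\,M^{1/q}\sqrt{\c\,k\,nR^2}=\frac1n\,\sqrt e\,\sqrt{\c\,\lceil\log M\rceil\,nR^2}=\sqrt{\frac{\c\,e\,\lceil\log M\rceil\,R^2}{n}},$$
which is the claimed bound. The only genuinely delicate point is the last moment estimate: getting the order right, $\big(\E_\sigma Z_m^q\big)^{1/q}=O(\sqrt q\,\sqrt n\,R)$, is routine from Khintchine--Kahane, but obtaining it with a constant clean enough that the choice $q=2\lceil\log M\rceil$ produces exactly the factor $\c e$ with $\c=23/22$ requires the careful bookkeeping in the even-moment expansion of $\sum_{i,j}\sigma_i\sigma_j k_m(\x_i,\x_j)$. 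Steps one through three and the final optimization over $q$ are elementary.
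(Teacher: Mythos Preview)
The paper does not supply its own proof of this statement: Theorem~\ref{thm:cortes} is quoted verbatim as a result of \cite{CorMohRos10} and used as a black box to feed into the $\ell_1$-to-$\ell_p$ conversion (Theorem~\ref{MainTheorem}) and its corollaries. So there is nothing in the present paper to compare against.

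That said, your reconstruction follows the standard route of \cite{CorMohRos10} and is structurally sound. The block-norm reparametrization of $\H_M^1$ (your first move) is exactly the $p=1$ case of Proposition~\ref{prop:block-norm} in this paper, and the dual-norm identity reducing $\R(\H_M^1)$ to $\frac{1}{n}\,\E\max_m\big\Vert\sum_i\sigma_i\psi_m(\x_i)\big\Vert_{\mathcal H_m}$ is correct. The moment method with $q=2\lceil\log M\rceil$ and the estimate $M^{1/(2\lceil\log M\rceil)}\le\sqrt e$ is the right mechanism for producing the $e\lceil\log M\rceil$ factor. The only part you leave at the level of a sketch is the combinatorial even-moment bound $(\E_\sigma Z_m^{2k})^{1/k}\le \c\,k\,nR^2$ with the specific constant $\c=23/22$; you correctly identify this as the delicate step, and indeed in \cite{CorMohRos10} it is obtained by a direct expansion of $\E_\sigma\big(\sum_{i,j}\sigma_i\sigma_j k_m(\x_i,\x_j)\big)^k$ and an explicit count of the non-vanishing Rademacher monomials. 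Without carrying that count through, your argument yields the correct rate with an unspecified absolute constant, which for the purposes of this paper is already more than what is needed.
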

The above result directly leads to a $O(\sqrt{\log M})$ bound on the generalization error and thus substantially improves on a series of loose results given within the past years \cite[see][and references therein]{CorMohRos10}. We can use the above result (or any other similar result\footnote{The point here is that we could use any $\ell_1$-bound, for example, the bounds of \cite{KakShaTew10} and \cite{KloRueBar10} have the same favorable $O(\log M)$ rate; in particular, whenever a new $\ell_1$-bound is proven, we can plug it into our conversion technique to obtain a new bound.}) to obtain a bound for $\H_M^p$:
\begin{corollary*}[of the previous two theorems]
  Let $M>1$ and assume that $k_m(\x,\x)\leq R^2$ for all $\x\in\mathcal X$ and $m=1,\ldots,M$. Then, for any sample of size $n$, the  Rademacher complexity of the hypothesis set 
  $\H_M^1$ can be bounded as follows: 
  $$ \forall p \in [1,...,\infty]: \quad \R(\H_M^p)\leq \sqrt{\frac{\c e M^{1/p^*}\lceil\log M\rceil R^2}{n}} ,$$
  where $p^*:=p/(p-1)$ is the conjugated exponent of $p$ and $\c:=23/22$.
\end{corollary*}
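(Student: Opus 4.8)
The plan is to simply chain the two preceding theorems, as the corollary is nothing more than their composition. First I would invoke Theorem~\ref{MainTheorem} (the $\ell_1$-to-$\ell_p$ conversion), which states that for every $p\in[1,\infty]$ and every sample of size $n$,
$$\R(\H_M^p)\leq \sqrt{M^{1/p^*}}\,\R(\H_M^1),$$
where $p^*=p/(p-1)$. Then I would substitute the bound of Theorem~\ref{thm:cortes} for the right-hand factor: under the stated assumption $k_m(\x,\x)\leq R^2$ for all $\x\in\mathcal X$ and all $m=1,\ldots,M$, and for $M>1$, one has $\R(\H_M^1)\leq\sqrt{\c e\lceil\log M\rceil R^2/n}$ with $\c=23/22$. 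Multiplying the two estimates and pulling the factor $M^{1/p^*}$ under the common square root gives exactly
$$\R(\H_M^p)\leq \sqrt{M^{1/p^*}}\cdot\sqrt{\frac{\c e\lceil\log M\rceil R^2}{n}} = \sqrt{\frac{\c e M^{1/p^*}\lceil\log M\rceil R^2}{n}},$$
which is the claimed inequality.

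Since the derivation is a one-step composition, there is no genuine obstacle; the only points deserving (brief) mention are that the hypotheses of the corollary ($M>1$ and the boundedness $k_m(\x,\x)\leq R^2$) are precisely those needed to invoke Theorem~\ref{thm:cortes}, and that the conjugate exponent $p^*$ and the constant $\c$ are used consistently across the two cited statements. I would also spell out the boundary cases as a sanity check: for $p=1$ we have $p^*=\infty$, hence $M^{1/p^*}=M^0=1$ and the bound reduces to Theorem~\ref{thm:cortes}; for $p=\infty$ we have $p^*=1$, hence $M^{1/p^*}=M$, recovering the essentially unavoidable linear-in-$M$ behaviour of the unweighted-sum regime.

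Finally, rather than a proof step I would add a remark pointing out that, via Remark~\ref{theremark}, the very same two-line argument converts Theorem~\ref{thm:cortes} into a Rademacher bound for the hypothesis class induced by \emph{any} norm on $\mathbb R^M$, with $M^{1/p^*}$ replaced by the corresponding norm-equivalence constant, and that any future sharpening of the $\ell_1$-bound in Theorem~\ref{thm:cortes} can be plugged in verbatim to improve the corollary.
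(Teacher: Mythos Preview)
Your proposal is correct and follows exactly the route the paper intends: the corollary is stated as an immediate consequence of the two preceding theorems, and the paper does not even spell out a proof beyond the phrase ``of the previous two theorems.'' Your chaining of Theorem~\ref{MainTheorem} with Theorem~\ref{thm:cortes}, together with the sanity checks at $p=1$ and $p=\infty$ and the remark on general norms, is precisely what is meant.
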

It is instructive to compare the above bound, which we obtained by our $\ell_1$-to-$\ell_p$ conversion technique, with the one given in \cite{CorMohRos10}: that is $\R(\H_M^p)\leq \sqrt{\frac{\c e p^* M^{1/p^*} R^2}{n}}$ for any $p\in[1,...,\infty]$ such that $p^*$ is an integer. 
First, we observe that for $p=2$ the bounds' rates almost coincide: they only differ by a $\log M$-factor, which is  unsubstantial due to the presence of a polynomial term that domiates the asymptotics. Second, we observe that for small $p$ (close to one), the $p^*$-factor in the Cortes-bound leads to considerably high constants. When $p$ approaches one,
it even diverges to infinity. In contrast, our bound converges to  $\R(\H_M^p)\leq \sqrt{\frac{\c e \lceil\log M\rceil R^2}{n}}$ when $p$ approaches one, which is precisely the tight 1-norm bound of Thm.~\ref{thm:cortes}. Finally, it is also interesting to consider the case $p\geq 2$ (which is not covered by the \cite{CorMohRos10} bound): if we let $p\rightarrow\infty$, we obtain
$\R(\H_M^p)\leq \sqrt{\frac{\c e M\lceil\log M\rceil R^2}{n}}$. Beside the unsubstantial $\log M$-factor, our so obtained ${\cal O}\left(\sqrt{M}\ln(M)\right)$ bound matches the well-known  ${\cal O}\left(\sqrt{M}\right)$ lower bounds based on the VC-dimension \cite[e.g.,][Section 14]{DevGyoLug96}.
}

\blue{
We now make use of the above analysis of the Rademacher complexity  to bound the generalization error. There are many results in the literature that can be employed to this aim. Ours is based on 
%Thm.~8, Thm.~11, and Thm.~12.4 
Thm.~7 in \cite{BarMen02}:
\begin{corollary}\label{eq:loss_bound}
Let $M>1$ and $p\in[1,...,\infty]$. Assume that $k_m(\x,\x)\leq R^2$ for all $\x\in\mathcal X$ and $m=1,\ldots,M$. 
Assume the loss $\loss:\mathbb R\rightarrow [0,1]$ is Lipschitz with constant $L$ and $\loss(t)\geq 1$ for all $t\leq 0$.  Set $p^*:=p/(p-1)$ and  $\c:=23/22$.
Then, the following holds with probability larger than $1-\delta$ over samples of size $n$ for all classifiers $h \in \H_M^p$: 
\begin{align}
   R(h) \leq \widehat{R}(h) + 2L \sqrt{\frac{\c e M^{1/p^*}\lceil\log M\rceil R^2}{n}}  +  \sqrt{\frac{ \ln (2/\delta)}{2n}} ,
\end{align}
where $R(h)={\rm P}\big[yh(\x)\leq0\big]$ is the expected risk w.r.t. 0-1 loss and $\widehat{R}(h)=\frac{1}{n}\sum_{i=1}^n \loss( y_i h(x_i))$ is the empirical risk w.r.t. loss $V$.
\end{corollary}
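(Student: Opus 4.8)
The plan is to combine a generic data-dependent generalization bound with the Rademacher complexity estimate supplied by the preceding Corollary, using the Lipschitz contraction principle to bridge the two. All three ingredients are standard; the proof is essentially bookkeeping.

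First I would invoke Theorem~7 of \cite{BarMen02}: for any class $\mathcal G$ of functions taking values in $[0,1]$, with probability at least $1-\delta$ over an i.i.d.\ sample $z_1,\ldots,z_n$, every $g\in\mathcal G$ satisfies
\begin{equation*}
  \E\big[g(z)\big] \;\leq\; \frac1n\sum_{i=1}^n g(z_i) \;+\; 2\,\R(\mathcal G) \;+\; \sqrt{\frac{\ln(2/\delta)}{2n}}.
\end{equation*}
I would apply this with $z=(\x,y)$ to the class $\mathcal G:=\big\{(\x,y)\mapsto\loss\big(yh(\x)\big):h\in\H_M^p\big\}$, which is $[0,1]$-valued by hypothesis. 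This yields, uniformly over $h\in\H_M^p$,
\begin{equation*}
  \E\big[\loss(yh(\x))\big] \;\leq\; \widehat{R}(h) \;+\; 2\,\R(\mathcal G) \;+\; \sqrt{\frac{\ln(2/\delta)}{2n}}.
\end{equation*}

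Second, I would remove the loss from the complexity term. Subtracting the constant $\loss(0)$ leaves $\R(\mathcal G)$ unchanged, since a constant shift is annihilated by $\E[\sigma_i]=0$; and $t\mapsto\loss(t)-\loss(0)$ is $L$-Lipschitz and vanishes at $0$, so the contraction lemma (Ledoux--Talagrand; see also \cite{BarMen02}) gives $\R(\mathcal G)\leq L\,\E\big[\sup_{h\in\H_M^p}\tfrac1n\sum_{i=1}^n\sigma_i\,y_ih(\x_i)\big]$. Since the labels $y_i\in\{-1,1\}$ are independent of the $\sigma_i$, the products $\sigma_iy_i$ are again i.i.d.\ Rademacher variables, so the right-hand side equals $L\,\R(\H_M^p)$. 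Third, I would relate the quantity of interest to the surrogate risk: because $\loss\geq0$ everywhere and $\loss(t)\geq1$ for $t\leq0$, we have $\mathbf 1[t\leq0]\leq\loss(t)$ for every $t\in\mathbb R$, whence $R(h)={\rm P}[yh(\x)\leq0]=\E\big[\mathbf 1[yh(\x)\leq0]\big]\leq\E\big[\loss(yh(\x))\big]$. Chaining the displays above with these two inequalities and finally substituting the bound $\R(\H_M^p)\leq\sqrt{\c e M^{1/p^*}\lceil\log M\rceil R^2/n}$ from the preceding Corollary produces exactly the claimed inequality.

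The only point that needs genuine care is the contraction step: one must apply the contraction lemma to the $L$-Lipschitz scalar map $\loss$ composed with the scalars $y_ih(\x_i)$ (not to $h$ directly), check that shifting by $\loss(0)$ does not alter the Rademacher complexity, and verify that the $\pm1$ labels may be folded into the Rademacher signs; one also has to track constants so that the particular form of Theorem~7 together with the version of the contraction inequality used yields the factor $2L$ rather than $4L$ in front of the complexity term. Everything else is a direct substitution, so the final write-up can be kept to a few lines.
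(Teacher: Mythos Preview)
Your proposal is correct and follows exactly the route the paper takes: the paper does not give a written-out proof but simply states that the result is ``based on Thm.~7 in \cite{BarMen02}'' applied with the Rademacher bound $\R(\H_M^p)\leq\sqrt{\c e M^{1/p^*}\lceil\log M\rceil R^2/n}$ from the preceding corollary. Your write-up merely fills in the standard details (contraction, absorbing the labels into the Rademacher signs, upper bounding the 0-1 loss by $\loss$) that the paper leaves implicit; your caution about the constant $2L$ versus $4L$ is warranted, and the $2L$ constant matches what the paper records.
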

%Alternatively,  we can use a result by \cite{KolPan02} to obtain a radius-margin bound:
The above theorem is formulated for general Lipschitz loss functions. Since the margin loss $V(t)=\min\big(1,[1-t/\margin]_+\big)$ is Lipschitz with constant $1/\gamma$ and upper bounding the 0-1 loss, it 
fulfills the preliminaries of the above corollary. Hence, we immediately obtain the following radius-margin bound \cite[see also][]{KolPan02}:
\begin{corollary}[$\ell_p$-norm MKL Radius-Margin Bound]
Fix the margin $\margin>0$. Let $M>1$ and $p\in[1,...,\infty]$. Assume that $k_m(\x,\x)\leq R^2$ for all $\x\in\mathcal X$ and $m=1,\ldots,M$. 
 Set $p^*:=p/(p-1)$ and  $\c:=23/22$.
Then, the following holds with probability larger than $1-\delta$ over samples of size $n$ for all classifiers $h \in \H_M^p$: 
\begin{align}
   R(h) \leq \widehat{R}(h) +  \frac{2R}{\margin}\sqrt{\frac{\c e M^{1/p^*}\lceil\log M\rceil }{n}}  +  \sqrt{\frac{ \ln(2/\delta)}{2n}} ,
\end{align}
where $R(h)={\rm P}\big[yh(\x)\leq0\big]$ is the expected risk w.r.t. 0-1 loss and $\widehat{R}(h)=\frac{1}{n}\sum_{i=1}^n \min\big(1,[1-y_i h(\x_i)/\margin]_+\big)$ the empirical risk w.r.t. margin loss.
\end{corollary}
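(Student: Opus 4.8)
The plan is to obtain this corollary as an immediate specialization of Corollary~\ref{eq:loss_bound} to the clipped hinge (margin) loss $\loss(t)=\min\big(1,[1-t/\margin]_+\big)$. All the substance has already been established upstream, so the only work is to check that this particular $\loss$ meets the three hypotheses of Corollary~\ref{eq:loss_bound} and then to read off the resulting bound.

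First I would verify the three preconditions. (i) $\loss$ maps $\mathbb R$ into $[0,1]$: the inner term $[1-t/\margin]_+$ is nonnegative by definition of $[\cdot]_+$, and the outer $\min(1,\cdot)$ caps the value at $1$. (ii) $\loss(t)\geq 1$ for every $t\leq 0$: for such $t$ we have $1-t/\margin\geq 1$ since $\margin>0$, hence $[1-t/\margin]_+\geq 1$ and therefore $\loss(t)=1$; in particular $\loss$ dominates the $0/1$ loss $\mathbf 1[t\leq 0]$, which is exactly the property Corollary~\ref{eq:loss_bound} uses to relate $\widehat R$ to the true risk $R(h)={\rm P}[yh(\x)\leq 0]$. (iii) $\loss$ is Lipschitz with constant $L=1/\margin$: the affine map $t\mapsto 1-t/\margin$ has Lipschitz constant $1/\margin$, the positive-part operation $[\cdot]_+$ is $1$-Lipschitz, and $\min(1,\cdot)$ is $1$-Lipschitz, so the composition has Lipschitz constant at most $1/\margin$.

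Then I would invoke Corollary~\ref{eq:loss_bound} with this $\loss$ and $L=1/\margin$. Its conclusion reads $R(h)\leq \widehat R(h)+2L\sqrt{\c e M^{1/p^*}\lceil\log M\rceil R^2/n}+\sqrt{\ln(2/\delta)/(2n)}$, uniformly over $h\in\H_M^p$ with probability at least $1-\delta$. Substituting $L=1/\margin$ and pulling the factor $R^2$ out of the square root rewrites the middle term as $\frac{2R}{\margin}\sqrt{\c e M^{1/p^*}\lceil\log M\rceil/n}$, which is precisely the stated bound; and the empirical term $\widehat R(h)=\frac1n\sum_{i=1}^n \loss(y_i h(\x_i))$ is exactly the claimed margin-loss empirical risk.

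There is essentially no obstacle beyond this bookkeeping: the heavy lifting — the $\ell_1$-to-$\ell_p$ conversion (Theorem~\ref{MainTheorem}), the $O(\sqrt{\log M})$ bound on $\R(\H_M^1)$ (Theorem~\ref{thm:cortes}), and the passage from Rademacher complexity to a uniform generalization bound (Corollary~\ref{eq:loss_bound}, via Theorem~7 of Bartlett and Mendelson together with a contraction inequality for the Lipschitz loss) — is already in place. The single point deserving a sentence of care is the Lipschitz-constant computation in step (iii), since an incorrect constant would rescale the dominant term; everything else is a direct read-off.
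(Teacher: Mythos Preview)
Your proposal is correct and follows precisely the paper's own argument: the paper simply notes that the margin loss $\loss(t)=\min\big(1,[1-t/\margin]_+\big)$ is Lipschitz with constant $1/\margin$ and upper bounds the 0-1 loss, hence satisfies the hypotheses of Corollary~\ref{eq:loss_bound}, from which the stated bound follows immediately. Your write-up is a slightly more detailed version of exactly this specialization.
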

}

\blue{
Finally, we would like to point out that, for reasons stated in Remark~\ref{theremark}, our $\ell_1$-to-$\ell_p$ conversion technique lets us easily extend the above bounds 
to norms different than $\ell_p$\hspace{0.8mm}. This includes, for example, block norms and sums of block norms as used in elastic-net regularization \cite[see][for such bounds]{KloRueBar10}, but also non-isotropic norms such as weighted $\ell_p$-norms.
}

%\footnote{The presented result is an immediate corollary of Thm.~8~\&~Thm.~12.4 in \cite{BarMen02}.}
%\begin{theorem}[Bartlett \& Mendelson, 2002]
%Assume the loss $\loss$ is Lipschitz with constant $L$ and $0\leq \loss(\x)\leq 1$ for all $\x$.  
%Then, the following holds with probability larger than $1-\delta$ for all classifiers $h \in \mathcal{H}$: 
%\begin{align}
%\E[\loss(yh(x))] \leq \frac{1}{n} \sum_{i=1}^n \loss( y_i h(x_i)) + 2L \mathcal{R}_{\mathcal{H}} +  \sqrt{\frac{8 \ln \frac{2}{\delta}}{n}} \label{RademacherBound}~.
%\end{align}
%\end{theorem}

\blue{
\subsection{Case-based Analysis of a Sparse and a Non-Sparse Scenario}
From the results given in the last section it seems that it is beneficial to use a sparsity-inducing $\ell_1$-norm penalty when learning with multiple kernels.
This however somewhat contradicts our empirical evaluation, which indicated that the optimal norm parameter $p$ depends on the true underlying sparsity of 
the problem. Indeed, as we show below, a refined theoretical analysis supports this intuitive claim. 
We show that if the underlying truth is uniformly non-sparse, then a priori there is no $p$-norm which is more promising than another one.
On the other hand, we illustrate that in a sparse scenario, the sparsity-inducing $\ell_1$-norm indeed can be beneficial. 
}

\blue{
We start by reparametrizing our hypothesis set based on block norms: by  Prop.~\ref{prop:block-norm} it holds that
$$  \H_M^p =  \left\{ h:\mathcal X\rightarrow \mathbb R ~ \bigg| ~ h(\x)= \sum_{m=1}^M \langle\w_m,\psi_m(\x)\rangle_{\mathcal H_m}, ~ \Vert\w\Vert_{2,q}\leq 1, ~ q:=2p/(p+1) \right\}, $$
where $||\w||_{2,q}:= \left(\sum_{m=1}^M ||\w_m||_{\mathcal H_m}^q\right)^{1/q}$ is the $\ell_{2,q}$-block norm. 
%the above definition is consistent with the one from the previous section; 
This means we can equivalently parametrize our hypothesis set in terms of block norms. 
Second, let us generalize the set by introducing an additional parameter $C$ as follows
$$  {\text{\small{$^C$}}}{\H_M^p} :=  \left\{ h:\mathcal X\rightarrow \mathbb R ~ \bigg| ~ h(\x)= \sum_{m=1}^M \langle\w_m,\psi_m(\x)\rangle_{\mathcal H_m}, ~ \Vert\w\Vert_{2,q}\leq C, ~ q:=2p/(p+1) \right\}. $$
Clearly,  ${\text{\small{$^{C}$}}}\H_M^p=\H_M^p$ for $C=1$, which explains why the  parametrization via $C$ is more general.
It is straight forward to verify  that $\R\left({\text{\small{$^C$}}}\H_M^p\right) = C\R\left(\H_M^p\right)$ for any $C$.
Hence, under the preliminaries of Corollary~\ref{eq:loss_bound}, we have
\begin{equation}\label{eq:bayes_boundEx1}
 R(h) \leq \widehat{R}(h) + 2L \sqrt{\frac{\c e M^{1/p^*}\lceil\log M\rceil R^2C^2}{n}}  +  \sqrt{\frac{ \ln(2/\delta)}{2n}}. 
\end{equation}
We will exploit the above bound in the following two illustrate examples.
\begin{figure}[h]
  \centering
  \subfigure{  \vspace{-1cm}\includegraphics[trim = 0mm -50mm 0mm 0mm,width=0.37\textwidth]{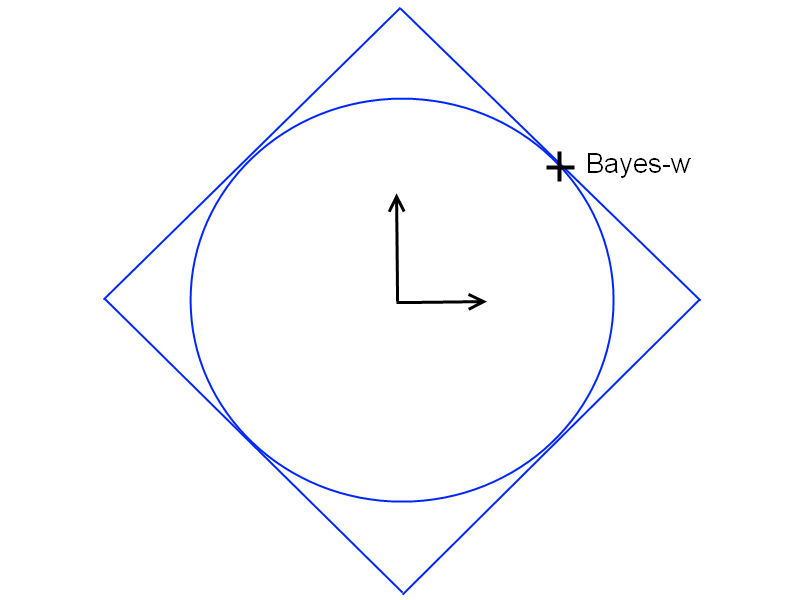}}
  \subfigure{ \includegraphics[width=0.37\textwidth]{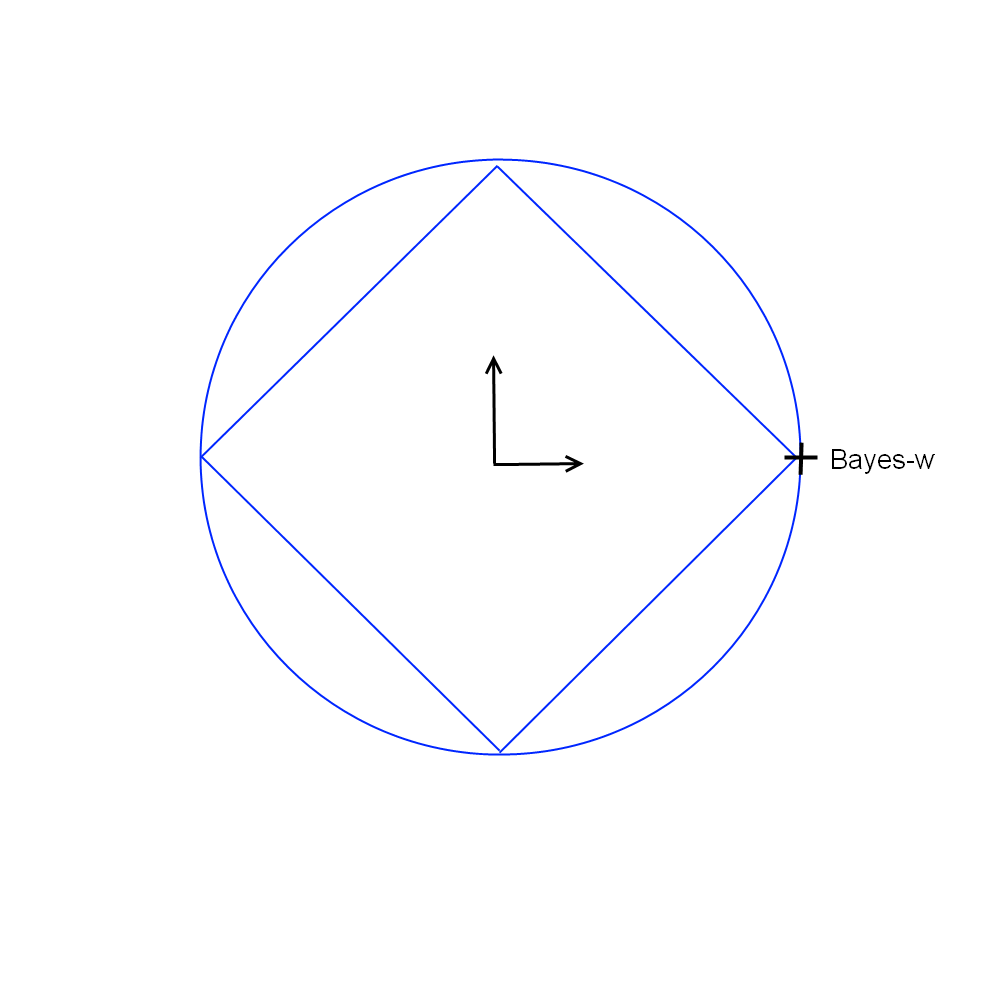}}
  \vspace{-1cm}
    \caption{ \small \label{fig:ex}
    Illustration of the two analyzed cases: a uniformly non-sparse (Example 1, left) and a sparse (Example 2, right) Scenario.
  }
\end{figure}
}

\blue{
\paragraph{Example 1.}
Let the input space be $\mathcal X=\mathbb R^M$, and the feature map be $\psi_m(\x)= x_m$ for all $m=1,\ldots,M$ and $\x=(x_1,...,x_M)\in\mathcal X$ (in other words, $\psi_m$ is a projection on the $m$th feature). Assume that the Bayes-optimal classifier is given by
$$ \w_{\rm Bayes} = (1,\ldots,1)^\top\in\mathbb R^M .$$
This means the best classifier possible is uniformly non-sparse (see Fig.~\ref{fig:ex}, left).
Clearly, it can be advantageous to work with a hypothesis set that is rich enough to contain the Bayes classifier, i.e.\ 
$(1,\ldots,1)^\top\in {\text{\small{$^C$}}}\H_M^p$.
In our example, this is the case if and only if $\Vert(1,\ldots,1)^\top\Vert_{2p/(p+1)}\leq C$, which itself is equivalent to $M^{(p+1)/2p}\leq C$.
The bound \eqref{eq:bayes_boundEx1} attains its minimal value under the latter constraint for $M^{(p+1)/2p}= C$. 
Resubstitution into the bound yields
$$ R(h) \leq \widehat{R}(h) + 2L \sqrt{\frac{\c e M^{2}\lceil\log M\rceil R^2}{n}}  +  \sqrt{\frac{ \ln(2/\delta)}{2n}}. $$
Interestingly, the obtained bound does not depend on the norm parameter $p$ at all! This means that in this particular (non-sparse) example
all $p$-norm MKL variants yield the same generalization bound. There is thus no theoretical evidence which norm to prefer a priori.
}

\blue{
\paragraph{Example 2.}
In this second example we  consider the same input space and kernels as before. 
But this time we assume a \emph{sparse} Bayes-optimal classifier (see Fig.~\ref{fig:ex}, right)
$$ \w_{\rm Bayes} = (1,0,\ldots,0)^\top\in\mathbb R^M .$$
As in the previous example, in order $\w_{\rm Bayes}$ to be in the hypothesis set, we have to require
$\Vert(1,0,\ldots,0)^\top\Vert_{2p/(p+1)}\leq C$. But this time this simply solves to $C\geq 1$, which is independent of the norm parameter $p$.
Thus, inserting $C=1$ in the bound \eqref{eq:bayes_boundEx1}, we obtain 
$$R(h) \leq \widehat{R}(h) + 2L \sqrt{\frac{\c e M^{1/p^*}\lceil\log M\rceil R^2}{n}}  +  \sqrt{\frac{ \ln(2/\delta)}{2n}},$$
which is precisely the bound of Corollary~\ref{eq:loss_bound}. It is minimized for $p=1$; thus, in this particular sparse example,
the bound is considerably smaller for sparse MKL---especially, if the number of kernels is high compared to the sample size. 
This is also intuitive: if the underlying truth is sparse, we expect a sparsity-inducing norm to match well the ground truth.
}

\blue{
\paragraph{}We conclude from the previous two examples that the optimal norm parameter $p$ depends on the underlying ground truth:
if it is sparse, then choosing a sparse regularization is beneficial; otherwise, any norm $p$ can perform 
well. I.e., without any domain knowledge there is no norm that a priori should be preferred. 
Remarkably, this still holds when we increase the number of kernels. 
This is somewhat contrary to anecdotal reports, which claim that sparsity-inducing norms are beneficial in high (kernel) dimensions.
This is because those analyses implicitly assume the ground truth to be sparse. 
The present paper, however, clearly shows that we might encounter a non-sparse ground truth in practical applications (see experimental section).
% and thus sparsity by itself should not be considered the ultimate virtue to be strived for.
}

\section{Switching between Tikhonov and Ivanov Regularization}\label{APP-proofs}

In this appendix, we show a useful result that justifies switching from Tikhonov to Ivanov regularization and vice versa, if the bound on the regularizing constraint is tight. 
It is the key ingredient of the proof of Theorem \ref{th:reg-obj}.
We state the result for arbitrary convex functions, so that it can be applied beyond the multiple kernel learning framework of this paper.
%For the proof of the more general analogue in terms of regularization operators, cf. \cite{IvaVasTan}.
\begin{proposition}\label{prop:pareto}
Let $D\subset\mathbb R^d$ be a convex set, let $f,g:D\rightarrow\mathbb R$ be convex functions. Consider the convex optimization
tasks
\begin{alignat}{3}
  \refstepcounter{equation}
     & ~ \min_{\x\in D} \quad && f(\x)+\sigma g(\x) \label{op:prop(a)} \tag{\theequation a}, \\
     &\min_{\x\in D:g(\x)\leq \tau} \quad \quad \quad && f(\x)  \label{op:prop(b)}\tag{\theequation b} .
\end{alignat}
Assume that the minima exist and that a constraint qualification holds in \eqref{op:prop(b)},
which gives rise to strong duality, e.g., that Slater's condition is
satisfied.  Furthermore assume that the constraint is active at the optimal point, i.e.
\begin{equation}\label{eq:constr_active}
  \inf_{\x\in D} ~  f(\x) ~ <  ~ \inf_{\x\in D:g(\x)\leq \tau} ~  f(\x)  .
\end{equation}
Then we have that for each $\sigma > 0$ there exists $\tau > 0$---and
vice versa---such that OP~\eqref{op:prop(a)} is equivalent to
OP~\eqref{op:prop(b)}, i.e., each optimal solution of one is an
optimal solution of the other, and vice versa.
\end{proposition}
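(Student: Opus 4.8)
The plan is to derive the penalty weight $\sigma$ from the optimal Lagrange multiplier of the inequality constraint in~\eqref{op:prop(b)}, using the active-constraint hypothesis to ensure this multiplier is \emph{strictly} positive. I would start with the direction ``$\tau\mapsto\sigma$''. Fix a $\tau>0$ for which all hypotheses hold and form the Lagrangian $L(\x,\lambda)=f(\x)+\lambda\bigl(g(\x)-\tau\bigr)$. By the assumed constraint qualification (e.g.\ Slater), strong duality holds: there is an optimal multiplier $\lambda^\star\ge 0$ with $\inf_{\x\in D}L(\x,\lambda^\star)=\inf_{\x\in D:\,g(\x)\le\tau}f(\x)=:p^\star$, and any optimal $\x^\star$ of~\eqref{op:prop(b)} forms with $\lambda^\star$ a saddle point of $L$, so $\x^\star$ minimises $L(\cdot,\lambda^\star)$ over $D$ and complementary slackness $\lambda^\star\bigl(g(\x^\star)-\tau\bigr)=0$ holds.

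The first real step is to show $\lambda^\star>0$. If instead $\lambda^\star=0$, then $\inf_{\x\in D}L(\x,0)=\inf_{\x\in D}f(\x)$, which by strong duality equals $p^\star=\inf_{\x\in D:\,g(\x)\le\tau}f(\x)$, directly contradicting~\eqref{eq:constr_active}; hence $\lambda^\star>0$, and complementary slackness then forces $g(\x^\star)=\tau$. Now I would set $\sigma:=\lambda^\star$ and use the identity $\inf_{\x\in D}L(\x,\sigma)=\inf_{\x\in D}\bigl(f(\x)+\sigma g(\x)\bigr)-\sigma\tau$: minimising $L(\cdot,\sigma)$ over $D$ is literally problem~\eqref{op:prop(a)}, so every optimal solution of~\eqref{op:prop(b)} is optimal for~\eqref{op:prop(a)} with this $\sigma$. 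For the converse inclusion I would take any minimiser $\bar\x$ of $f+\sigma g$ over $D$, note $f(\bar\x)+\sigma g(\bar\x)=f(\x^\star)+\sigma g(\x^\star)=p^\star+\sigma\tau$, and argue that $\bar\x$ is feasible \emph{and} tight for~\eqref{op:prop(b)}, i.e.\ $g(\bar\x)=\tau$ (whence $f(\bar\x)=p^\star$, giving optimality in~\eqref{op:prop(b)}). The reverse passage ``$\sigma\mapsto\tau$'' is then the elementary mirror image: given $\sigma>0$, pick a minimiser $\x^\star$ of~\eqref{op:prop(a)}, set $\tau:=g(\x^\star)$, observe $\x^\star$ is feasible for~\eqref{op:prop(b)}, and note that any feasible competitor $\x$ with $f(\x)<f(\x^\star)$ would satisfy $f(\x)+\sigma g(\x)<f(\x^\star)+\sigma\tau=f(\x^\star)+\sigma g(\x^\star)$, contradicting optimality of $\x^\star$ in~\eqref{op:prop(a)}; so $\x^\star$ is optimal for~\eqref{op:prop(b)}, and~\eqref{eq:constr_active} holds for this $\tau$ by assumption, which feeds back into the first half to recover $\sigma$.

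The step I expect to be the main obstacle is exactly the ``tightness/feasibility'' claim for Tikhonov minimisers used above: a priori a minimiser $\bar\x$ of $f+\sigma g$ over $D$ need not satisfy $g(\bar\x)\le\tau$, and the two solution sets coincide only once this is excluded. The tool I would reach for is convexity together with~\eqref{eq:constr_active}: since $\sigma=\lambda^\star>0$, any $\x\in D$ with $f(\x)<p^\star$ must have $g(\x)>\tau$ (otherwise $f(\x)+\sigma g(\x)<p^\star+\sigma\tau=\inf_{\x\in D}L(\x,\sigma)$), and one then plays the segment joining such a ``witness'' point, an optimal $\x^\star$ of~\eqref{op:prop(b)}, and the candidate $\bar\x$ against the convexity of $f$, of $g$, and of the common minimum value of $f+\sigma g$. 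Making this chain of inequalities airtight is the technical core; it is cleanest when the minimisers are unique (as in the MKL instance of Theorem~\ref{th:reg-obj}, where positive definiteness of the kernels and $\w^\star\neq\zero$ pin down both $\w^\star$ and $\vtheta^\star$), and in the general convex case one keeps track of the subset of optimal solutions that actually transfers between the two formulations.
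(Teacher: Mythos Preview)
Your approach is essentially the paper's: for $\sigma\mapsto\tau$ you set $\tau:=g(\x^\star)$ and argue by contradiction that any feasible improver would beat $\x^\star$ in~\eqref{op:prop(a)}; for $\tau\mapsto\sigma$ you form the Lagrangian, invoke strong duality to obtain an optimal multiplier, and use the active-constraint hypothesis~\eqref{eq:constr_active} to rule out $\sigma=0$. This is exactly what the paper does (in the reverse order of presentation).

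Where you go beyond the paper is in attempting the \emph{full} equivalence of optimal-solution sets, and you are right to flag this as the obstacle: the paper's proof, like the core of yours, only shows that \emph{one} chosen optimum of each problem transfers to the other; it does not establish that \emph{every} minimiser of $f+\sigma g$ is feasible (let alone optimal) for~\eqref{op:prop(b)}. In fact that stronger claim fails in general convex settings without uniqueness---take $D=[0,2]$, $f(\x)=-\x$, $g(\x)=\x$, $\sigma=1$, $\tau=1$: then $f+\sigma g\equiv 0$ so every point of $D$ solves~\eqref{op:prop(a)}, while only $\x=1$ solves~\eqref{op:prop(b)}. Your closing remark that the clean case is when minimisers are unique (as in the MKL application) is exactly the right diagnosis; the paper's statement is slightly looser than its proof, but the weaker conclusion (one optimum transfers each way) is all that Theorem~\ref{th:reg-obj} actually needs.
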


\begin{proof}
\hspace{0cm}\\
(a).\quad
Let be $\sigma>0$ and $\x^*$ be the optimal of \eqref{op:prop(a)}. We have to show that there exists a 
$\tau> 0$ such that $\x^*$ is optimal in \eqref{op:prop(b)}.
We set $\tau=g(\x^*)$. Suppose $\x^*$ is not optimal in \eqref{op:prop(b)}, i.e.,
it exists $\tilde{\x}\in D: g(\tilde{\x})\leq\tau$ such that 
$f(\tilde{\x})< f(\x^*)$. Then we have 
$$ f(\tilde{\x})+\sigma g(\tilde{\x}) <  f(\x^*) + \sigma \tau, $$
which by $\tau=g(\x^*)$ translates to
$$ f(\tilde{\x})+\sigma g(\tilde{\x})<  f(\x^*) + \sigma g(\x^*) . $$
This contradics the optimality of $\x^*$ in \eqref{op:prop(a)},
and hence shows that $\x^*$ is optimal in \eqref{op:prop(b)}, which was to be shown. \\
(b).\quad Vice versa, let $\tau> 0$ be $\x^*$ optimal in \eqref{op:prop(b)}. The Lagrangian of \eqref{op:prop(b)} is given by 
$$\mathcal L(\sigma) = f(\x)+\sigma \left( g(\x)-\tau\right) , \quad \sigma\geq 0 .$$
By strong duality $\x^*$ is optimal in the saddle point problem
$$ \sigma^*:=\argmax_{\sigma\geq 0} ~ \min_{\x\in D} \quad f(\x)+\sigma \left(g(\x)-\tau\right) ,$$ 
and by the strong max-min property (cf. \citep{BoyVan04}, p.~238) we may exchange the order of maximization and minimization. 
Hence $\x^*$ is optimal in 
\begin{equation}\label{eq:sattlepoint}
 \min_{\x\in D} \quad f(\x)+\sigma^* \left( g(\x)-\tau\right) .
\end{equation}
Removing the constant term $-\sigma^*\tau$, and setting $\sigma=\sigma^*$, 
we have that $\x^*$ is optimal in \eqref{op:prop(a)}, which was to be shown.
Moreover by \eqref{eq:constr_active} we have that
$$\x^*\neq \argmin_{\x\in D} f(\x),$$
and hence we see from Eq.~\eqref{eq:sattlepoint} that $\sigma^*> 0$, which completes the proof of the proposition.
\end{proof}

\bibliography{tr}

\end{document}